\def\eqref#1{equation~\ref{#1}}
\def\1{\bm{1}}
\DeclareMathAlphabet{\mathsfit}{\encodingdefault}{\sfdefault}{m}{sl}
\SetMathAlphabet{\mathsfit}{bold}{\encodingdefault}{\sfdefault}{bx}{n}
\newcommand{\x}{\mathbf{x}}
\newcommand{\bs}{s}
\newcommand{\ba}{a}
\renewcommand{\mathbf}{\boldsymbol}
\newcommand{\set}[1]{\left\{ #1 \right\}}
\def\Ddots{\mathinner{\mkern1mu\raise\p@
\vbox{\kern7\p@\hbox{.}}\mkern2mu
\raise4\p@\hbox{.}\mkern2mu\raise7\p@\hbox{.}\mkern1mu}}
\newcommand{\ol}{\overline}
\numberwithin{equation}{section}
\Crefname{problem}{Problem}{Problems}
\definecolor{Gray}{gray}{0.90}
\definecolor{LightCyan}{rgb}{0.88,1,1}
\definecolor{deepblue}{rgb}{0,0,0.5}
\definecolor{deepred}{rgb}{0.6,0,0}
\definecolor{deepgreen}{rgb}{0,0.5,0}
\newcommand\pythonstyle{\lstset{
basicstyle=\ttfamily\footnotesize,
language=Python,
morekeywords={self, clip, exp, mse_loss, uniform_sample, concatenate, logsumexp},              
keywordstyle=\color{deepblue},
emph={MyClass,__init__},          
emphstyle=\color{deepred},    
stringstyle=\color{deepgreen},
frame=single,                         
showstringspaces=false
}}
\newcommand\pythoninline[1]{{\pythonstyle\lstinline!#1!}}
\def\mathcolor#1#{\@mathcolor{#1}}
\def\@mathcolor#1#2#3{%
  \protect\leavevmode
  \begingroup
    \color#1{#2}#3%
  \endgroup
}
\Crefname{assumption}{Assumption}{Assumptions}
    \let\Cref\crtCref
    \let\cref\crtcref
\theoremstyle{plain}
\newtheorem{theorem}{Theorem}[section]
\newtheorem{proposition}[theorem]{Proposition}
\theoremstyle{definition}
\theoremstyle{remark}
\theoremstyle{definition}
\newtheorem{problem}[theorem]{Problem}
\definecolor{rliableolive}{HTML}{BBCC33}
\definecolor{rliableblue}{HTML}{77AADD}
\definecolor{rliablered}{HTML}{EE8866}
\newtcolorbox{AIbox}[2][]{aibox,title=#2,colback=rliableblue!10!white,#1}
\newenvironment{olivebox}{%
    \begin{tcolorbox}[colback=rliableolive!10!white,colframe=black,boxsep=3pt,top=4pt,bottom=4pt,left=3pt,right=3pt]
}{%
    \end{tcolorbox}
}
\newcommand{\parens}[1]{\left( #1 \right)}
\newcommand{\pfrac}[2]{\parens{\frac{#1}{#2}}}
\renewcommand{\vec}[1]{\if#1\relax\bm{#1}\else\mathbf{#1}\fi}
\newcommand{\dd}{\mathcal{D}}
\newcommand{\cc}{\mathcal{C}}
\newcommand{\ff}{\mathcal{F}}
\author[1,*]{Preston Fu}
\author[1,*]{Oleh Rybkin}
\author[1]{Zhiyuan Zhou}
\author[1,2]{Michal Nauman}
\author[1]{Pieter Abbeel}
\author[1]{Sergey Levine}
\author[3]{Aviral Kumar}
\affil[*]{Equal contribution}
\affil[1]{University of California, Berkeley}
\affil[2]{University of Warsaw}
\affil[3]{Carnegie Mellon University}
\title{Compute-Optimal Scaling for Value-Based Deep RL}
\begin{abstract}
\textbf{Abstract:} As models grow larger and training them becomes expensive, it becomes increasingly important to scale training recipes not just to larger models and more data, but to do so in a \emph{compute-optimal} manner that extracts maximal performance per unit of compute. While such scaling has been well studied for language modeling, reinforcement learning (RL) has received less attention in this regard. In this paper, we investigate compute scaling for online, value-based deep RL. These methods present two primary axes for compute allocation: model capacity and the update-to-data (UTD) ratio. Given a fixed compute budget, we ask: how should resources be partitioned across these axes to maximize sample efficiency? Our analysis reveals a nuanced interplay between model size, batch size, and UTD. In particular, we identify a phenomenon we call \emph{TD-overfitting}: increasing the batch quickly harms Q-function accuracy for small models, but this effect is absent in large models, enabling effective use of large batch size at scale. We provide a mental model for understanding this phenomenon and build  guidelines for choosing batch size and UTD to optimize compute usage. Our findings provide a grounded starting point for compute-optimal scaling in deep RL, mirroring studies in supervised learning but adapted to TD learning.
\end{abstract}
\begin{document}

\maketitle

\abscontent

\vspace{-0.4cm}
\section{Introduction}
\vspace{-0.2cm}

Scaling compute plays a crucial role in the success of modern machine learning (ML). In natural language and computer vision, compute scaling takes a number of different forms: scaling up model size~\citep{kaplan2020scaling}, scaling up the number of experts in a mixture-of-experts model~\citep{krajewski2024scaling}, or even scaling up test-time compute~\citep{snell2024scaling}. Since these approaches exhibit different opportunities and tradeoffs, a natural line of study has been to identify strategies for \emph{``compute-optimal''} scaling~\citep{kaplan2020scaling}, that prescribe how to allocate a given fixed amount of compute among different strategies for attaining the best downstream performance metrics. 

In this paper, we are interested in understanding tradeoffs between different ways to scale compute for value-based deep reinforcement learning (RL) methods based on temporal-difference (TD) learning to realize a similar promise of transforming more compute to better sample efficiency. Value-based TD-learning methods typically provide two mechanisms to scale compute: first, the straightforward approach of scaling the capacity of the network representing the Q-function, and second, increasing the number of updates made per data point (i.e., the updates-to-data, UTD ratio) collected by acting in the environment. Scaling along these two sources present different benefits, challenges, and desiderata~\citep{nauman2024bigger}. Therefore, in this paper, we ask: \textbf{\emph{What is the best strategy to scale model size and UTD to translate a given fixed budget on compute into maximal performance?}} 

Analogous to prior scaling studies in language models~\citep{kaplan2020scaling} and deep RL~\citep{rybkin2025valuebaseddeeprlscales}, addressing this question requires us to understand how scaling compute in different ways affects the behavior of the underlying TD-learning algorithm. Concretely, we will need a mental model of how scaling model size interacts with various other hyperparameters of the TD-learning algorithm, notably the UTD ratio. However, most prior work focuses on presenting a single performant set of hyperparameters, instead of providing an analysis to help obtain such a set~\citep{kumar2021dr3,nauman2024bigger}. Therefore, we start with a number of controlled analysis experiments.

Our analysis reveals several insights into the distinct, and perhaps even opposing, behavior of TD-learning when using small versus large model sizes. In contrast to supervised learning, where the largest useful batch size primarily depends on gradient noise and is otherwise independent of model size~\citep{mccandlish2018empirical}, we find that in TD-learning, smaller models perform best with small batch sizes, while larger models benefit from larger batch sizes. At the same time, corroborating prior work~\citep{rybkin2025valuebaseddeeprlscales}, we find that for any fixed model size, increasing the UTD ratio $\sigma$ reduces the maximally admissible batch size. To convert these observations into actionable guidance, we develop a mechanistic understanding of the interplay between batch size, model capacity, and UTD ratio, discussed in \cref{sec:analysis}. 

We observe that for any fixed UTD ratio, increasing batch size reduces training TD-error across all model sizes. However, generalization as measured by the validation TD-error, measured on a held-out set of transitions, is highly dependent on the model size. For small models, attempting to reduce the training TD-error by using large batch sizes leads to worse validation TD-error -- a phenomenon we term \emph{TD-overfitting}. In contrast, reducing training TD-error often leads to a lower validation TD-error and improved generalization for large models, unless batch sizes become excessively large. We trace the source of TD-overfitting to poor-quality TD-targets, that are more likely to be produced by smaller networks: updating to fit these targets, even within the scope of imperfect and non-stationary TD updates, can harm generalization on unseen state-action pairs. This pathology persists even when training a substantially larger \emph{passive} critic~\citep{ostrovski2021difficulty} solely to regress to those targets with supervised learning, without being involved in the TD-learning process. Empirically, we find that for each model size, there exists a maximal admissible batch size: further increasing the batch size to reduce the variance in the TD gradient amplifies overfitting.  Equipped with this finding and the observation that high UTDs reduce the maximal admissible batch size, we prescribe a rule to identify optimal batch sizes for scaling up RL training under large compute budgets. 

We then attempt to identify the best way to allocate compute between model size and the UTD ratio, given an upper bound on either compute or on a combination of compute and data budget. We obtain scaling rules that extrapolate to new budgets/compute for practitioners. Our contributions are:
\begin{itemize}[leftmargin=11pt, itemsep=1pt, topsep=0pt]
    \item We analyze the behavior of TD-learning with larger models and observe that larger models mitigate a phenomenon we call TD-overfitting, where value generalization suffers due to poor TD-targets. 
    \item Based on this analysis, we establish an empirical model of batch size scaling given jointly a UTD ratio and a model size, and observe that larger models allow scaling batch sizes higher.
    \item Equipped with this, we further provide an empirical model of jointly scaling UTD ratio and model size, and the laws for the optimal tradeoff between scaling them.
    

\end{itemize}




\vspace{-0.2cm}
\section{RL Preliminaries and Notation}
\vspace{-0.2cm}
In this paper, we study off-policy online RL, where the goal is to maximize an agent's return by training on a replay buffer and periodically collecting new data \citep{sutton2018reinforcement}.
Value-based deep RL methods train a Q-network, $Q_\theta$ by minimizing the temporal difference~(TD) error:
{
\begin{align}
    \!\!\!L(\theta) = \mathbb{E}_{(\bs, \ba, \bs') \sim \mathcal{P}, \ba' \sim \pi(\cdot|\bs')}\left[ \left(r(\bs, \ba) + \gamma \bar{Q}(\bs', \ba') - Q_\theta(\bs, \ba) \right)^2\right], \label{eq:td-err}
\end{align}
}where $\mathcal{P}$ is the replay buffer, $\bar{Q}$ is the target Q-network, $\bs$ denotes a state, and $\ba'$ is an action drawn from a policy $\pi(\cdot | \bs)$ that aims to maximize $Q_\theta(\bs, \ba)$.
The ratio of the number of gradient steps per unit amount of data is called the \emph{UTD ratio} (i.e., the updates-to-data ratio) and we will denote it as $\sigma$.

\vspace{-0.2cm}
\section{A Formal Definition of Compute-Optimal Scaling}
\vspace{-0.2cm}
Our goal in this paper is to develop a prescription for allocating a fixed compute budget or a fixed compute and data budget, between scaling the model size and the update-to-data (UTD) ratio for a value-based RL algorithm. As mentioned earlier, scaling model size and increasing the UTD ratio involve different trade-offs in terms of computational cost and practical feasibility. For example, scaling the  UTD ratio results in more ``sequential'' computation for training the value function, which in turn implies a higher wall-clock time but does not substantially increase GPU memory needed. On the other hand, increasing model size largely results in more parallel computation (unless the model architecture itself requires sequential computation, a case that we do not study in this work). Answering how to best partition compute between the UTD ratio and model size enables us to also build an understanding of sequential vs parallel compute for training value functions. In this section, we formalize this resource allocation problem, building upon the framework of \citet{rybkin2025valuebaseddeeprlscales}.

To introduce this resource allocation problem, we need a relationship between the compute $\mathcal{C}_J$ and the total data $\mathcal{D}_J$ needed to attain a given target return value $J$, the model size $N$, and the UTD ratio $\sigma$. Formally, we can represent the total compute in FLOPs as follows \citep{rybkin2025valuebaseddeeprlscales}:
\begin{align}
\label{eq:compute_definition}
    \mathcal{C}_J(\sigma, N) \propto \sigma \cdot N \cdot \mathcal{D}_J(\sigma, N),
\end{align}
where $\mathcal{D}_J(\sigma, N)$ denotes the total amount of samples needed to attain performance $J$, and $\mathcal{C}_J(\sigma, N)$ denotes the corresponding compute. Since batch size is typically parallelizable, does not significantly affect wall-clock time, and is typically much smaller than the replay buffer,
we drop the dependency of compute on the batch size and aim to optimize compute per unit datapoint. Finally, we denote the performance of a value-based RL algorithm $\mathrm{Alg}$ as $J(\pi_\mathrm{Alg})$. With these definitions, we formalize:
   \begin{olivebox}
    \begin{problem}[Compute allocation problem]
    \label{prob:compute_allocation}
    Find the best configuration for the UTD ratio $\sigma$ and the model size $N$, such that algorithm $\mathrm{Alg}$ attains:
    {
    \setlength{\abovedisplayskip}{4pt}
    \setlength{\belowdisplayskip}{4pt}
    \begin{enumerate}
        \item \emph{Maximal compute efficiency in attaining performance $J_0$ given data budget $\mathcal D_0$}:
    \begin{align*}
    (\sigma^*, N^*) := \arg \min_{(\sigma, N)}~~~ \cc ~~\text{s.t.}~~ J\left(\pi_\mathrm{Alg}{(\sigma, N)} \right) \geq J_0,~~ \dd \leq \dd_0
    \end{align*}
    \item \emph{Maximal performance given budget $\mathcal F_0$ and  coefficient $\delta$ for trading off compute/data}:
    \begin{align*}
        (\sigma^*, N^*) := \arg \max_{(\sigma, N)}~~~ J\left(\pi_\mathrm{Alg}{(\sigma, N)}\right) ~~\text{s.t.}~ ~~  \cc + \delta \dd  \leq \mathcal{F}_0.
    \end{align*}
    \end{enumerate}
    }
    \end{problem}
    \end{olivebox}


\textbf{The first part }of \cref{prob:compute_allocation} seeks to allocate a compute budget $\cc_0$ between $N$ and $\sigma$ to minimize compute required to reach return $J_0$. From a practitioner's perspective, the solution to this part should prescribe how to attain a target level of performance as efficiently as possible given a certain amount of GPU resources available for training.

\textbf{The second part} aims to construct a law that extrapolates to higher compute budgets and higher return. Instead of extrapolating as a function of return, which can be arbitrary and not predictable, we follow \citet{rybkin2025valuebaseddeeprlscales} and extrapolate as a function of a \textit{budget} $\mathcal{F} = \cc + \delta \cdot \dd$. This allows the practitioners to achieve optimal return given the budget of resources available, where $\delta$ denotes the cost of data relative to compute, expressed e.g. in wall-clock time.


\vspace{-0.2cm}
\section{Experimental Setup} 
\vspace{-0.2cm}

We use BRO~\citep{nauman2024bigger} and SimbaV2~\citep{lee2025hyperspherical}, approaches based on SAC~\citep{haarnoja2018soft} that use a regularized residual network to represent the Q-values and have been shown to scale well to high capacities. These prior works showed that scaling width gives better performance that scaling depth in TD-learning. Thus, to study the impact of model size, we vary only the network width in \{256, 512, 1024, 2048, 4096\}. We consider batch sizes from 4 to 4096 (varied in powers of 2 and UTD ratios of 1, 2, 4, 8. We keep other hyperparameters fixed across all tasks at values suggested by \citet{nauman2024bigger}. For our initial study, we leverage the results from~\citep{nauman2024bigger} on Deepmind Control suite \citep{tassa2018deepmind}. Following prior work~\citep{hansen2023tdmpc2, nauman2024bigger}, we separate these into 7 medium difficulty tasks (referred to as DMC-medium) and 6 hard difficulty tasks (DMC-hard). For these tasks, we fit averages of the tasks for the two suites respectively, building upon the protocol prescribed in \citet{rybkin2025valuebaseddeeprlscales}, to show generalization of our fits across tasks. To further evaluate scaling on harder tasks, we run 4 tasks from DMC and HumanoidBench \citep{sferrazza2024humanoidbench}, where we make fits for each task individually to show applicability to single tasks. Further details are in \Cref{app:details}.


\vspace{-0.2cm}
\section{Analyzing the Interplay Between Model Size and Batch Size}\label{sec:analysis}
\vspace{-0.2cm}

\citet{rybkin2025valuebaseddeeprlscales} argues that the best
batch size for a given run is a predictable function of the UTD ratio $\sigma$, and often this relationship follows a power law. In simpler terms, this means that the larger the UTD ratio, the smaller the batch size should be for best performance. 

\begin{wrapfigure}[23]{r}{0.5\textwidth}    \centering
    \vspace{-0.5cm}

    \includegraphics[width=\linewidth]{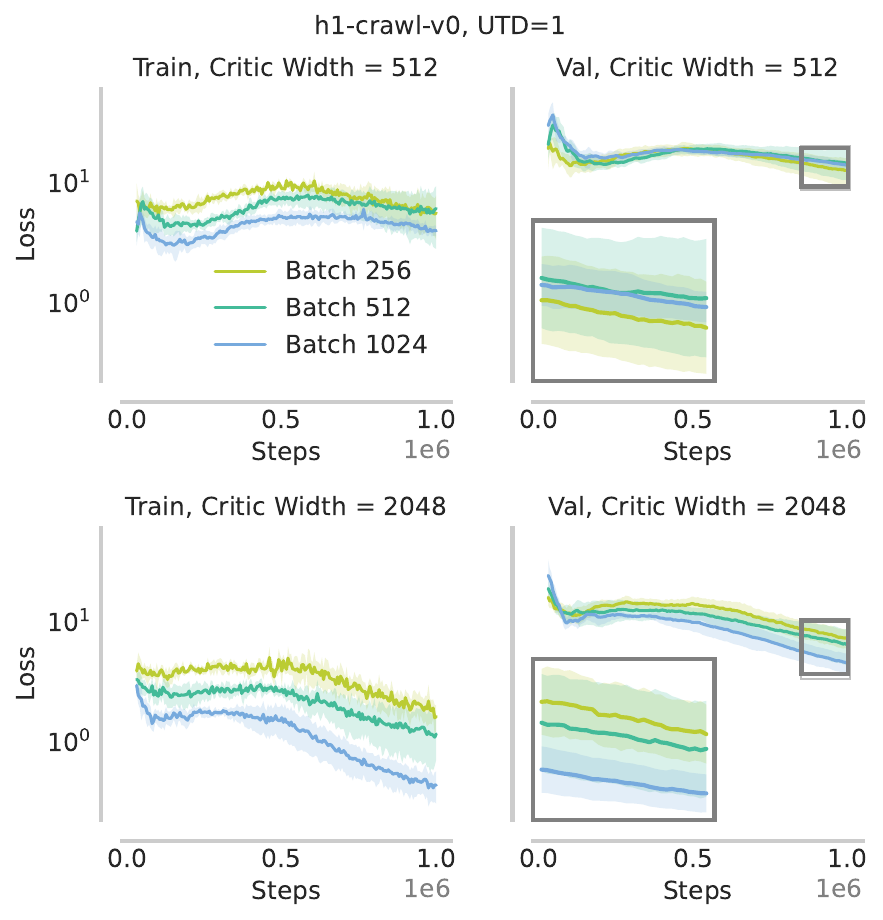}
    \caption{\footnotesize{\textbf{Measuring train and validation TD-errors for different batch sizes} on \texttt{h1-crawl}. While the training and validation TD-errors reduce as model size increases, for smaller models a larger batch size results in a higher final TD-error. This illustrates the role of batch size in modulating overfitting with TD-learning.}}
    \label{fig:loss_curves_over_time_batch}
    \vspace{-0.2cm}
\end{wrapfigure}

However, this prior analysis holds model size $N$ constant and does not consider its influence on batch size. We extend prior analysis~\citep{rybkin2025valuebaseddeeprlscales} by considering how model size modulates the effective batch size under fixed UTD ratio, revealing a distinct form of overfitting unique to TD-learning.

\vspace{-0.3cm}
\subsection{{Measuring Overfitting in TD-Learning}}
\vspace{-0.15cm}

Following \citet{rybkin2025valuebaseddeeprlscales}, which identifies overfitting as a key factor in selecting effective batch sizes (but for a fixed model size), we begin our analysis by understanding how overfitting depends on model size. Unlike supervised learning, where the target is fixed, TD-learning involves fitting to targets that evolve over time and depend on the network being trained. This makes overfitting in TD-learning fundamentally different. Therefore, as a measure of generalization, we measure the TD-error on both the training data (i.e., transitions sampled from the replay buffer) and a held-out validation set of transitions drawn i.i.d. from the same distribution. Further details are provided in \Cref{app:details}.

\begin{figure}
    \centering
    \includegraphics[width=\linewidth]{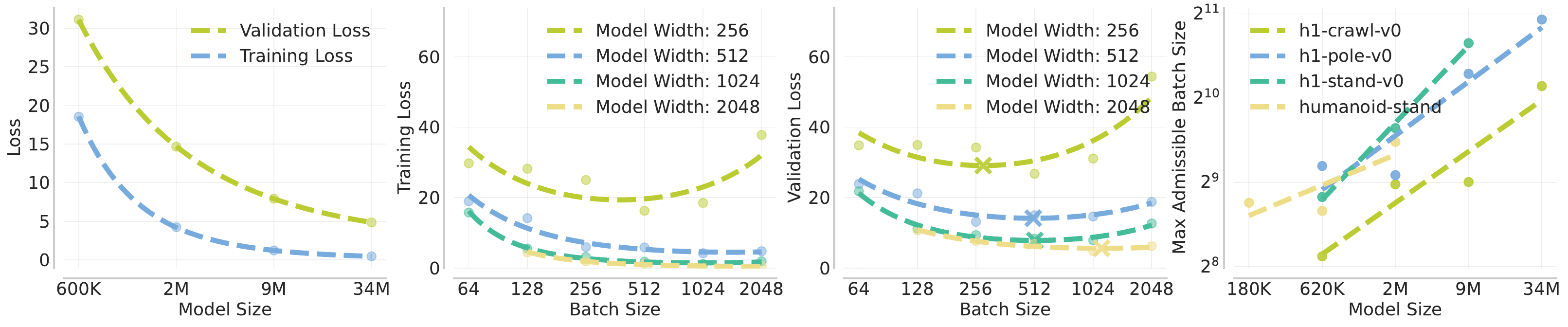}
    \caption{\footnotesize{\textbf{Effect of batch size on TD-error for \texttt{h1-crawl} with $\sigma = 1$.} Left to right: 
\textbf{(a)} increasing model size consistently lowers the best achieved validation TD-error for a fixed batch size; \textbf{(b)} Larger batch sizes reduce training TD-error. 
\textbf{(c)} However, beyond a certain threshold, larger batch sizes lead to increased validation TD-error, particularly for smaller models, indicating TD-overfitting. \textbf{(d)} This overfitting threshold increases with model size: larger models can enable higher batch sizes, suggesting increased robustness to overfitting.
}}
    \label{fig:train-validation-loss}
    \vspace{-1em}
\end{figure}

\emph{\textbf{Observations on model size.}} We report training and validation TD-errors on \texttt{h1-crawl} at the end of training in Figure~\ref{fig:train-validation-loss}(a) (see \Cref{app:full-td-err} for complete loss curves).
We observe several notable trends. First, as model size increases, the final-step training TD-error decreases, consistent with increased model capacity. Interestingly, we find that increasing model capacity consistently leads to a lower validation TD-error, and there is no clear sign of classical overfitting as seen in supervised learning (i.e., low training error but high validation error). This can be attributed to the fact that TD-learning, in its practical implementation, rarely ``fully'' fits the training target values, regardless of model size. This is because these targets are themselves generated by a moving network and change continuously throughout training. Moreover, these targets are queried on unseen action samples that have not been seen in the dataset, further blurring the boundaries between training and validation error.

\emph{\textbf{Observations on batch size.}} We next study the role of batch size in Figure~\ref{fig:loss_curves_over_time_batch} (when varying batch sizes for a fixed model size) and Figure~\ref{fig:train-validation-loss}(b, c). Perhaps as expected, larger batch sizes generally reduce training TD-error, likely because they provide a better low-variance estimate of the gradient. However, their impact on validation TD-error is more nuanced and depends on the model size $N$. For smaller networks (widths $\set{\text{256, 512}}$), increasing the batch size often plateaus or increases the validation TD-error, corroborating  prior work~\citep{rybkin2025valuebaseddeeprlscales}, which identified larger batch sizes as a source of overfitting when operating at networks with width 512. However, larger models allow us to use larger batch sizes without overfitting (Figure~\ref{fig:train-validation-loss}(d)). In the following subsections, we study why this is the case. 

\vspace{-0.3cm}
\subsection{A Mental Model for TD-Overfitting} 
\vspace{-0.15cm}

To explain these observations, we now attempt to build a ``mental picture'' for the observed overfitting on the validation TD-error above.
In supervised learning, overfitting occurs when reducing training loss further would primarily fit to noise or spurious correlations on the training \emph{dataset}, in a way that results in a higher loss on a validation dataset even if it is sampled from the same distribution as training. Even though smaller networks overfit (Figure~\ref{fig:train-validation-loss}(c)), our experiments are not in this regime since larger networks are able to attain \textit{both} lower training TD-error and lower validation error (Figures~\ref{fig:train-validation-loss}(b, c)). 

We argue that this apparent deviation from classical overfitting is explained by the use of target networks in TD-learning. No matter whether a given network has sufficient capacity to fit TD targets on its current batch (and reduce TD-error) or not, TD-learning methods would subsequently update the target network and result in shifting the targets themselves. This can lead to an increase in TD-error at the next step, especially on data from a held-out validation set. Any further update of TD-learning is then driven by fitting to the updated targets, which may not necessarily reduce TD-error. This means that even when attempting to fit target values computed using a stale copy of the main network, validation error may not reduce: \textbf{(i)} on other state-action pairs, and \textbf{(ii)} with respect to updated target values.

\begin{wrapfigure}{r}{0.62\textwidth}
    \centering
    \vspace{-0.5cm}
    \includegraphics[width=\linewidth]{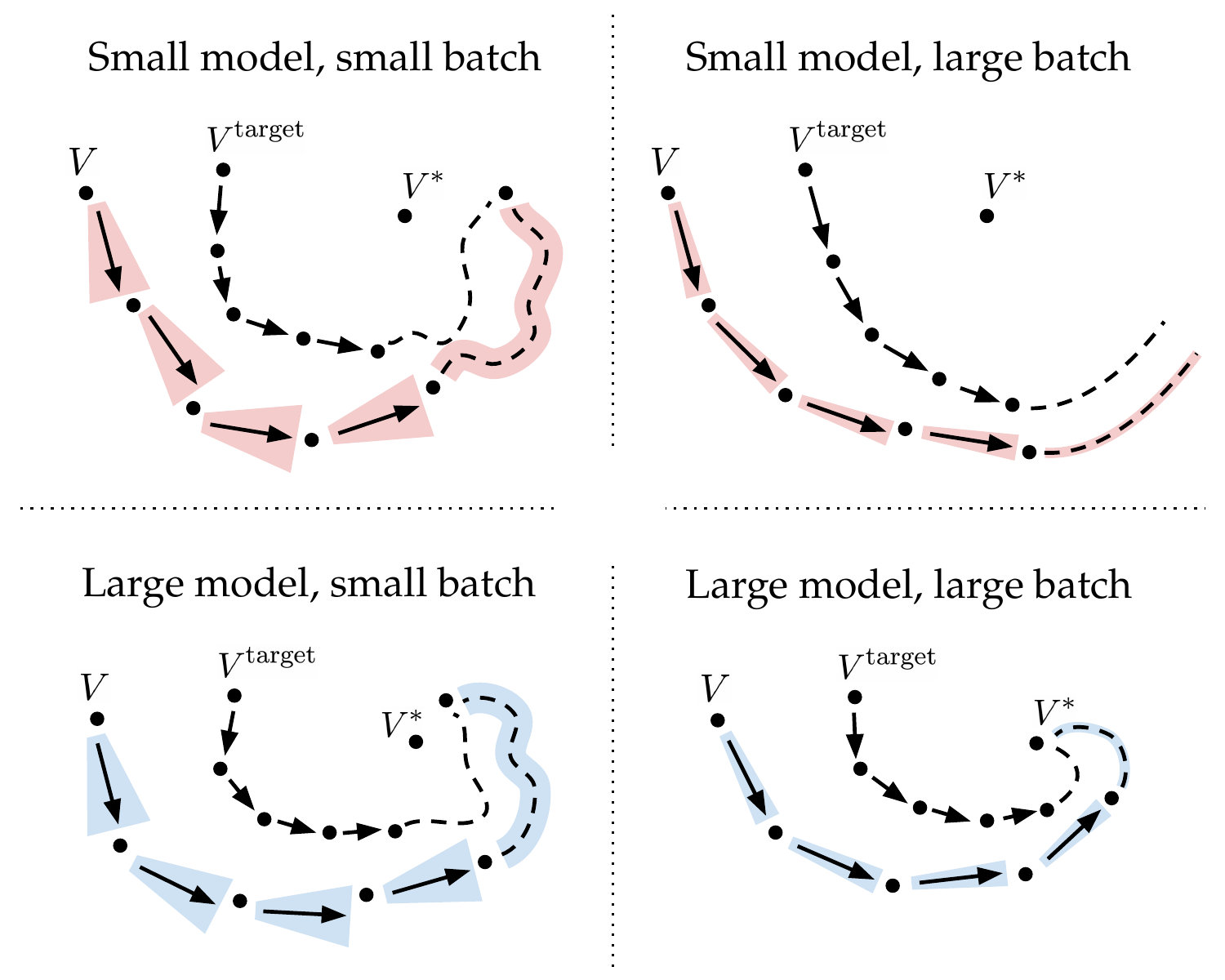}
    \vspace{0.1cm}
    \caption{\footnotesize{\textbf{A conceptual view of TD-overfitting.} Small models cannot cope with large batch sizes due to more directed gradient updates onto low-quality TD-targets, and might diverge from the target optimal value function $V^*$. Instead, they might perform better with smaller batch sizes, which result in noisy updates. Large models produce TD targets that are high-quality and benefit from regressing to these targets better via larger batch sizes.}}
    \vspace{-1em}
    \label{fig:td-overfitting-concept}
\end{wrapfigure}

To conceptually understand this phenomenon further, consider a low-capacity Q-network. Due to limited representational capacity, such a model will inevitably entangle the features used to predict Q-values across different state-action pairs~\citep{kumar2021dr3,kumar2021implicit}. Reducing TD-error on the training data and updating the target network can now change target values on unseen transitions, potentially increasing validation TD-error with every TD update. Moreover, the more accurate and directed the gradient update on the training error, the worse the validation TD-error can get (Figure~\ref{fig:train-validation-loss}(b, c); full curves in \Cref{fig:td_err_fullcurve}). Larger batch sizes reduce variance produce directed gradients that exacerbate this problem, as fitting the targets for some transitions comes at the expense of others when the model capacity is low. 

In contrast, a larger-capacity model can more effectively decouple its predictions across transitions, mitigating this issue and leading to improved generalization even at high batch sizes. This suggests a key observation: \emph{avoiding overfitting in TD-learning requires either smaller batch sizes or higher model capacity.} We present this insight as an illustration in Figure~\ref{fig:td-overfitting-concept}. We note that high capacity model generally leads to lower training and validation TD-errors (Figure~\ref{fig:train-validation-loss}(b, c)). We term this phenomenon \textbf{TD-overfitting}, to emphasize that it is driven not by memorization of values on the training set but by the interplay between limited capacity and non-stationary targets, that exist uniquely in TD-learning.

\begin{AIbox}{Takeaway 1: Smaller models cannot utilize large batch sizes}
The training TD-error decreases with higher batch size. With low model capacity, increasing batch size results in a higher validation TD-error, i.e.\ the maximum admissible batch size is small. Larger models enable the use of a larger admissible batch size.
This can be attributed to poor generalization of the TD-targets from smaller models.
\end{AIbox}

\vspace{-0.2cm}
\subsection{{The Role of Dynamic Programming in TD-Overfitting}} 
\vspace{-0.2cm}

We now  conduct experiments to verify that indeed TD-targets are the root cause of the TD-overfitting phenomenon discussed above. To this end, we conduct a diagnostic experiment inspired by the setup in \citet{ostrovski2021difficulty}. We train a \emph{passive} critic alongside the main Q-network. This passive critic is not involved in TD updates but is trained solely to fit the TD-targets generated by the main Q-network (experimental details in \Cref{app:details}). By ablating the network width used to parameterize the passive critic, we can decouple the contributions of \textbf{(i)} the main Q-network’s capacity and \textbf{(ii)} the quality of TD-targets, which we hypothesize drives TD-overfitting.

\begin{wrapfigure}{r}{0.6\textwidth}
    \centering
    \vspace{-0.3cm}
    \includegraphics[width=\linewidth]{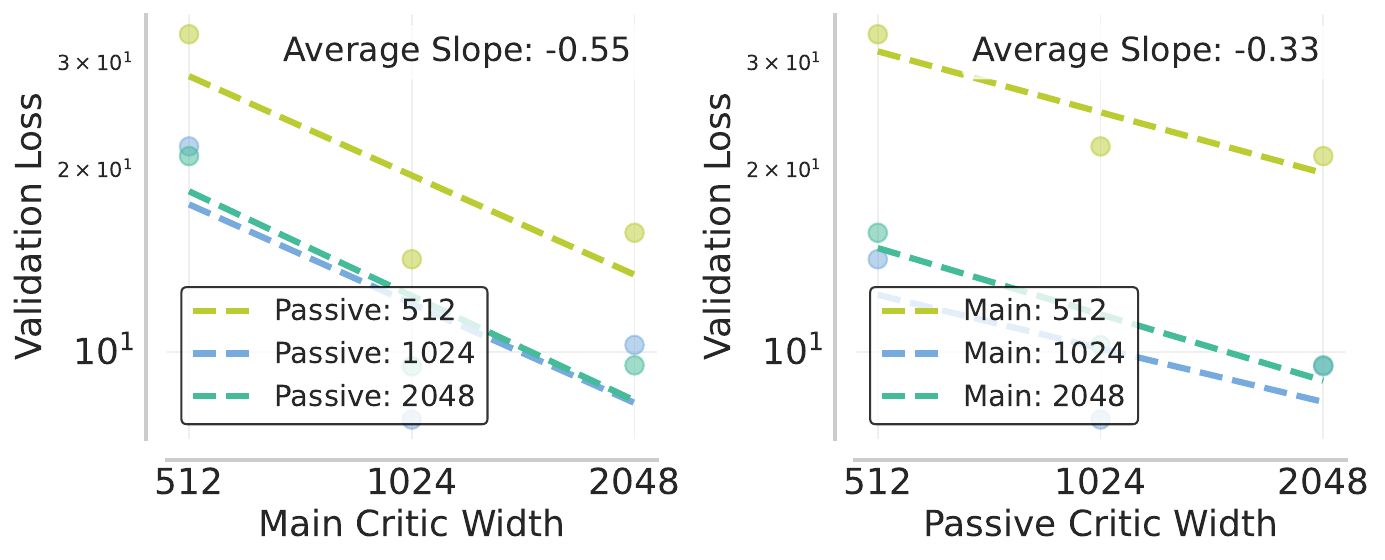}
    \caption{\footnotesize{\textbf{Validation TD-error w/ the passive critic}. Increasing the model size for both the main and passive critic can reduce the validation TD-error, increasing the main critic size is much more effective, showing that target quality is crucial for effective learning.}}
    \vspace{-1em}
    \label{fig:side-critic}
\end{wrapfigure}

In Figure~\ref{fig:side-critic}, we observe that when the TD-targets are generated by a low-capacity Q-network, the passive critic --- regardless of its own capacity --- exhibits higher validation TD-error. \emph{This supports the main insight from our mental model:} the TD overfitting phenomenon is driven primarily by the poor generalization of TD-targets produced by low-capacity networks. While increasing the passive critic's width improves its ability to fit lower quality targets (e.g., a width = 2048 passive critic fits width = 512 targets slightly better than a smaller passive critic), the validation TD-error can increase or slowly decrease over the course of training (see \texttt{h1-crawl} in \Cref{fig:side_critic_fullcurve_groupmain}). Conversely, when the main Q-network generating the targets is larger (e.g., width = 2048), even smaller passive critics (e.g., width = 512 or 1024) can match resulting TD-targets quite well, and validation error decreases over the course of training (see \texttt{h1-crawl} in \Cref{fig:side_critic_fullcurve_groupmain}). This indicates that a large portion of the overfitting dynamics of TD-learning is governed by the target values, and how they evolve during training, irrespective of the main critic (though the targets themselves depend on the main critic).

We also observe that if the passive critic is smaller than the target-network, it may underfit in its ability to fit the TD-targets on the training data, leading to elevated training and validation TD-errors. However, this underfitting effect is much less severe than the overfitting observed when the TD-targets themselves are poor due to the limited capacity of the Q-network, as seen from the slope in \cref{fig:side-critic}.

\begin{AIbox}{Takeaway 2: Overfitting in TD-learning is governed by TD-targets}
Overfitting in TD-learning is less about fitting the TD-targets on limited data, but more about the quality of the TD-targets themselves --- a direct consequence of model capacity and the fundamental nature of dynamic programming with deep neural networks.
\end{AIbox}

\vspace{-0.2cm}
\section{{Prescribing Batch Sizes Using Model Size and the UTD ratio}} 
\label{sec:batchsize}
\vspace{-0.2cm}
Using the insights developed above, we now construct a prescriptive rule to select effective batch sizes, which in turn allows us to estimate the tradeoffs between UTD ratio and compute in the next section for addressing \cref{prob:compute_allocation}. Specifically, we aim to identify the largest batch size, denoted $\tilde{B}$, that can be used before the onset of TD overfitting, i.e., before validation TD-error begins to increase for a given model size. From our insights in the previous section, we see that the largest such value of $\tilde{B}$ increases as model size $N$ increases. We also note in Figure~\ref{fig:crawl-batch-size-fit}, that $\tilde{B}$ decreases with increasing UTD ratio $\sigma$, aligned with the findings of \citet{rybkin2025valuebaseddeeprlscales}: for a fixed model size, larger $\sigma$ values lead to TD-overfitting when batch size increases. Motivated by these empirical findings, we propose a rule that models $\tilde{B}$ asymmetrically between model size $N$, and UTD ratio $\sigma$ as follows:
\begin{align}
    \tilde{B}(\sigma, N) \approx \frac {a_B}{\sigma^{\alpha_B} + b_B \cdot \sigma^{\alpha_B} \cdot N^{-\beta_B}} \label{eq:batch-size-fit}
\end{align}
where $a_B, b_B, \alpha_B, \beta_B > 0$ take on values listed in \Cref{app:batch-size-fits}.

\begin{figure}
    \centering
    \includegraphics[width=0.99\linewidth]{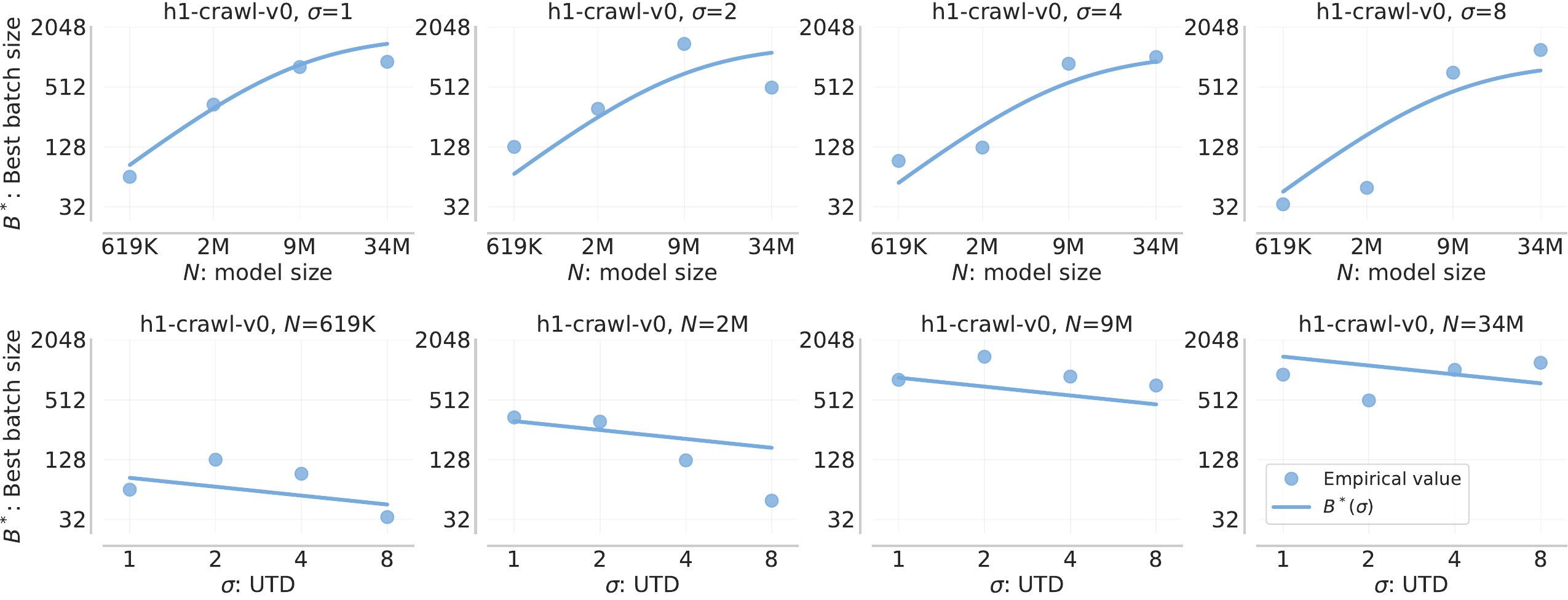}
    \vspace{0.1cm}
    \caption{\footnotesize{\textbf{A two-dimensional batch size fit $\tilde{B}(\sigma, N)$}. Slices of the fit are shown at particular values of $\sigma$ and $N$. In line with our analysis of TD-overfitting, we observe that larger models allow larger batch sizes. We build a fit that captures the intuition, but this effect does not continue indefinitely but instead asymptotes. We further extend prior work that observed that batch size needs to be decreased with UTD \citep{rybkin2025valuebaseddeeprlscales} and incorporate that in our 2-dimensional fit. Leveraging batch sizes from this fit allows us to better answer compute allocation questions.}}
    \label{fig:crawl-batch-size-fit}
    \vspace{-1.0em}
\end{figure}

\begin{AIbox}{Scaling Observation 1: Batch size selection}
    For best performance, batch size should increase with model size and decrease with the UTD ratio. This dependency can be modeled by a predictable function in \cref{eq:batch-size-fit}.
\end{AIbox}

\textbf{Implications. } There are several key properties of this relationship worth highlighting. As $\sigma$ increases, the $\sigma^{\alpha_B}$ term in the denominator dominates, yielding the approximation $\tilde{B}(\sigma, N) \approx a_B / \sigma^{\alpha_B}$, consistent with a power law relation between batch size and the UTD ratio from prior work~\citep{rybkin2025valuebaseddeeprlscales}.  Conversely, as $\sigma \to 0^+$, i.e., when targets are nearly static, $\tilde{B} \to \infty$ since updates are infrequent and TD-overfitting does not occur. Finally, our functional form increases with $N$, with an upward asymptote at $B^* \to a_B / \sigma^{\alpha_B}$ when $N \to \infty$. This reflects the intuition that low-capacity networks require smaller  batch sizes to avoid TD-overfitting, whereas for sufficiently large models, the primary constraint on batch size arises from the UTD ratio $\sigma$, since the maximal admissible batch size is a lot larger.

Crucially note that our proposed functional form factorizes into a function of $\sigma$ and a function of $N$,
\begin{equation} \frac {a_B}{\sigma^{\alpha_B}} \cdot \frac 1{1 + b_B \cdot N^{-\beta_B}} \end{equation}
instead of fitting batch size to the UTD ratio and model size integrated in a complex fashion. To evaluate the feasibility of such a relationship in our analysis, we had to run a 3-dimensional grid search $B \times N \times \sigma$. However, the fact that this fit is effective subsequently allows practitioners to instead run two 2-dimensional grid searches on $B \times \sigma$ and $B \times N$, significantly reducing the amount of compute needed to estimate this fit.

\textbf{Evaluation.} In order to evaluate this fit, we compare to a simple log-linear fit $\log \tilde B \sim (1, \log \sigma, \log N)$. 
Averaged over 4 tasks, our rule achieves a relative error of 48.9\% compared to the log-linear fit's 55.1\% (see also \cref{app:experiments}). We observe that the relative error is high because a range of batch sizes can work well, with batch size having around 50\% relative uncertainty. Our fit in \Cref{eq:batch-size-fit} captures the upward asymptote behavior with respect to model size, whereas the linear fit will increase indefinitely. 

\begin{wraptable}{r}{0.47\textwidth}
    \centering
    \vspace{-1.2em}
    \caption{\footnotesize{Batch size sensitivity over grid search. Batch sizes far away from the predicted batch size perform poorly.}}
    \begin{tabular}{cc}
    \toprule
    Batch size range & Data efficiency ratio \\
    \midrule
    $[1/16\, B^*, 1/8\, B^*]$ & 1.52 \\
    $[1/8\, B^*, 1/4\, B^*]$ & 1.38 \\
    $[1/4\, B^*, 1/2\, B^*]$ & 1.26 \\
    $[1/2\, B^*, 2/3\, B^*]$ & 1.22 \\
    $B^*$ & 1.00 \\
    $[1.5\, B^*, 2\, B^*]$ & 1.16 \\
    $[2\, B^*, 4\, B^*]$ & 1.18 \\
    $[4\, B^*, 8\, B^*]$ & 1.19 \\
    $[8\, B^*, 16\, B^*]$ & 1.30 \\
    \bottomrule
    \end{tabular}
    \label{tab:batch_size_sensitivity}
\end{wraptable}

\textbf{Sensitivity analysis.} One might wonder if a relative error of 48.9\% is actually large for a batch size fit. We empirically observe that the error is large because, in many cases, there is a wide range of batch sizes that all attain good performance. In \Cref{tab:batch_size_sensitivity}, we group runs by UTD and model size, and bin runs based on batch sizes. Then, we consider the data efficiency ratio between the runs appearing in bins with suboptimal batch sizes and runs with the predicted batch size, and average over UTDs and model sizes. We find that batch sizes within a interval around the best batch size $B^*$ perform reasonably, and performance degrades significantly with larger intervals. Indeed, per this analysis, \emph{one cannot na\"ively reuse the same batch size for small and large models}: in \Cref{fig:crawl-batch-size-fit}, we see a $\approx 40\times$ range in bootstrap-optimal batch sizes across different model sizes at UTD 8. However, the sensitivity of performance to the precise value of batch size is relatively low, which is good news for practitioners and which is why we observe a high relative error in the fit, which turns out to be benign.

\vspace{-0.25cm}
\section{Partitioning Compute Optimally Between Model Size and UTD}
\label{sec:data_fit}
\vspace{-0.25cm}
Equipped with the relationship between batch size, model size, the UTD ratio from \cref{eq:batch-size-fit}, and the definition of compute in \cref{eq:compute_definition}, we now answer the resource allocation questions from \cref{prob:compute_allocation}.\footnote{We also attempted to build a fit for learning rate in our preliminary experiments, but found learning rate to generally not be as critical as the batch size for compute-optimal scaling. Please see Appendix \Cref{app:lr-sensitivity} for more details.}

\vspace{-0.2cm}
\subsection{Solving \cref{prob:compute_allocation}, Part 1: Maximal Data Efficiency under Compute $\mathcal{C}_0$}\label{sec:empirical-max-data-efficiency}
\vspace{-0.2cm}
To solve this problem, we note that to maximize the data-efficiency we should operate in a regime where the total compute, $\mathcal{C} := k \cdot \sigma \cdot N \cdot \mathcal{D}_J(\sigma, N) \leq \mathcal{C}_0$, where $\mathcal{C}_0$ and $k$ are constants not dependent on $\sigma$ and $N$. We then require a functional form for $\mathcal{D}_J$. We observe that extending the relationship from \citet{rybkin2025valuebaseddeeprlscales}, which modeled data efficiency as an inverse power law of the UTD ratio for a fixed, given model size, can also work well in our scenario when the model size is variable. Inspired by prior work on language models \citep{kaplan2020scaling,shukor2025scaling}, we augment the fit from \citet{rybkin2025valuebaseddeeprlscales} with an additive power law term dependent on model size. Intuitively, this is sensible since the total amount of data needed to attain a given performance should depend \emph{inversely} on the model size since bigger models attain better sample-efficiency in TD-learning~\citep{nauman2024bigger,lee2024simba}. Moreover, model size and the UTD ratio present two avenues for improving sample efficiency, hence the additive nature of the proposed relationship. We find that the relationship can be captured as:
\begin{align}
    \label{eq:data_efficiency_relation}
    \mathcal{D}_J(\sigma, N) \approx \dd_J^{\min} + \pfrac{a_J}{\sigma}^{\alpha_J} + \pfrac{b_J}{N}^{\beta_J},
\end{align}
where $\dd_J^{\min}$ is a constant not dependent on $\sigma$ and $N$, and $a_J$, $\alpha_J$, $b_J$, $\beta_J$ are constants that depend on the return target $J$. With this relationship in place, we are now able to answer Part 1:
\begin{align}
    \label{eq:compute_solution}
    \sigma^*(\dd_0) = \pfrac{a_\sigma}{\dd_0 - \dd^{\min}}^{\alpha_\sigma}, \quad N^* (\dd_0) = \pfrac{b_N}{\dd_0 - \dd^{\min}}^{\beta_N},
\end{align}
where the coefficients can be computed from  $a_J$, $\alpha_J$, $b_J$, $\beta_J$ (see details in \cref{app:derivations}).

\begin{figure}
    \centering
    \includegraphics[width=\linewidth]{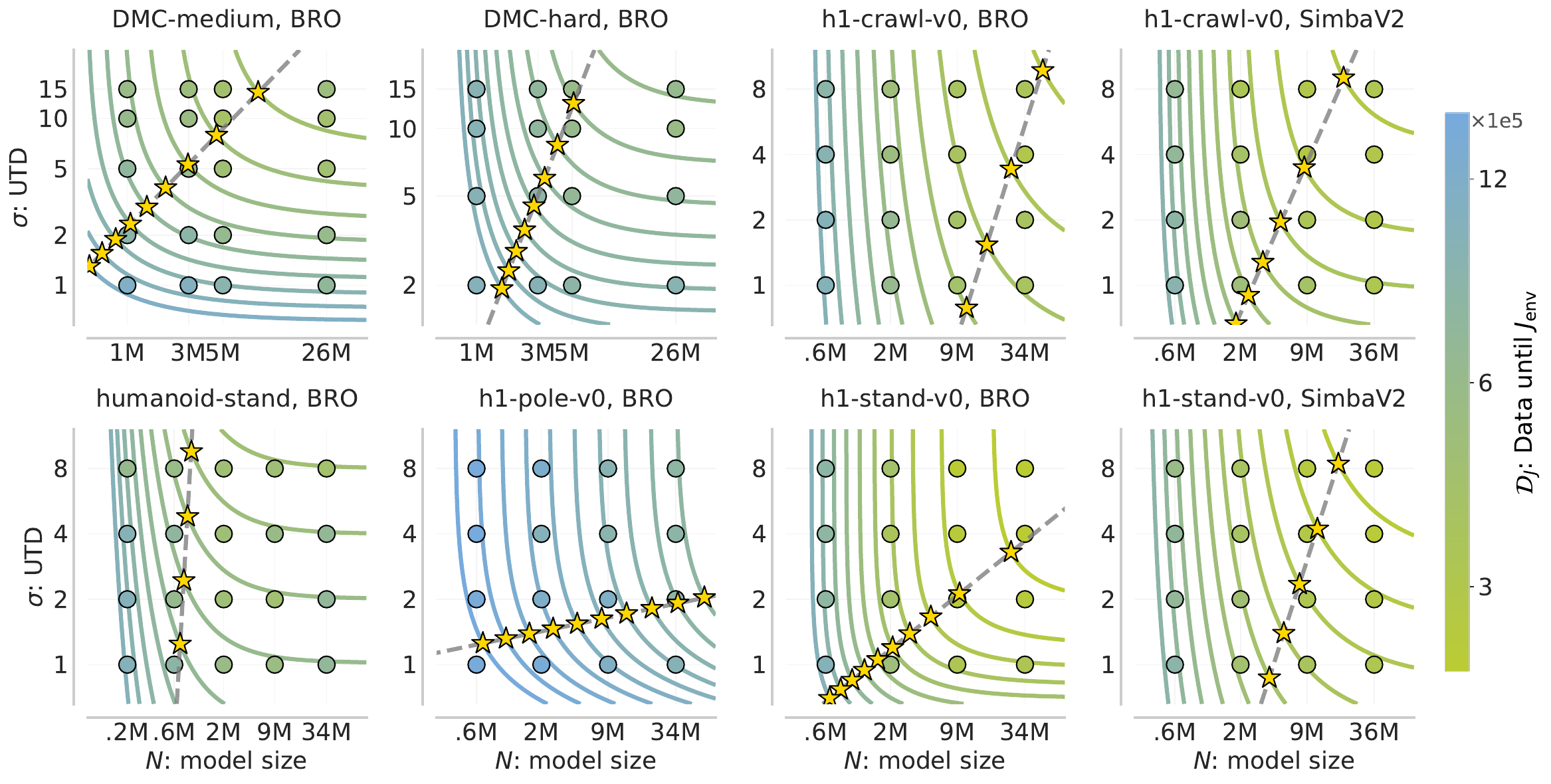}
    \caption{\footnotesize{\textbf{Data efficiency fit $\mathcal{D}_J(\sigma,N)$ on all domains, shown as iso-data contours.} Each contour denotes the curve which attains the same data efficiency to attain a given target performance $J$, with data efficiency denoted by color. The form of the fit allows a closed-form solution for optimal configurations, and we show these as stars. These points lie on a power law. This law enables us to scale compute while allocating it to UTD or model size as we will discuss in subsequent results in this paper.}}
    \label{fig:iso_data}
    \vspace{-1em}
\end{figure}

\begin{AIbox}{Scaling Observation 2: Partitioning compute optimally between model size and UTD}
    Optimal UTD and model size is a predictable function of data budget $\dd$ (alternatively, compute budget $\cc$), as a power law in \cref{eq:compute_solution}.
\end{AIbox}

We visualize this solution in \cref{fig:iso_data}. We plot iso-$\dd$ contours, i.e. curves in $(\sigma,N)$ space that attain identical data efficiency, and find that these curves move diagonally to the top-right for smaller $\dd$ values, in a way where both increasing the model size and the UTD ratio improves data efficiency. These contours are curved such that there is a single point on each frontier that attains optimal compute efficiency $\cc$. We plot these points, which follow the solution in \cref{eq:compute_solution}. This allows us to predict data efficiency for novel combinations of UTD and model size, which is crucial.

\textbf{Evaluation.} Our data efficiency coefficients are fitted against a grid of UTD ratios and model sizes. We evaluate our proposed data efficiency fits on a grid of interpolated and extrapolated UTD ratios and model sizes using the fitted batch size. Averaged over 4 tasks, our fit achieves a relative error of 10.0\% against the ground truth data efficiency on fitted UTD ratios and model sizes, 14.9\% on interpolation, and 18.0\% on extrapolation. Experimental details are described in \Cref{app:hparam-sweep}.

\begin{wraptable}{r}{0.47\textwidth}
\centering
\caption{\footnotesize{\textbf{Data efficiency ratios} of various approaches to allocate compute to our approach of compute-optimal $(\sigma, N)$ scaling. Observe that all comparisons perform subpar to our compute-optimal UTD, model size prescriptions in terms of data efficiency.}}
\begin{tabular}{ccc}
    \toprule
    Approach & Average & Median \\
    \midrule
    Compute-optimal (Ours) & 1.00 & 1.00 \\
    Compute-optimal (Ours) &  &  \\
    + fixed batch size & 1.03 & 1.05 \\
    \midrule
    $\sigma$-only scaling & 1.26 & 1.18 \\
    $N$-only scaling & 1.11 & 1.11 \\
    \bottomrule
\end{tabular}
\label{table:compute-optimal-baseline}
\vspace{-0.3cm}
\end{wraptable}
We also compare our estimated UTD ratios and model size with other approaches for allocating unseen compute budgets in \Cref{table:compute-optimal-baseline}. We compare to the following alternate approaches: \textbf{(i)} \emph{UTD-only scaling} at compute budget $\mathcal C$ for a given model size, \textbf{(ii)} \emph{model-only scaling} at compute budget $\mathcal C$ for a given UTD, and \textbf{(iii)} our proposed compute-optimal UTD and model size, run with a constant, fixed batch size not specifically designed for our compute budget $\mathcal C$. This constant fixed batch size corresponds to the batch size prescribed by our fit for the first compute budget $\tilde B(\sigma^*(\cc_{\min}), N^*(\cc_{\min}))$).
In \Cref{table:compute-optimal-baseline}, we observe that our compute-optimal scaling achieves the target performance using the least amount of data, whereas both $\sigma$-only scaling and $N$-only scaling require substantially more data, as evaluated using the ratio of the total amount of data needed for the approaches and the total amount of data needed for our compute-optimal approach. The strategy of using a constant batch size performs only marginally worse than our approach. However, as this comparison still relies on our proposed UTD ratio and model-size prescriptions, it primarily shows that these prescriptions are relatively robust to variations in batch size.

\textbf{Implications.} Our results show that appropriate choices of UTD and model size improve both sample efficiency and compute utilization. At the same time, we find broad regions of near-equivalent performance: multiple (UTD, model-size) settings perform similarly well, so fully optimizing these hyperparameters is often unnecessary to capture most of the gains (\Cref{fig:complete-budget-sigma} and \Cref{fig:complete-budget-n}). Similarly, while the best configuration is environment-dependent, with some tasks benefiting from larger models to begin learning and others from a higher UTD, \textbf{scaling the model size paired with a mild increase in UTD is often a good starting point.} Our framework makes these trade-offs explicit and provides a principled approach to selecting good values for these hyperparameters.

\vspace{-0.3cm}
\subsection{Solving \cref{prob:compute_allocation}, Part 2: Resource Partitioning for Different Returns $J$}
\vspace{-0.15cm}

\begin{figure}
    \centering
    \includegraphics[width=\linewidth]{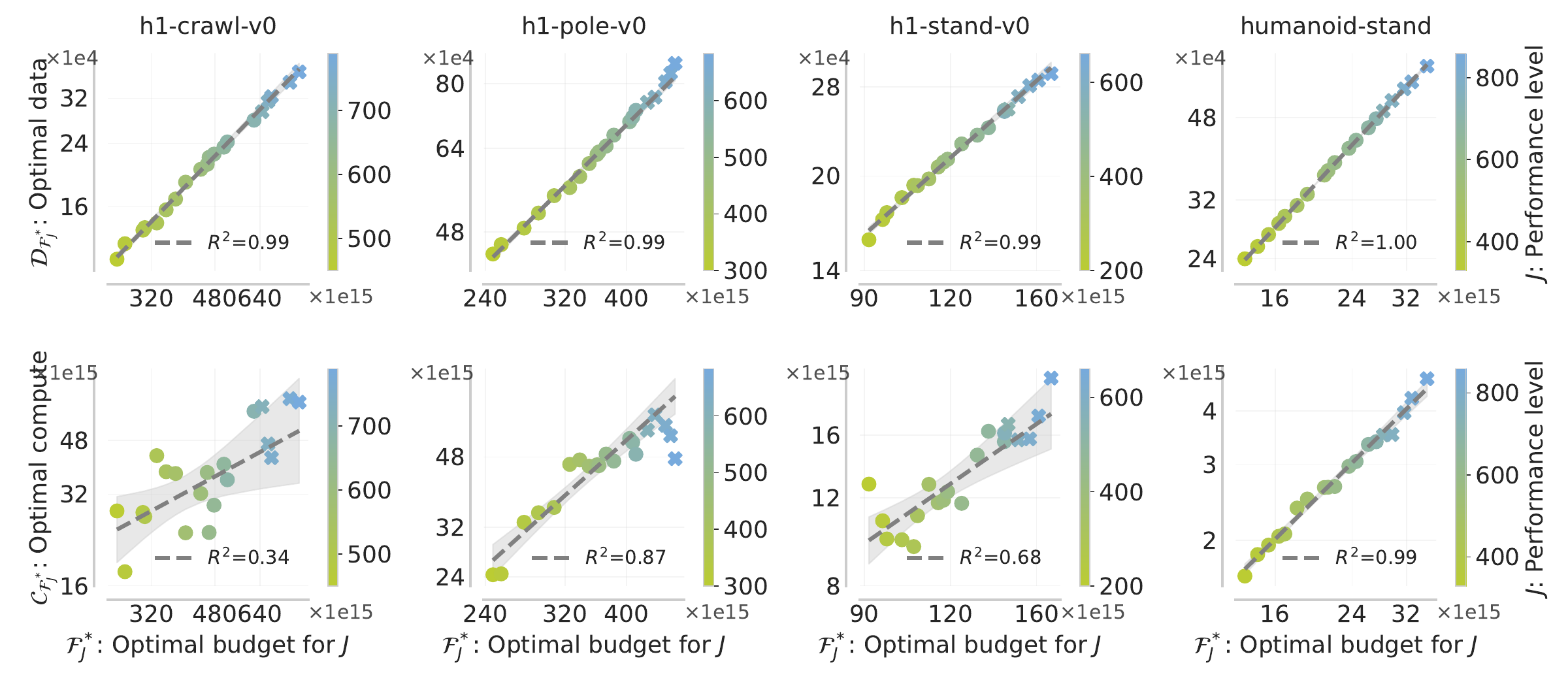}
    \caption{\footnotesize{\textbf{Optimal data $\dd(\ff_0)$ and compute $\cc(\ff_0)$ fits for a given budget $\ff_0$.} Return $J$ is denoted in color, showing how increased budgets correspond to higher returns. Similar to \citep{rybkin2025valuebaseddeeprlscales}, we are able to allocate resources across data and compute in a predictable way, while accounting for the effect of both model size and UTD.}}
    \label{fig:budget-d-c}
    \vspace{-0.3cm}
\end{figure}

For the solution to the problem to be practical, we need to prescribe a solution that works for all values of $J$. However, $J$ can be arbitrary and not smooth, which makes designing a general law impossible. Instead, we follow \citet{rybkin2025valuebaseddeeprlscales} and use the notion of a total budget $\ff = \cc + \delta \cdot \dd$ as a substitute for $J$. Similarly to $J$, the budget $\ff$ increases as the complexity of policy learning increases.

That is, for a well-behaved TD-learning algorithm with the ``optimal'' hyperparameters, $J$ will be some unknown monotonic function of $\ff$. Using this intuition, we will now demonstrate a solution to compute allocation that optimizes $\ff$, therefore also optimizing $J$. Similarly, we will be able to extrapolate our solution to higher $\ff$, and thus higher $J$.

We produce a solution to \cref{prob:compute_allocation}, part 2, by observing that $\cc$ and $\dd$ evolve predictably as a function of $\ff$, in line with previous work \citep{rybkin2025valuebaseddeeprlscales}:
\begin{align}
    \label{eq:compute_data_fit}
    \cc^*(\ff_0) = \pfrac{a_\cc}{\ff_0}^{\alpha_\cc}, \quad \dd^* (\ff_0) = \pfrac{b_\cc}{ \ff_0}^{\beta_\cc}.
\end{align}

\textbf{Evaluation.} We show that this dependency is predictable in \cref{fig:budget-d-c}, including evaluating confidence interval and extrapolation to higher budgets for this fit. This allows us to optimally allocate resources for higher values of budget or return across data and compute.

\begin{AIbox}{Scaling Observation 3: Optimal partitioning between data and compute}
    Optimal scaling for data $\cc$ and compute $\dd$ are  predictable functions of the total budget $\ff_0$, as a power law in \cref{eq:compute_data_fit}.
\end{AIbox}

 Now, we extend this analysis to allocating compute across UTD and model size as a function of the budget. We use the same power law form:
\begin{equation}
    \label{eq:sigma_n_fit}
    \sigma^*_\ff(\ff_0) = \pfrac{a_\ff}{\ff_0}^{\alpha_\ff}, \quad N_\ff^* (\ff_0) = \pfrac{b_\ff}{ \ff_0}^{\beta_\ff}.
\end{equation}

\begin{AIbox}{Scaling Observation 4: Optimal partitioning of budget between compute and data}
    Optimal scaling for UTD $\sigma$ and model size $N$ depends as a power law on the budget $\ff$, as in \cref{eq:compute_data_fit}. We can estimate the optimal allocation trend using this power law, and estimate robustness of perfomance to allocation as the variance of this trend. 
\end{AIbox}

\textbf{Implications.}
We show results for two challenging tasks in \cref{fig:budget-sigma-n} and further results in \cref{app:experiments}. We observe the coefficients $\alpha_\ff, \beta_\ff$ for resource allocation vary between tasks, showing that for some tasks scaling model size or UTD is more or less important. Further, we observe that different tasks vary in the amount of variance, seen as the size of the confidence interval in \cref{fig:budget-sigma-n}. This shows that for some tasks, precisely setting model size and UTD is important; while other tasks allow to trade off model size and UTD without a big decrease in performance. Our experimental procedure enables practitioners to make these decisions based on the relationships that we fit in the paper.

\vspace{-0.25cm}
\section{Related Work}
\vspace{-0.2cm}
\begin{wrapfigure}{r}{0.6\textwidth}
    \centering
    \vspace{-0.3cm}
    \includegraphics[width=1\linewidth]{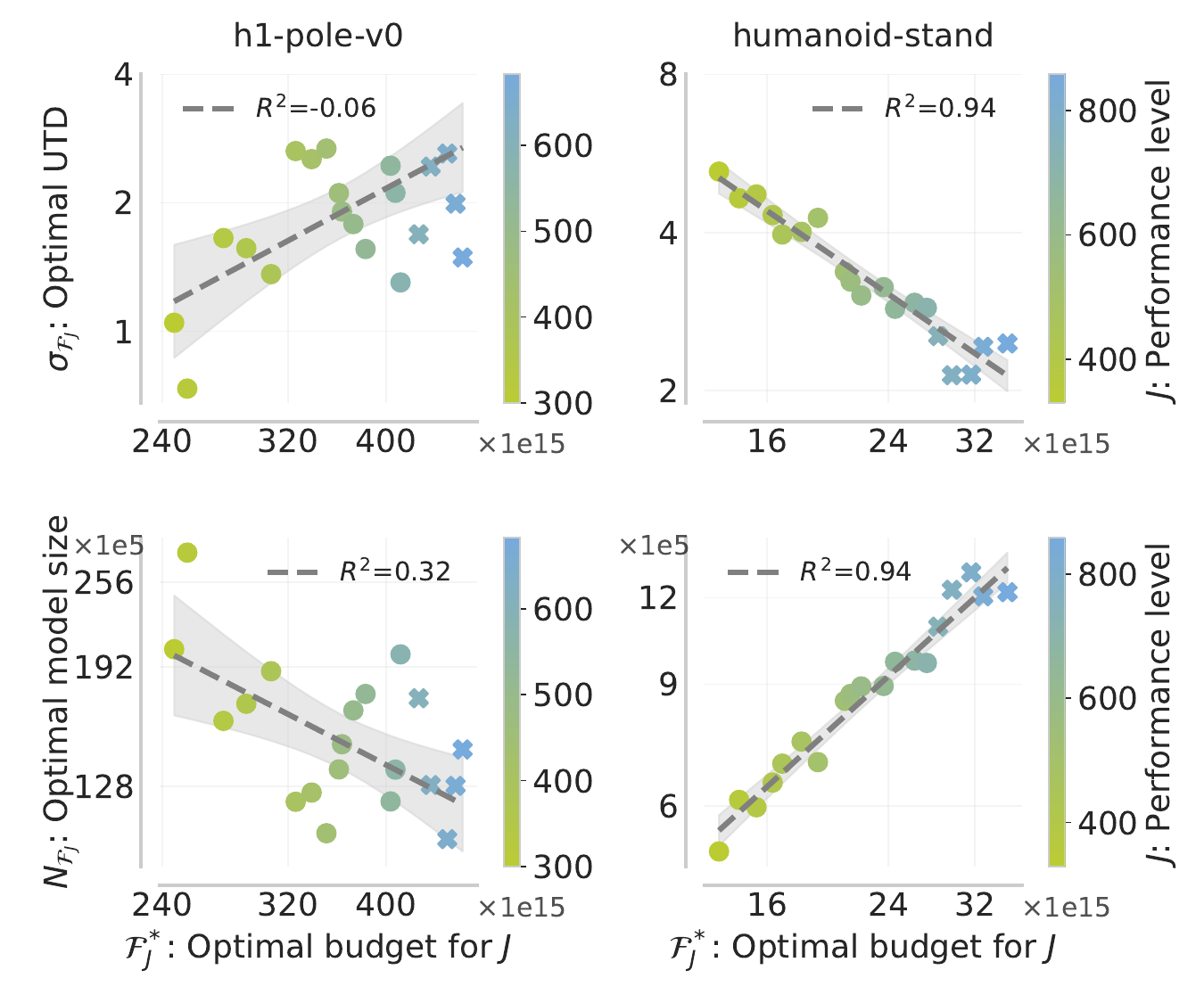}
    \caption{\footnotesize{\textbf{Optimal UTD $\sigma(\ff_0)$ and model size $N(\ff_0)$}, with extrapolation to higher budgets or returns. While for some tasks it is necessary to set values precisely, other tasks allow some variation in model size and UTD as indicated by variance. 
    }}
    \label{fig:budget-sigma-n}
    \vspace{-0.5cm}
\end{wrapfigure}
\textbf{Model scaling in deep RL.}
While big models have been essential to many of the successes in ML~\citep{dubey2024llama, esser2024scaling, srivastava2022beyond, dehghani2023scaling, zhai2022scaling}, typical models used for standard state-based deep RL tasks remain small, usually limited to a few feedforward MLP layers~\citep{raffin2021stable, huang2022cleanrl, jackson2025clean}. This is partly because na\"ive model size scaling often causes divergence~\citep{bjorck2021towards, schwarzer2023bigger, nauman2024bigger}. Previous works have shown that RL can be scaled to bigger models~\citep{wang20251000, nauman2024bigger, lee2024simba, schwarzer2023bigger, kumar2023offline, springenberg2024offline} by using layer normalization~\citep{nauman2024bigger}, feature normalization~\citep{kumar2021dr3, lee2025hyperspherical}, or using classification losses~\citep{kumar2023offline, farebrother2024stop}. While these works focus on techniques that stabilize RL training, they do not investigate the capacity scaling properties and how they interact with hyperparameters such as batch size and UTD. Our work utilizes some of these innovations to investigate the relationship between model capacity and UTD for compute-optimal performance. Furthermore, prior work considered the aspect of predictability in scaling model capacity in RL, but in the context of online policy gradient~\citep{hilton2023scaling} or for RLHF reward model overparameterization~\citep{gao2023scaling}. In contrast, we study model scaling in value-based RL where gradients come from backpropagating the TD loss.


\textbf{Data and compute scaling in deep RL.} A considerable amount of research in RL focused on improving sample efficiency through scaling UTD ratio~\citep{chen2020randomized, doro2022sample} and find that one of the key challenges in doing so is overfitting~\citep{li2023efficient, chen2020randomized, Nauman2024overestimation}. Previous work reported mixed results with evaluating overfitting in online RL~\citep{kumar2021workflow, fujimoto2022should}, but we find validation TD-error to be predictive of TD-overfitting in our experiments, akin to \citet{li2023efficient}. Prior work found that using large batch sizes can decrease the performance~\citep{obando2023small} they are restricted to the small model sizes. We contextualize these results by showing why low capacity models benefit from small batch size. 
Prior works also considered scaling up data, though predominantly in in on-policy RL, focusing on learning from a parallelized simulation or a world model~\citep{mnih2016asynchronous, silver2016mastering, schrittwieser2020mastering, gallici2024simplifying, singla2024sapg}. Instead, we focus on sample-efficient off-policy learning algorithms and study resource allocation problems pertaining to compute allocations instead.

\textbf{Scaling laws in value-based RL.} 
Most extensions of scaling laws from supervised learning focus on language models and cross-entropy loss~\citep{kaplan2020scaling,mccandlish2018empirical,muennighoff2023scaling,ludziejewskiscaling}, with few exceptions targeting downstream metrics~\citep{gadre2024language}. In contrast, off-policy RL involves distinct dynamics due to bootstrapping~\citep{fujimoto2022should,kumar2021dr3,lyle2023understanding,hilton2023scaling}, making direct transfer of supervised scaling laws unreliable. Prior work shows that scaling UTD in off-policy RL yields a peculiar law~\citep{rybkin2025valuebaseddeeprlscales}, but leaves model capacity unexplored. We extend this line of work by showing that off-policy RL scales predictably with both UTD and model size and in the process, uncover interesting insights about the interplay between batch sizes, overfitting, and UTD.


\vspace{-0.2cm}
\section{Discussion, Limitations, and Conclusion}
\vspace{-0.2cm}

We have established scaling laws for value-based RL allowing compute scaling in an optimal manner. Specifically, we provide a way to scale batch size, UTD, model size, as well as data budget, and provide scaling laws that estimate tradeoffs between these quantities. These laws are informed by our novel analysis of the impact of scaling on overfitting in TD-learning.  We also saw that in some environments several configurations of the hyperparameters we studied could broadly be considered compute-optimal, which reflected as a benign relative error in our fits. We were limited in how many variables we can study due to the necessity of running higher-dimensional grid searches for every new variable. Building on our results, future work will study other important hyperparameters, such as learning rate and the critic update ratio. Further, while our work is limited to challenging simulated robotic tasks, future work will study large scale domains such as visual and language domains using larger scale models. The analysis and the laws presented in this work are a step towards training TD-learning methods at a scale similar to other modern machine learning approaches. 

\vspace{-0.2cm}
\section*{Acknowledgements}
\vspace{-0.2cm}

We would like to thank Amrith Setlur, Seohong Park, Colin Li, and Mitsuhiko Nakamoto for feedback on an earlier version of this paper. We thank the TRC program at Google Cloud for providing TPU sources that supported this work. We thank NCSA Delta cluster for providing GPU resources that supported the experiments in this work. This research was supported by ONR under N00014-24-12206, N00014-22-1-2773, and ONR DURIP grant, with compute support from the Berkeley Research Compute, Polish high-performance computing infrastructure, PLGrid (HPC Center: ACK Cyfronet AGH), that provided computational resources and support under grant no. PLG/2024/017817. Pieter Abbeel holds concurrent appointments as a Professor at UC Berkeley and as an Amazon Scholar. This work was done at UC Berkeley and CMU, and is not associated with Amazon.

\newpage 
{
\small

\bibliography{qscaled}
}

\newpage
\appendix

\part*{Appendices}

\section{Details on Deriving Scaling Fits}

\label{app:derivations}

\textbf{FLOPs calculation.} We inherit the definition from  \citet{rybkin2025valuebaseddeeprlscales}, so that 
\begin{equation}\cc(\sigma, N) = k \, \sigma \, N \, \dd(\sigma, N) \label{eq:compute-from-data}\end{equation} 
for a constant $k$ not dependent on $\sigma$ and $N$. We follow the standard practice of updating the critic, target, and actor all $\sigma$ times for each new data point collected (\Cref{algo:dropin}). 

\subsection{Maximal compute efficiency for data $\le\dd_0$} 

As described in \Cref{sec:data_fit}, the number of data points needed to achieve performance $J$ is equal to
\begin{equation} \dd_J(\sigma, N) \approx \dd_J^{\min} + \pfrac{a_J}{\sigma}^{\alpha_J} + \pfrac{b_J}{N}^{\beta_J}, \label{eq:data-formula} \end{equation}
where $\dd_J^{\min}, a_J, \alpha_J, b_J, \beta_J > 0$ are constants not dependent on $\sigma$ and $N$. We first present a closed-form solution to the simpler optimization problem in \cref{eq:min-c-given-d-problem}. This will enable us to characterize the solution to \cref{prob:compute_allocation}, part 1, which does not have a closed-form solution in terms of $\cc_0$ but can be easily estimated.
\bigskip

\begin{proposition}
    If $\alpha_J < 1$ or $\beta_J < 1$, there exists a unique optimum 
    \begin{equation}    
    (\sigma^*, N^*) := \arg \min_{(\sigma, N)}~~~ \cc_J(\sigma,N) ~~\text{s.t.}~~ \dd_J(\sigma,N) \leq \dd_0. \label{eq:min-c-given-d-problem}
    \end{equation}
    Moreover,
    \begin{equation}
        \sigma^* = a_J\pfrac{1+\frac{\alpha_J}{\beta_J}}{\dd_0 - \dd^{\min}}^{1/\alpha_J} \qquad N^* = b_J\pfrac{1+\frac{\beta_J}{\alpha_J}}{\dd_0 - \dd^{\min}}^{1/\beta_J}
    \end{equation}
    satisfy the following relation:
    \begin{equation}
        N^* = \pfrac{\beta_J b^{\beta_J}}{\alpha_J a^{\alpha_J}}^{1/\beta_J} (\sigma^*)^{\alpha_J/\beta_J}. \label{eq:opt-sigma-to-n-relation}
    \end{equation}
    \label{prop:min-c-given-d}
\end{proposition}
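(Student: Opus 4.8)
The plan is to convert this into a constrained optimization that I can collapse onto the constraint boundary. Using \eqref{eq:compute-from-data} to write $\cc_J(\sigma,N) = k\,\sigma\,N\,\dd_J(\sigma,N)$ and substituting the data law \eqref{eq:data-formula}, minimizing $\cc_J$ is equivalent (since $k>0$) to minimizing $f(\sigma,N):=\sigma N\,\dd_J(\sigma,N)$ over the feasible set $\{(\sigma,N):\dd_J(\sigma,N)\le\dd_0\}$. Write $\Delta := \dd_0-\dd_J^{\min}>0$; feasibility requires $\dd_0>\dd_J^{\min}$, which holds since $\dd_J\to\dd_J^{\min}$ as $\sigma,N\to\infty$. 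I would split the argument into two stages: first, show that the data constraint is active at every optimizer, which reduces the problem onto the surface $\dd_J=\dd_0$; second, solve the resulting one–dimensional problem in closed form, directly producing $\sigma^*$, $N^*$ and the relation \eqref{eq:opt-sigma-to-n-relation}.

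\noindent\textbf{Stage 1 (the constraint binds) — the crux.} This is where the hypothesis $\alpha_J<1$ or $\beta_J<1$ is used, and I expect it to be the main obstacle. Assume without loss of generality $\alpha_J<1$ (the case $\beta_J<1$ is symmetric in $N$). Differentiating, $\partial_\sigma f = N\left(\dd_J^{\min} + (1-\alpha_J)\pfrac{a_J}{\sigma}^{\alpha_J} + \pfrac{b_J}{N}^{\beta_J}\right)$, and under the hypothesis every summand is strictly positive (this is exactly why $\dd_J^{\min}>0$ matters), so $f$ is strictly increasing in $\sigma$. Hence from any interior feasible point, where $\dd_J<\dd_0$, I can decrease $\sigma$: this strictly lowers $f$ while raising $\dd_J$ toward $\dd_0$, and since $\dd_J\to\infty$ as $\sigma\to0^+$ the boundary $\dd_J=\dd_0$ is reached before $\sigma$ exits $(0,\infty)$. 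Thus no interior point is optimal and the infimum is attained on $\dd_J=\dd_0$. I would stress that this monotonicity is precisely what the hypothesis secures: if instead both $\alpha_J\ge1$ and $\beta_J\ge1$, $f$ can acquire an interior stationary point, the constraint need not bind, and the closed form fails.

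\noindent\textbf{Stage 2 (boundary problem) and conclusion.} On $\dd_J=\dd_0$ we have $f=\dd_0\,\sigma N$, so the task reduces to minimizing $\sigma N$ subject to $\pfrac{a_J}{\sigma}^{\alpha_J}+\pfrac{b_J}{N}^{\beta_J}=\Delta$. I would parametrize by $u:=\pfrac{a_J}{\sigma}^{\alpha_J}\in(0,\Delta)$, giving $\sigma=a_J u^{-1/\alpha_J}$, $N=b_J(\Delta-u)^{-1/\beta_J}$, hence $\sigma N = a_J b_J\,u^{-1/\alpha_J}(\Delta-u)^{-1/\beta_J}$. Minimizing this equals maximizing $\tfrac{1}{\alpha_J}\ln u+\tfrac{1}{\beta_J}\ln(\Delta-u)$, which is strictly concave on $(0,\Delta)$ and tends to $-\infty$ at both endpoints, so its unique critical point $u^*=\beta_J\Delta/(\alpha_J+\beta_J)$ gives existence and uniqueness of the boundary minimizer. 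Back-substituting $u^*$ yields $\pfrac{b_J}{N^*}^{\beta_J}=\tfrac{\alpha_J}{\beta_J}\pfrac{a_J}{\sigma^*}^{\alpha_J}$, which rearranges into \eqref{eq:opt-sigma-to-n-relation}; and solving $\pfrac{a_J}{\sigma^*}^{\alpha_J}=\beta_J\Delta/(\alpha_J+\beta_J)$ together with the analogous equation for $N^*$ recovers the stated $\sigma^*=a_J\pfrac{1+\alpha_J/\beta_J}{\Delta}^{1/\alpha_J}$ and $N^*=b_J\pfrac{1+\beta_J/\alpha_J}{\Delta}^{1/\beta_J}$. Combining the two stages yields a unique global optimizer, completing the argument.
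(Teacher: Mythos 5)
Your proof is correct, and it takes a genuinely different route from the paper's. The paper treats \cref{eq:min-c-given-d-problem} with Lagrangian/KKT machinery: it invokes Slater's condition, writes stationarity and complementary slackness, rules out a zero multiplier by contradiction (this is where $\alpha_J<1$ or $\beta_J<1$ enters), concludes the data constraint is active, and divides the two stationarity equations to get \cref{eq:opt-sigma-to-n-relation}. You reach the same "constraint binds" conclusion by elementary monotonicity ($\partial_\sigma f>0$ when $\alpha_J<1$, with the symmetry swap $(\sigma,\alpha_J,a_J)\leftrightarrow(N,\beta_J,b_J)$ covering the other case), and then, instead of KKT, you parametrize the active surface by $u=(a_J/\sigma)^{\alpha_J}\in\left(0,\dd_0-\dd_J^{\min}\right)$ and minimize a one-dimensional function that is strictly concave after taking logs. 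Your route buys two things the paper leaves implicit: existence and uniqueness fall out of the 1-D reduction directly (the log-objective tends to $-\infty$ at both endpoints), whereas the paper's step "unique KKT point $\Rightarrow$ unique optimum" tacitly relies on KKT sufficiency, i.e.\ on convexity, which for this problem holds only after a logarithmic change of variables that the paper never makes explicit; and the role of the hypothesis $\alpha_J<1$ or $\beta_J<1$ is more transparent as a monotonicity statement than as a contradiction in the multiplier. What the paper's framework buys in exchange is reusability: the proof of \cref{prop:min-d-given-c} explicitly recycles the KKT system and the positive multiplier $\lambda$ from this proposition to identify the solutions of the two dual problems, so your argument would need a small supplement (e.g.\ noting that your critical-point relation $\pfrac{b_J}{N^*}^{\beta_J}=\frac{\alpha_J}{\beta_J}\pfrac{a_J}{\sigma^*}^{\alpha_J}$ is exactly gradient parallelism, which yields the multiplier) to serve the later propositions. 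One small inaccuracy in your commentary: strict positivity of $\partial_\sigma f$ under $\alpha_J<1$ does not need $\dd_J^{\min}>0$, since $(1-\alpha_J)\pfrac{a_J}{\sigma}^{\alpha_J}$ and $\pfrac{b_J}{N}^{\beta_J}$ are already strictly positive; that constant matters for coercivity in \cref{prop:min-d-given-c}, not here.
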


\begin{proof}
For ease of notation, we drop the subscript $J$ throughout this derivation. Since there exist sufficiently large $(\sigma,N)$ for which $\dd(\sigma,N) < \dd_0$, Slater's conditions are satisfied, and the KKT conditions are necessary and sufficient for optimality. Let
\begin{equation}
    \mathcal L(\sigma,N) = k\,\sigma\, N\, \dd(\sigma,N) + \lambda(\dd(\sigma,N) - \dd_0) \label{eq:min-c-given-d-lagrange}
\end{equation}
denote the Lagrangian. Now, we will solve for $(\tilde\sigma, \tilde N, \tilde \lambda)$ satisfying the KKT conditions. The stationarity conditions are
\begin{align}
    \frac{\partial\mathcal L}{\partial\sigma} = 0 &\implies (k\,\tilde\sigma \, \tilde N + \tilde \lambda) \, \alpha \, a^\alpha \, \tilde\sigma^{-\alpha-1} = k\tilde N \dd(\tilde\sigma, \tilde N) \label{eq:stationarity-sigma}\\
    \frac{\partial\mathcal L}{\partial N} = 0 &\implies (k\,\tilde \sigma \, \tilde N + \tilde \lambda) \, \beta \, b^\beta \, \tilde N^{-\beta-1} = k\tilde\sigma \dd(\tilde\sigma, \tilde N). \label{eq:stationarity-n}
\end{align}
Complementary slackness implies that
\begin{equation}\tilde\lambda(\dd(\tilde\sigma, \tilde N) - \dd_0) = 0. \end{equation}

We claim that $\tilde\lambda > 0$. Assume for the sake of contradiction that $\tilde\lambda = 0$. Substituting into \cref{eq:stationarity-sigma,eq:stationarity-n}, we obtain
\begin{equation}
    \alpha \, a^\alpha \, \tilde\sigma^{-\alpha} = \beta\, b^\beta \, \tilde N ^{-\beta} = \dd(\tilde\sigma, \tilde N) > \max \set{a^\alpha \tilde\sigma^{-\alpha}, b^\beta \tilde N^{-\beta}}.
\end{equation}
But the last inequality contradicts $\alpha < 1$ or $\beta < 1$, concluding the claim.

It follows that $\dd(\tilde\sigma,\tilde N) = \dd_0$. Dividing \cref{eq:stationarity-sigma} by \cref{eq:stationarity-n}, we obtain
\begin{equation}
    \frac{ \alpha \, a^\alpha \, \tilde\sigma^{-\alpha-1} }{\beta \, b^\beta \, \tilde N^{-\beta-1}} = \frac {\tilde N}{\tilde \sigma},
\end{equation}
or equivalently
\begin{equation}
\tilde N = \pfrac{\beta b^\beta}{\alpha a^\alpha}^{1/\beta} \sigma^{\alpha/\beta}.
\end{equation}
Substituting into the active constraint $\dd(\tilde\sigma,\tilde N) = \dd_0$,
we obtain
\begin{equation}
    \tilde\sigma = a\pfrac{1+\frac\alpha\beta}{\dd_0 - \dd^{\min}}^{1/\alpha}, \quad \tilde N = b\pfrac{1+\frac\beta\alpha}{\dd_0 - \dd^{\min}}^{1/\beta}.
\end{equation}
Thus, $(\tilde\sigma, \tilde N)$ is the unique KKT solution, and thus the unique optima.
\end{proof}

\subsection{Maximal data efficiency for compute $\le\cc_0$}\label{app:max-d-given-c}

We are now equipped to solve the optimization problem presented in \cref{prob:compute_allocation}, part 1. Although we cannot solve for the optimal $(\sigma^*, N^*)$ directly, the following proposition shows that the set of optimal solutions obtained by varying the compute budget $\cc_0$ matches exactly the set of solutions obtained by varying the data budget $\dd_0$ in \Cref{prop:min-c-given-d}. This equivalence reduces the original problem to a simpler surrogate. Using \cref{eq:opt-sigma-to-n-relation}, it is straightforward to compute the optimum numerically.

\bigskip

\begin{proposition}
    Suppose $\alpha_J < 1$ or $\beta_J < 1$, and assume the data and compute formulations established in \Cref{eq:compute-from-data,eq:data-formula}. 
    Let
    \begin{equation}
    \cc_J^{\min} := \min_{\sigma>0,N>0} \cc_J(\sigma,N).
    \end{equation}
    For a fixed budget $\cc_0 \ge \cc_J^{\min}$, write
    \begin{align}
    (P_1) \qquad &\dd^* = \min_{(\sigma, N)}~~~ \dd_J(\sigma,N) ~~\text{s.t.}~~ \cc_J(\sigma,N) \leq \cc_0  \\
    \intertext{and}
    (P_2) \qquad &\min_{(\sigma, N)}~~~ \cc_J(\sigma,N) ~~\text{s.t.}~~ \dd_J(\sigma,N) \leq \dd^*.
    \end{align}
    Each problem admits a unique solution $(\sigma^*, N^*)$, and these solutions coincide. \label{prop:min-d-given-c}
\end{proposition}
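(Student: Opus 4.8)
The plan is to reduce the statement to the already-established Proposition~\ref{prop:min-c-given-d} by a sandwiching argument on the shared Pareto frontier of the pair $(\cc_J, \dd_J)$, rather than attacking $(P_1)$ directly (which, as the text notes, has no closed form in $\cc_0$). First I would dispose of existence for $(P_1)$: since $\dd_J(\sigma,N) \ge (a_J/\sigma)^{\alpha_J} \to \infty$ as $\sigma \to 0^+$, and likewise as $N \to 0^+$, any minimizing sequence is bounded away from the boundary of the open quadrant; and since feasibility together with $\cc_J = k\,\sigma N\,\dd_J \ge k\,\sigma N\,\dd_J^{\min}$ forces $\sigma N \le \cc_0/(k\,\dd_J^{\min})$, such a sequence is also bounded above. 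Hence it lies in a compact box inside $(0,\infty)^2$, and continuity of $\dd_J$ yields a minimizer $(\sigma_1,N_1)$ with optimal value $\dd^* := \dd_J(\sigma_1,N_1)$.

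Next I would instantiate $(P_2)$ with this $\dd^*$, whose unique solution $(\sigma_2,N_2)$ is furnished by Proposition~\ref{prop:min-c-given-d}; this is the only place the hypothesis $\alpha_J < 1$ or $\beta_J < 1$ enters. The core is a short feasibility chain. Since $(\sigma_1,N_1)$ is feasible for $(P_2)$, we get $\cc_J(\sigma_2,N_2) \le \cc_J(\sigma_1,N_1) \le \cc_0$, so $(\sigma_2,N_2)$ is in turn feasible for $(P_1)$; optimality of $\dd^*$ then gives $\dd_J(\sigma_2,N_2) \ge \dd^*$, while the $(P_2)$-constraint gives $\dd_J(\sigma_2,N_2) \le \dd^*$, so $\dd_J(\sigma_2,N_2) = \dd^*$.

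The crux is to promote these feasibility facts into activeness of the constraints. I would show $\cc_J(\sigma_2,N_2) = \cc_0$ by contradiction: if $\cc_J(\sigma_2,N_2) < \cc_0$, then because $\dd_J$ is strictly decreasing in each argument (with $\partial_\sigma \dd_J = -\alpha_J a_J^{\alpha_J}\sigma^{-\alpha_J-1} < 0$) while $\cc_J$ is continuous, a sufficiently small increase of $\sigma_2$ yields a point with strictly smaller data and compute still below $\cc_0$, i.e.\ a point feasible for $(P_1)$ with data $< \dd^*$, contradicting minimality. Combined with $\cc_J(\sigma_2,N_2) \le \cc_J(\sigma_1,N_1) \le \cc_0$, the squeeze forces $\cc_J(\sigma_1,N_1) = \cc_J(\sigma_2,N_2) = \cc_0$.

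Finally I would close the loop: $(\sigma_1,N_1)$ is feasible for $(P_2)$ and attains compute $\cc_0$, which equals the optimal value of $(P_2)$; hence $(\sigma_1,N_1)$ also solves $(P_2)$, and the uniqueness clause of Proposition~\ref{prop:min-c-given-d} gives $(\sigma_1,N_1) = (\sigma_2,N_2)$. This simultaneously proves coincidence and, since $\dd^*$ is the optimal value of $(P_1)$ and hence determines $(\sigma_2,N_2)$ uniquely, the uniqueness of the $(P_1)$-minimizer. I expect the main obstacle to be precisely the activeness/perturbation step, since it is what licenses the appeal to the uniqueness in Proposition~\ref{prop:min-c-given-d}; the existence argument is routine coercivity bookkeeping, and the remainder is formal feasibility-chaining.
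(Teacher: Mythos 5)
Your proof is correct, and it reaches the conclusion by a genuinely different mechanism than the paper's. The shared skeleton is the coercivity argument for existence in $(P_1)$ and the reliance on \Cref{prop:min-c-given-d}; the difference lies in how coincidence of the two solutions is established. The paper argues through Lagrange multipliers: it writes the KKT systems of $(P_1)$ and $(P_2)$, shows both multipliers are strictly positive (for $(P_2)$ this is where $\alpha_J<1$ or $\beta_J<1$ enters, since then $\partial_\sigma \dd_J<0$ while $\partial_\sigma \cc_J>0$ forces $\mu>0$), and then observes that the two stationarity systems are the same system under $\lambda = 1/\mu$, so the explicit, unique KKT solution of \Cref{prop:min-c-given-d} solves both. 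You instead stay entirely primal: the feasibility chain pins $\dd_J(\sigma_2,N_2)=\dd^*$; the perturbation step (nudge $\sigma$ upward, which strictly decreases $\dd_J$ since $\partial_\sigma\dd_J=-\alpha_J a_J^{\alpha_J}\sigma^{-\alpha_J-1}<0$ unconditionally, while $\cc_J$ stays below $\cc_0$ by continuity) forces the compute constraint to be active at $(\sigma_2,N_2)$ and hence, by the squeeze, at $(\sigma_1,N_1)$; then $(\sigma_1,N_1)$ attains the optimal value of $(P_2)$ and the uniqueness clause of \Cref{prop:min-c-given-d} collapses $(\sigma_1,N_1)=(\sigma_2,N_2)$, yielding coincidence and uniqueness for $(P_1)$ simultaneously. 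Your route buys two things: it never invokes KKT necessity or a constraint qualification for $(P_1)$ itself (the paper's attribution of the $(P_1)$ KKT system with $\lambda>0$ to \Cref{prop:min-c-given-d} is arguably its least transparent step), and it isolates the role of the hypothesis --- $\alpha_J<1$ or $\beta_J<1$ is consumed exactly once, inside the call to \Cref{prop:min-c-given-d}, since your exchange and perturbation arguments are valid for all positive exponents. What the paper's dual route buys in exchange is the multiplier identity $\lambda=1/\mu$, i.e.\ the shadow-price correspondence between the data-constrained and compute-constrained problems, which is the computational path to the closed-form prescriptions in \cref{eq:compute_solution}. One small point to make explicit when writing this up: applying \Cref{prop:min-c-given-d} to $(P_2)$ with budget $\dd^*$ requires its Slater condition, which holds because $\dd_J$ strictly exceeds $\dd_J^{\min}$ at every finite point, so $\dd^* > \dd_J^{\min}$.
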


\begin{proof}
    As before, we drop the subscript $J$. 
    
    We first justify the existence of a global minimizer to $(P_1)$ over $(0,\infty)^2$. As $\sigma \to 0^+$ or $N \to 0^+$, then $(a/\sigma)^\alpha \to \infty$ or $(b/N)^\beta \to \infty$, hence $D(\sigma, N) \to \infty$. If $\sigma, N \to \infty$, then $\cc(\sigma, N) \ge k \sigma N \dd^{\min} \to \infty$, contradicting $\cc \le \cc_0$. Thus, the feasible set $\set{\cc \le \cc_0}$ is coercive, and by continuity of $\cc$ and $\dd$, $(P_1)$ attains a global minimizer.
    

    \Cref{prop:min-c-given-d} shows that there exists $\lambda > 0$ such that the KKT conditions for $(P_1)$ hold,
    \begin{align}
        \frac{\partial\dd}{\partial\sigma}(\sigma^*, N^*) + \lambda \frac{\partial\cc}{\partial\sigma}(\sigma^*, N^*) &= 0 \\
        \frac{\partial\dd}{\partial N}(\sigma^*, N^*) + \lambda \frac{\partial\cc}{\partial N}(\sigma^*, N^*) &= 0 \\
        \cc(\sigma^*, N^*) &= \cc_0.
    \end{align}
    For $(P_2)$, the KKT conditions imply that there exists $\mu \ge 0$ such that
    \begin{align}
        \frac{\partial\cc}{\partial\sigma}(\sigma^\dagger, N^\dagger) + \mu \frac{\partial\dd}{\partial\sigma}(\sigma^\dagger, N^\dagger) &= 0 \\
        \frac{\partial\cc}{\partial N}(\sigma^\dagger, N^\dagger) + \mu \frac{\partial\dd}{\partial N}(\sigma^\dagger, N^\dagger) &= 0 \\
        \dd(\sigma^\dagger, N^\dagger) &= \dd^*.
    \end{align}
    If $\alpha < 1$, then
    \begin{align}
        \frac{\partial \dd}{\partial\sigma} &= -\alpha a^\alpha \sigma^{-\alpha-1} < 0 \\
        \frac{\partial\cc}{\partial\sigma} &= -k\sigma N \alpha a^\alpha \sigma^{-\alpha-1} + kN(\dd_{\min} + a^\alpha \sigma^{-\alpha} + b^\beta N^{-\beta}) > 0,
    \end{align}
    so $\mu > 0$. In the other case, if $\beta < 1$, then $\frac{\partial \dd}{\partial N} < 0$ and $\frac{\partial \cc}{\partial N} > 0$, so $\mu > 0$.
    
    Since the first solution is additionally given to satisfy $\dd(\sigma^*, N^*) = \dd^*$, these systems are identical, and so must be their solutions, $(\sigma^*, N^*, \lambda) = (\sigma^\dagger, N^\dagger, 1/\mu)$. Uniqueness in \cref{prop:min-c-given-d} implies uniqueness in $(P_1)$.
\end{proof}

\subsection{Maximal performance for budget $\le \ff_0$} \label{app:budget-theory}

Performance level $J$ is task-dependent and is not guaranteed to satisfy any general properties, so modeling part 2 of \Cref{prob:compute_allocation} directly is impossible. However, given a particular value of $J$, we can compute the UTD ratio $\sigma_{\ff_J}$ and model size $N_{\ff_J}$ that uniquely minimize the total budget $\ff_J(\sigma,N) = \cc_J(\sigma,N) + \delta \cdot \dd_J(\sigma,N)$ (see \Cref{prop:budget}). We run this procedure for $J_1, \dots, J_{m} \in [J_{\min}, J_{\max}]$, as described in \Cref{app:details}.

\begin{wrapfigure}{r}{0.5\textwidth}
    \centering
    \small
    \begin{tikzpicture}[scale=0.75]
  \draw[->] (0,0) -- (6.5,0);
  \draw[->] (0,0) -- (0,5.5);

  \node[below] at (3.25, -1.2) {Budget};
  \node[rotate=90] at (-1, 2.75) {Performance};

  \def\xone{1.0}
  \def\xi{3}
  \def\xm{5.5}

  \pgfmathdeclarefunction{curve}{1}{%
    \pgfmathparse{0.5 + 4*(1 - exp(-0.6*(#1 - 0.5))) + 0.15*sin(90*(#1 - 0.5))}%
  }

  \pgfmathsetmacro\yone{curve(\xone)}
  \pgfmathsetmacro\yi{curve(\xi)}
  \pgfmathsetmacro\ym{curve(\xm)}

  \draw[dashed] (0,\yone) -- (\xone,\yone);
  \draw[dashed] (\xone,0) -- (\xone,\yone);

  \draw[dashed] (0,\yi) -- (\xi,\yi);
  \draw[dashed] (\xi,0) -- (\xi,\yi);

  \draw[dashed] (0,\ym) -- (\xm,\ym);
  \draw[dashed] (\xm,0) -- (\xm,\ym);

  \node[left] at (0,\yone) {$J_1$};
  \node[left] at (0,\yi)   {$J_i$};
  \node[left] at (0,\ym)   {$J_m$};

  \node[below] at (\xone,0) {$\ff_{J_1}$};
  \node[below] at (\xone,-0.6) {$(\sigma^*_{\ff}(\ff_{J_1}),N^*_{\ff}(\ff_{J_1}))$};
  \node[below] at (\xi,0)   {$\ff_{J_i}$};
  \node[below] at (\xm,0)   {$\ff_{J_m}$};
  \node[below] at (\xm,-0.6) {$(\sigma^*_{\ff}(\ff_{J_m}),N^*_{\ff}(\ff_{J_m}))$};

  \draw[thick, domain=0.6:6.2, samples=100, smooth]
    plot(\x, {0.5 + 4*(1 - exp(-0.6*(\x - 0.5))) + 0.15*sin(90*(\x - 0.5))});

  \filldraw[black] (\xone,\yone) circle (2pt);
  \filldraw[black] (\xi,\yi) circle (2pt);
  \filldraw[black] (\xm,\ym) circle (2pt);

\end{tikzpicture}
\vspace{0.1cm}
    \caption{A (hypothetical) depiction of the performance--budget Pareto frontier we implicitly model. For each $J_i$, we compute the budget-minimizing UTD ratio $\sigma^*_{\ff}(\ff_{J_i})$ and model size $N^*_{\ff}(\ff_{J_i})$. We can then discard the $y$-axis, leaving us with a relationship between budget $\ff$ and $(\sigma^*_{\ff}, N^*_{\ff})$.}
    \label{fig:hypothetical-j-budget}
\end{wrapfigure}

We expect that a higher budget will ultimately yield higher performance under the best hyperparameter configuration.  This procedure yields several points $\set{(J_i, \ff_{J_i})}_{i=1}^{m}$ along the Pareto frontier $J \mapsto \min_{\sigma,N} \ff_J(\sigma,N)$, as shown in \Cref{fig:hypothetical-j-budget}. Importantly, we do not directly model this curve, and only need its existence, continuity, and monotonically increasing nature for our fits. Consequently, its inverse is continuous and monotonically increasing. Therefore, for a given budget $\ff_{J_i}$, $1 \le i \le m$, the performance level $J_i$ is optimal for that budget, i.e.\
{
\small
\begin{align}
    &(\sigma^*_{\ff}(\ff_{J_i}), N^*_{\ff}(\ff_{J_i})) \\
    &\quad =\arg \max_{(\sigma, N)}~~~ J\left(\pi_\mathrm{Alg}{(\sigma, N)}\right) ~~\text{s.t.}~ ~~  \cc + \delta \dd  \leq \mathcal{F}_{J_i}. \nonumber
\end{align}
}

This procedure yields $m$ points along the solution to \cref{prob:compute_allocation}, part 2. Since data efficiency is predictable, we can therefore constrain the budget to model $\sigma^*_{\ff}$, $N^*_{\ff}$ as in \Cref{eq:sigma_n_fit}.
\bigskip

\begin{proposition}
    Suppose $(\alpha_J, \beta_J) \in (0,1)$, and fix $\delta > 0$. Consider the unconstrained minimization $\min_{\sigma,N} \ff_J(\sigma,N).$ The optimum $(\sigma^*, N^*)$ is unique and satisfies \Cref{eq:opt-sigma-to-n-relation}.\label{prop:budget}
\end{proposition}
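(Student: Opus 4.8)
The plan is to treat this as an unconstrained smooth minimization over the open domain $(0,\infty)^2$ and follow the same three-step template as \Cref{prop:min-c-given-d}: first establish existence of a global minimizer, then derive the first-order relation between $\sigma^*$ and $N^*$, and finally prove uniqueness by reducing to a one-dimensional problem. Throughout I would drop the subscript $J$ as in the earlier proofs and write $\ff(\sigma,N) = (k\sigma N + \delta)\,\dd(\sigma,N)$ with $\dd(\sigma,N)=\dd^{\min}+a^\alpha\sigma^{-\alpha}+b^\beta N^{-\beta}$.

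For existence I would argue coercivity directly: if $\ff\le M$, then $\ff\ge\delta\dd$ forces $\dd\le M/\delta$, which bounds $\sigma$ and $N$ below away from $0$ (since $a^\alpha\sigma^{-\alpha},\,b^\beta N^{-\beta}\le\dd$), while $k\sigma N\dd^{\min}\le\ff\le M$ bounds the product $\sigma N$ above; together these confine every sublevel set to a compact rectangle in $(0,\infty)^2$. Hence $\ff$ attains a global minimum at an interior point, where $\nabla\ff=0$. Using $\partial_\sigma\dd=-\alpha a^\alpha\sigma^{-\alpha-1}$ and $\partial_N\dd=-\beta b^\beta N^{-\beta-1}$, the stationarity equations read $kN\dd=(k\sigma N+\delta)\,\alpha a^\alpha\sigma^{-\alpha-1}$ and $k\sigma\dd=(k\sigma N+\delta)\,\beta b^\beta N^{-\beta-1}$. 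The key manipulation — mirroring the division of stationarity conditions in \Cref{prop:min-c-given-d}, with the penalty $\delta$ now playing the role of the multiplier $\lambda$ — is to divide these: the common factor $(k\sigma N+\delta)$ cancels exactly, leaving $\beta b^\beta N^{-\beta}=\alpha a^\alpha\sigma^{-\alpha}$, which rearranges to precisely \cref{eq:opt-sigma-to-n-relation}. Thus every interior critical point, in particular the global minimizer, lies on the curve $\Gamma=\set{(\sigma,N): N=(\beta b^\beta/\alpha a^\alpha)^{1/\beta}\sigma^{\alpha/\beta}}$.

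For uniqueness I would restrict $\ff$ to $\Gamma$. Along $\Gamma$ one has $b^\beta N^{-\beta}=(\alpha/\beta)a^\alpha\sigma^{-\alpha}$, so $\dd=\dd^{\min}+(1+\alpha/\beta)a^\alpha\sigma^{-\alpha}$ and $\sigma N=C\sigma^{1+\alpha/\beta}$ for a positive constant $C$; hence $\ff|_\Gamma$ is a linear combination of $\sigma^{p}$, $\sigma^{p-\alpha}$, $\sigma^{-\alpha}$ and a constant, with $p=1+\alpha/\beta$. Differentiating and multiplying by $\sigma^{\alpha+1}>0$ yields an expression of the form $c_1\sigma^{p+\alpha}+c_2\sigma^{p}-c_3$ with $c_1,c_2,c_3>0$. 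Here the hypothesis $\beta<1$ is exactly what guarantees $p-\alpha=1+\alpha(1-\beta)/\beta>0$, so that $c_2>0$ and both surviving power terms have strictly positive exponents. This expression is then strictly increasing in $\sigma$, running from $-c_3<0$ to $+\infty$, so it has a unique zero; consequently $\ff|_\Gamma$ has a single critical point. Since all interior critical points of $\ff$ lie on $\Gamma$ and are a fortiori critical for $\ff|_\Gamma$, the global minimizer is unique.

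I expect the uniqueness step to be the main obstacle. Existence is routine coercivity, and the relation \cref{eq:opt-sigma-to-n-relation} falls out of a single division; but pinning down uniqueness requires the one-dimensional reduction and the careful bookkeeping of exponents that shows the reduced derivative is monotone. This is precisely the place where the assumption $(\alpha_J,\beta_J)\in(0,1)$ enters (through $\beta_J<1$ giving $p-\alpha>0$), whereas the existence and first-order arguments only use positivity of the exponents.
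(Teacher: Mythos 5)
Your proof is correct, and its first two steps are exactly the paper's: the paper likewise proves existence by coercivity (as $\sigma\to 0^+$ or $N\to 0^+$ the term $\delta\dd$ blows up; as $\sigma,N\to\infty$ the term $\cc\ge k\sigma N\dd^{\min}$ blows up), and then observes that $\ff_J$ is precisely the Lagrangian \cref{eq:min-c-given-d-lagrange} with $\lambda$ replaced by $\delta$ and the constant offset removed, so dividing the two stationarity equations cancels the common factor $(k\sigma N+\delta)$ and yields \cref{eq:opt-sigma-to-n-relation}. Where you genuinely diverge is the uniqueness step. In \Cref{prop:min-c-given-d}, uniqueness is obtained by substituting the relation into the \emph{active constraint} $\dd=\dd_0$, which pins down $(\sigma,N)$ explicitly; in the unconstrained problem of \Cref{prop:budget} there is no such constraint, so that final step has no analog, and the paper's proof --- which simply says ``the same logic applies'' --- leaves this implicit. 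Your one-dimensional reduction supplies exactly the missing argument: restrict $\ff$ to the critical curve $\Gamma$, write $\ff|_\Gamma$ as a positive combination of $\sigma^{p}$, $\sigma^{p-\alpha}$, $\sigma^{-\alpha}$ and a constant with $p=1+\alpha/\beta$, and note that $\sigma^{\alpha+1}(\ff|_\Gamma)'$ is strictly increasing from a negative value to $+\infty$, hence has a unique zero; since every critical point of $\ff$ lies on $\Gamma$ and is a fortiori critical for $\ff|_\Gamma$, the minimizer is unique. Two minor remarks: your parenthetical that $\beta<1$ is ``exactly'' what guarantees $p-\alpha>0$ overstates that hypothesis, since $p-\alpha = 1+\alpha(1-\beta)/\beta$ is also positive whenever $\alpha<1$ (so either half of $(\alpha_J,\beta_J)\in(0,1)$ suffices for this particular inequality); and uniqueness also needs $\dd^{\min}>0$ for your coefficient $c_1$ to be strictly positive, which the paper does assume in \cref{eq:data-formula}. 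Net comparison: the paper buys brevity by deferring to \Cref{prop:min-c-given-d} at the cost of a step that does not literally transfer, while your proof buys a self-contained and fully rigorous verification of the uniqueness claim.
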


\begin{proof}
    As either $\sigma \to 0^+$ or $N \to 0^+$, the term $\delta \dd \to \infty$. As $\sigma,N \to \infty$, the term $\cc \ge k \sigma N \dd^{\min} \to \infty$. By the same logic as the proof of \Cref{prop:min-d-given-c}, a global minimizer exists.

    Then, the objective is exactly the same as \Cref{eq:min-c-given-d-lagrange}, with $\lambda$ replaced by $\delta$, and the $\dd_0$ constant offset removed. Thus, the same logic in the proof of \Cref{prop:min-c-given-d} applies, and we obtain the same relation \Cref{eq:opt-sigma-to-n-relation}.
\end{proof}

\section{Experiment Details}
\label{app:details}

For our experiments, we use a total of 17 tasks from two benchmarks (DeepMind Control \citep{tunyasuvunakool2020} and HumanoidBench \citep{sferrazza2024humanoidbench}), listed in \cref{table:environments}, with the BRO algorithm and architecture \citep{nauman2024bigger}. We additionally use 2 tasks from HumanoidBench (\texttt{h1-crawl}, \texttt{h1-stand}) with SimbaV2 \citep{lee2025hyperspherical}. As described in \cref{app:fitting-details}, we normalize our returns to $[0, 1000]$; optimal $\pi$ returns are pre-normalized. For HumanoidBench, we report the returns listed by authors as the ``success bar,'' even though it is possible to achieve a higher return. Our experiments fit $\dd_J(\sigma,N)$ for 20 normalized performance thresholds $J$, spaced uniformly between $J_{\min}$ and $J_{\max}$, inclusive; 20 is an arbitrary choice that we made so as to obtain useful insights about our method while not overwhelming the reader.

\subsection{Hyperparameter Sweep Details}\label{app:hparam-sweep}

Out of the 17 tasks, we run a full 3-dimensional grid search $B \times N \times \sigma$ on 4 of them: 3 tasks from HumanoidBench and Humanoid-Stand from DMC. Due to the computational requirements of running a large grid search for obtaining the scaling fits, we use a constant network depth (2 BroNet blocks \citep{nauman2024bigger}) and learning rate (3e-4) throughout our experiments and run at least 5 random seeds per configuration. From these experiments, we follow the procedure described in \cref{app:fitting-details} to estimate a batch size rule (\cref{fig:complete-batch-size-fit}). A superset of the configurations run in this three-dimensional grid search are listed as \textsc{original} in \cref{tab:3d-sweeps}. Out of the listed batch sizes in \cref{tab:3d-sweeps}, we run at least 4 consecutive values of batch sizes for each $(\sigma, N)$, such that the empirically most performant batch size is neither the minimum nor maximum of the range. Since a full 3D-sweep is expensive, this heuristic enables us to effectively reduce the total number of experiments we need to run to estimate batch size fits. For instance, for small model sizes and low UTD values on \texttt{h1-crawl}, this amounts to simply running batch sizes up to 64, since performance decreases monotonically as the batch size increases.

Based on these runs, we set $J_{\min}$ and $J_{\max}$, as described in the following subsection. This enables us to establish a batch size rule (\Cref{eq:batch-size-fit}), where the ``best'' batch size uses the least amount of data to achieve performance $J_{\max}$. To evaluate our batch size rule $B^*(\sigma, N)$, we run a 2-dimensional sweep using our proposed batch sizes on \textsc{interpolated} and \textsc{extrapolated} UTD ratios $\sigma$ and model sizes $N$. The configurations are listed in \cref{tab:batch-size-interpolation-extrapolation}. Note that we did not study extrapolation of model size on \texttt{humanoid-stand}, since we already noticed that a width of 2048 performed worse than a model width of 1024 at low UTD values.

\begin{table}
\centering
\small
\caption{Tasks used in presented experiments.}
\vspace{0.1cm}
\label{table:environments}
 \begin{tabular}{l l c c c c} 
\toprule
 \textbf{Domain} & \textbf{Task} & \textbf{Optimal $\pi$ Returns} & \textbf{$J_{\min}$} & \textbf{$J_{\max}$} & $\delta$ \\
 \midrule
HumanoidBench & \texttt{h1-crawl} & 700 & 450 & 780 & 2e12 \\
 & \texttt{h1-pole} & 700 & 300 & 680 & 5e11 \\
 & \texttt{h1-stand} & 800 & 200 & 660 & 5e11 \\

\midrule 

DMC & \texttt{humanoid-stand} & 1000 & 300 & 850 & 5e10 \\

 \midrule

DMC-Medium & \texttt{acrobot-swingup} & 1000 & 150 & 400 & 1e11 \\
& \texttt{cheetah-run} & 1000 & 400 & 750 & 1e11 \\
& \texttt{finger-turn-hard} & 1000 & 400 & 900 & 1e11 \\
& \texttt{fish-swim} & 1000 & 200 & 710 & 1e11 \\
& \texttt{hopper-hop} & 1000 & 150 & 320 & 1e11 \\
& \texttt{quadruped-run} & 1000 & 200 & 790 & 1e11 \\
& \texttt{walker-run} & 1000 & 350 & 730 & 1e11 \\

\midrule
DMC-Hard & \texttt{dog-run} & 1000 & 100 & 270 & 1e11 \\
& \texttt{dog-trot} & 1000 & 100 & 580 & 1e11 \\
& \texttt{dog-stand} & 1000 & 100 & 910 & 1e11 \\
& \texttt{dog-walk} & 1000 & 100 & 860 & 1e11 \\
& \texttt{humanoid-run} & 1000 & 75 & 190 & 1e11 \\
& \texttt{humanoid-walk} & 1000 & 200 & 650 & 1e11 \\
\bottomrule
 \end{tabular}
\end{table}

\begin{table}
  \centering
  \small
  \caption{Configurations for \textsc{original} 3-dimensional grid searches.}
\vspace{0.1cm}
  \begin{tabular}{lccc}
      \toprule
      \textbf{Task} & \textbf{UTD ratio $\sigma$} & \textbf{Critic width} & \textbf{Possible batch sizes} \\
      \midrule 
      \texttt{h1-crawl} & 1, 2, 4, 8 & 256, 512, 1024, 2048 & 16, 32, 64, 128, 256, 512, 1024, 2048 \\
      \texttt{h1-pole} & 1, 2, 4, 8 & 256, 512, 1024, 2048 & 64, 128, 256, 512, 1024, 2048 \\
      \texttt{h1-stand} & 1, 2, 4, 8 & 256, 512, 1024, 2048 & 128, 256, 512, 1024, 2048, 4096 \\
      \texttt{humanoid-stand} & 1, 2, 4, 8 & 128, 256, 512, 1024, 2048 & 64, 128, 256, 512, 1024 \\
      \bottomrule
  \end{tabular}
  \label{tab:3d-sweeps}
\end{table}
\begin{table}
  \centering
  \small
  \caption{Configurations for \textsc{interpolated} and \textsc{extrapolated}.}
\vspace{0.1cm}
  \begin{tabular}{lccc}
      \toprule
      \textbf{Dataset} & \textbf{Task} & \textbf{UTD ratio $\sigma$} & \textbf{Critic width} \\
      \midrule 
      \textsc{Interpolated} & \texttt{h1-crawl} & 3, 6, 12 & 368, 720, 1456 \\
      & \texttt{h1-pole} & 3, 6, 12 & 368, 720, 1456 \\
      & \texttt{h1-stand} & 3, 6, 12 & 368, 720, 1456 \\
      & \texttt{humanoid-stand} & 3, 6, 12 & 176, 368, 720, 1456 \\
      \midrule
      $N$ \textsc{Extrapolated} & \texttt{h1-crawl} & 1, 2, 4, 8, 16 & 4096 \\
      & \texttt{h1-pole} & 1, 2, 4, 8, 16 & 4096 \\
      & \texttt{h1-stand} & 1, 2, 4, 8, 16 & 4096 \\
      \midrule
      $\sigma$ \textsc{Extrapolated} & \texttt{h1-crawl} & 16 & 256, 512, 1024, 2048 \\
      & \texttt{h1-pole} & 16 & 256, 512, 1024, 2048 \\
      & \texttt{h1-stand} & 16 & 256, 512, 1024, 2048 \\
      & \texttt{humanoid-stand} & 16 & 128, 256, 512, 1024, 2048 \\
      \bottomrule
  \end{tabular}
  \label{tab:batch-size-interpolation-extrapolation}
\end{table}

Using various combinations of these measurements, we can fit data efficiency (\Cref{eq:data_efficiency_relation}). In\Cref{sec:empirical-max-data-efficiency}, we evaluate the absolute relative error of the fit prediction with respect to the ground truth data efficiency on each of the datasets, when the fit solely uses \textsc{original} data and is evaluated on \textsc{original}, \textsc{interpolated}, and \textsc{extrapolated} data. Our final $\dd$, as described elsewhere in the paper, is fitted on all three datasets, \textsc{original}, \textsc{interpolated}, and \textsc{extrapolated}.

The other 13 tasks are from DMC, which we group as DMC-medium and DMC-hard following \citet{nauman2024bigger}. For obtaining these fits, we borrow the data directly from \citet{nauman2024bigger}: the authors of this prior work ran 10 random seeds at a constant batch size 128 and learning rate 3e-4 on several  UTD (1, 2, 5, 10, 15) and model size (Table 7 in \citep{nauman2024bigger}) configurations. Due to the lack of appropriately set batch size in these experiments borrowed from prior work, the data does not accurately represent the best achievable data efficiency, and in some cases increasing UTD or model size worsens performance. In these cases, fitting $\dd$ per task can result in instability, where the exponents $\alpha_J$, $\beta_J$ are driven to 0. To counteract this, we use two approaches:
\begin{enumerate}
\item \textbf{Share parameters $\alpha_J$, $\beta_J$ of the fit over tasks} as follows:
\begin{equation} 
\dd_J^{\text{env}}(\sigma, N) \approx \dd_J^{\text{env} \min} + \pfrac{a_J^{\text{env}}}{\sigma}^{\alpha_J} + \pfrac{b_J^{\text{env}}}{N}^{\beta_J}.
\end{equation}
Conceptually, this forces the slope of the compute-optimal line prescribed by \cref{eq:opt-sigma-to-n-relation} to be shared across tasks within the same domain, but allows for a different intercept. This results in variance reduction in the fitting procedure.

\item \textbf{Average over multiple tasks according to the procedure in \Cref{app:fitting-details}.} We present these fits in the main paper to improve clarity and reduce clutter (\Cref{fig:iso_data}). This method essentially treats the benchmark as a single task and fits an average amount of data required to achieve some performance.
\end{enumerate}

\textbf{Selecting experimental constants.} To select $J_{\max}$, we first group by the UTD ratio $\sigma$ and model size $N$. Out of each group, we select the run with the highest final Monte-Carlo returns (over all batch sizes). Over these runs, we set $J_{\max}$ as the highest return threshold that 80\% of the runs reach. 

We heuristically select $J_{\min}$ as the lowest return threshold such that configurations that eventually reach performance $J_{\max}$ ``look sufficiently different,'' i.e.\ there are configurations with batch sizes $B_1$, $B_2$ such that their confidence intervals $[\dd_{J_{\min}} - \sigma_{J_{\min}}, \dd_{J_{\min}} +\sigma_{J_{\min}}]$ do not overlap. Here $\dd$ denotes the true (not fitted) amount of data required to reach the performance level, and $\sigma$ is the standard deviation given by the procedure described in \Cref{app:fitting-details}.

We select $\delta$ in the budget formula $\ff = \cc + \delta \dd$ so that $\delta \dd$ represents the real-time cost of environment steps, as measured in FLOPs. Our procedure is as follows:
\begin{enumerate}
    \item Pick the run that achieves performance $J_{\max}$ within the lowest wall-clock time.
    \item Based on timing statistics from this run, set
    \begin{equation}
        \delta \approx \frac{\text{FLOPs} / \text{grad steps} \times \text{grad steps} / \text{sec}}{\text{env steps} / \text{sec}}.
    \end{equation}
\end{enumerate}
    The resulting expression for $\ff$ is therefore a proxy for wall clock time.

\subsection{Detailed Explanations for How to Obtain Main Paper Figures}

\algrenewcommand{\algorithmiccomment}[1]{\hfill\footnotesize \textcolor{gray}{\textit{// #1}}\small}

\definecolor{myblue}{HTML}{3b72a8}
\definecolor{mygreen}{HTML}{6b7510}
\newcommand{\validationline}[1]{\textcolor{myblue}{#1}}
\newcommand{\passiveline}[1]{\textcolor{mygreen}{#1}}

\begin{algorithm}[t]
\small
\caption{Training loop drop-ins for any value-based algorithm}
\label{algo:dropin}
\begin{algorithmic}[1]
\State Initialize environment $p$
\State Initialize replay buffer $\mathcal P$
\State Initialize parameter vectors $\theta$ (critic), $\bar\theta$ (target critic), $\phi$ (actor)
\validationline{
    \State Initialize validation environment $p^{\text{val}}$
    \State Initialize validation replay buffer $\mathcal P^{\text{val}}$ \Comment{size $|\mathcal P|/k$}}
\passiveline{
    \State Initialize passive critic parameter vector $\theta^{\text{passive}}$ \Comment{possibly different size than $\theta$}}
\For{each iteration}
    \For{each environment step}
        \State $a_t \sim \pi_\phi(a_t|s_t)$
        \State $s_{t+1} \sim p(s_{t+1}|s_t, a_t)$
        \State $\mathcal{P} \leftarrow \mathcal{P} \cup \{(s_t, a_t, r(s_t, a_t), s_{t+1})\}$
    \color{myblue}\If{$t \bmod k = 0$} \Comment{do validation less frequently to avoid overhead}
        \State $a_t^{\text{val}} \sim \pi_\phi(a_t^{\text{val}}|s_t^{\text{val}})$
        \State $s_{t+1}^{\text{val}} \sim p^{\text{val}}(s_{t+1}^{\text{val}}|s_t^{\text{val}}, a_t^{\text{val}})$
        \State $\mathcal{P}^{\text{val}} \leftarrow \mathcal{P}^{\text{val}} \cup \big\{(s_t^{\text{val}}, a_t^{\text{val}}, r(s_t^{\text{val}}, a_t^{\text{val}}), s_{t+1}^{\text{val}})\big\}$
    \EndIf \color{black}
\EndFor
    \For{each update}
        \State Sample training batch $x \sim \mathcal P$
        \For{$\sigma$ gradient steps}
            \State $\theta \leftarrow \theta - \eta_{\text{critic}} {\nabla}_{\theta} \mathcal L_{\text{critic}}(x; \theta, \bar\theta)$
            \passiveline{\State $\theta^{\text{passive}} \gets \theta^{\text{passive}} - \eta_{\text{critic}} {\nabla}_{\theta^{\text{passive}}} \mathcal L_{\text{critic}}(x; \theta^{\text{passive}}, \bar\theta)$}
            \State $\phi \leftarrow \phi - \eta_{\text{actor}} {\nabla}_\phi \mathcal L_{\text{actor}}(x; \theta, \phi)$
            \State $\bar{\theta} \leftarrow \tau \theta + (1 - \tau)\bar{\theta}$
        \EndFor
        \If{logging}
        \color{myblue}\State Sample validation batch $x^{\text{val}} \sim \mathcal P^{\text{val}}$
        \State $\mathcal L_{\text{critic}}^{\text{val}} \gets \mathcal L_{\text{critic}}(x^{\text{val}}; \theta, \bar\theta)$ \color{black}
        \passiveline{\State $\mathcal L_{\text{critic}}^{\text{passive}} \gets \mathcal L_{\text{critic}}(x; \theta^{\text{passive}}, \bar\theta)$}
        \EndIf \color{black}
    \EndFor
\EndFor
\end{algorithmic}
\end{algorithm}

\textbf{\Cref{fig:train-validation-loss}.} Standard off-policy online RL trains on data sampled from a replay buffer, which is regularly augmented with data from the environment. We construct a held-out dataset of transitions following the same distribution as the training replay buffer. To do so, we create a validation environment, which is identical to the training environment with a different random seed, and a corresponding validation replay buffer. This allows us to measure the validation TD-error, i.e.\ the TD-error of the critic against the target on data sampled from the validation replay buffer. Algorithmic details are described in \Cref{algo:dropin} in \color{myblue}blue\color{black}.

\textbf{\Cref{fig:side-critic}.} The passive critic regresses onto the target produced by the main critic, and is trained using a similar procedure as the main critic. We report the TD-error of the passive critic against the TD-target on validation data. Algorithmic details are described in \Cref{algo:dropin} in \color{mygreen}green\color{black}.

\textbf{\Cref{fig:crawl-batch-size-fit}.} We describe our batch size fitting procedure in \cref{app:fitting-details}.

\textbf{\Cref{fig:iso_data}.} Circles represent the true data efficiencies on our \textsc{original} UTD ratios and model sizes. Using this data, we fit a batch size rule $B^*(\sigma,N)$ (\cref{eq:batch-size-fit}), and run experiments using our batch size rule on \textsc{interpolated} and \textsc{extrapolated} UTD ratios and model sizes. Then, we fit data efficiency $\dd_{J_{\max}}(\sigma,N)$ (\cref{eq:data_efficiency_relation}) on all of the data, where $J_{\max}$ is listed in \Cref{table:environments}. The iso-data contours are predictions from the fit, and the log-log-line containing compute-optimal points follows the formula in \cref{eq:compute_solution}. 

\textbf{\Cref{fig:budget-d-c}.} We fit $\dd_{J_i}$ independently for each $J_i$. Following \cref{app:budget-theory}, we numerically solve for the optimum $(\sigma^*_{\ff}(\ff_{J_i}), N^*_{\ff}(\ff_{J_i}))$. We plot $\dd$ and $\cc$ for these optima against $\ff_{J_i}$. Out of these $m=20$ points, we fit a line to the bottom 15 of them and mark the top 5 as budget extrapolation. We record $R^2$ between the log-linear fit and log-$y$ values over all 20 points.

\textbf{\Cref{fig:budget-sigma-n}.} Same method as \Cref{fig:budget-d-c}.

\section{Additional Details on the Fitting Procedure}\label{app:fitting-details}

\textbf{Preprocessing return values.} Our fits require estimates of the data and compute needed by a given run to reach a target performance level. The BRO algorithm \citep{nauman2024bigger} employs full-parameter resets as a form of plasticity regularization \citep{nikishin2022primacy}, reinitializing the agent every 2.5M gradient steps to encourage better exploration and long-term learning. However, these resets induce abrupt drops in Monte Carlo (MC) returns, which do not reflect a true degradation in learning quality. Instead, returns typically recover quickly and often surpass pre-reset levels. Including these transient dips in the MC curve would artificially inflate the estimated data and compute required to reach a given performance threshold. To obtain a cleaner, more consistent signal of learning progress, we therefore remove post-reset return drops from our analysis (\Cref{fig:isotonic-demo}, left). This allows us to more accurately model the intrinsic data efficiency of the algorithm, independent of reset-induced variance.

Following \cite{rybkin2025valuebaseddeeprlscales}, we then process the return values with isotonic regression \citep{barlow1972isotonic}, which transforms the return values to the most aligned nondecreasing sequence of values that can then be used to estimate $\dd_J$ (\Cref{fig:isotonic-demo}, middle and right). This procedure enables us to fit the minimum number of samples needed to reach a given performance level, regardless of whether the performance drops later in training. It also reduces variance compared to the naive approach of measuring the data efficiency directly on each random seed.

\begin{figure}
    \centering
    \includegraphics[width=0.32\linewidth]{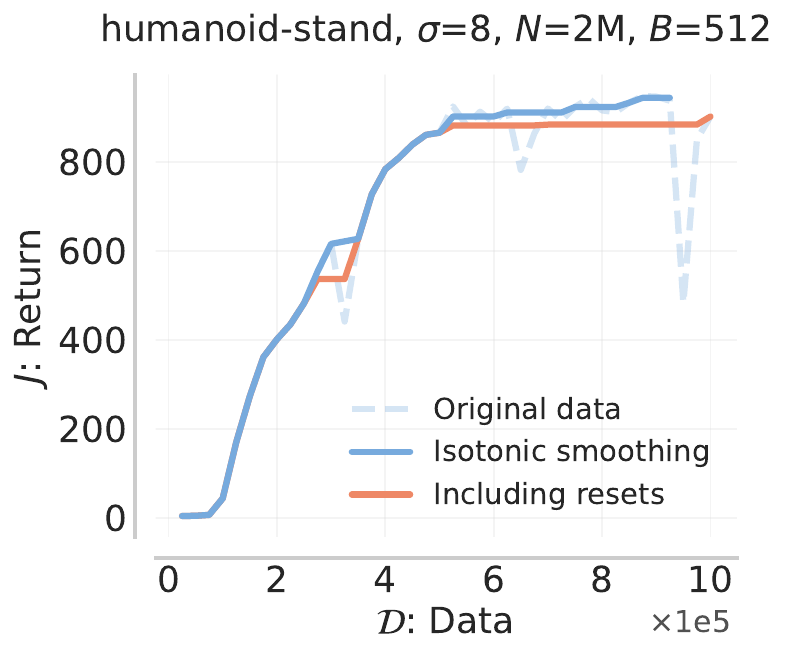}
    \includegraphics[width=0.32\linewidth]{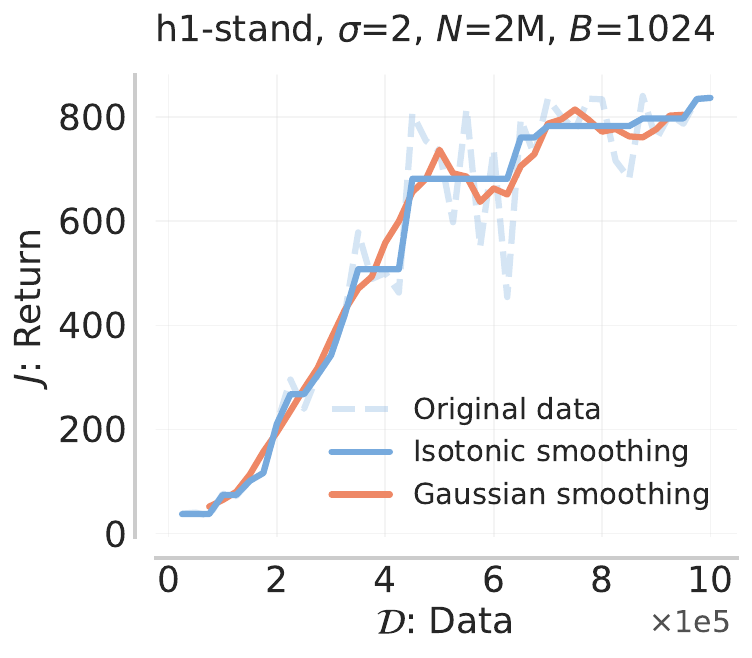} 
    \includegraphics[width=0.32\linewidth]{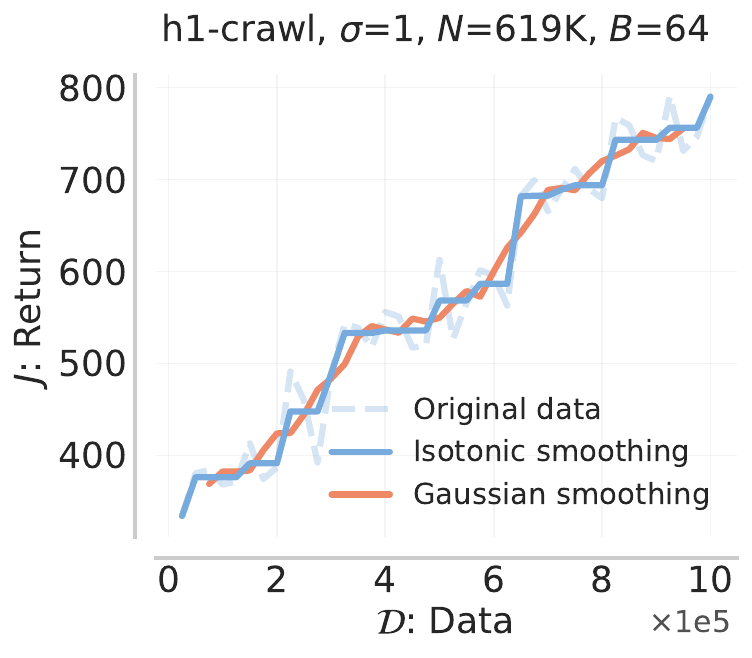}
    \caption{A demonstration of our MC returns preprocessing. \textbf{Left:} Full-parameter resets introduce variance in returns; we remove the dips before running isotonic regression. \textbf{Middle:} Gaussian smoothing can lead to under-smoothing the returns, making data efficiency more difficult to fit. \textbf{Right:} Gaussian smoothing can lead to over-smoothing the returns, e.g.\ at 625K env steps, Gaussian-smoothed returns are higher than the maximum returns achieved up to that point.}
    \label{fig:isotonic-demo}
\end{figure}

\textbf{Uncertainty-adjusted optimal batch size.} We follow \citet{rybkin2025valuebaseddeeprlscales} to compute uncertainty-adjusted optimal batch sizes, since the precision of the fit $B$ would  otherwise be limited by the granularity of our grid search. We run $K=100$ bootstrap estimates by sampling $n$ random seeds with replacement out of the original $n$ random seeds, applying isotonic regression, and selecting the optimal batch size $B_k$ by data efficiency to threshold $J_{\max}$. Since these batch sizes can span multiple orders of magnitude (\Cref{tab:3d-sweeps}), we report the mean of these bootstrapped estimates in log space as the ``best'' batch size:
\begin{equation}
    B_{\text{bootstrap}} = \exp\parens{\frac 1K \sum_{k=1}^K \log B_k}.\label{eq:bootstrap-batch-size}
\end{equation}
Additionally, considering the set of bootstrapped data efficiencies to reach a given performance threshold $J$, this procedure also yields an estimate of the standard deviation of the data efficiency.

\textbf{Fitting procedure.} Prior work fits the data using a brute force grid search followed by LBFG-S \citep{hoffmann2022training,rybkin2025valuebaseddeeprlscales}. Empirically, we found that the quality of the resulting fit is highly dependent on the initial point found by brute force, and the bounds of the brute force grid must be tuned per fit. To resolve these issues, we use the following procedure:
\begin{enumerate}
    \item Normalize the inputs $x$ to $[\ell, h] = [0.5, 2]$ in log space via
    \begin{align}
        s &= \frac{\log(\max x) - \log(\min x)}{\log h - \log \ell} \\
        m &= \log(\min x) - s \log \ell \\
        x' &= \exp\pfrac{\log x - m}s,
    \end{align}
    and normalize the output $y$ by dividing by the mean, $y' = y / \ol y$. This results in a more numerically stable fitting procedure, since $\sigma \in$ [1, 20] and $N \in$ [1e5, 2e8] are otherwise on very different scales.

    \item Define $\theta' = \mathrm{softplus}(\theta) = \log(1+\exp(\theta))$
    for all ``raw'' parameters $\theta \in \mathbb R$. Softplus is a smooth approximation to ReLU and forces fit parameters to be positive, and empirically tends to improve fitting stability. For example, to fit data efficiency, we optimize over $[\theta_{\dd^{\min}}, \theta_a, \theta_b, \theta_\alpha, \theta_\beta] \in \mathbb R^5$, and extract e.g.\ $\dd^{\min} = \mathrm{softplus}(\theta_{\dd^{\min}})$.

    \item Use LBFG-S to optimize over raw parameters. We use MSE in log space as the objective: $\mathcal L(y, \hat y) = (\log y - \log \hat y)^2$.

    \item Apply softplus and correct the parameters for normalization.
\end{enumerate}
Empirically, we find that initializing all raw parameters as zero generally works well.

\textbf{Aggregate data efficiency.} In \Cref{fig:iso_data}, we show data efficiency fits aggregated over multiple tasks. We follow \citet{rybkin2025valuebaseddeeprlscales}: first, normalize the data efficiency $\dd_J^{\text{env}}$ by intra-environment medians $\dd_J^{\text{env med}} = \mathrm{median}\set{\dd_J^{\text{env}}(\sigma, N)}_{\sigma, N}$. To interpret the normalized data efficiency on the same scale as the original data, we write $\dd_J^{\text{med}} = \mathrm{median} \set{\dd_J^{\text{env med}}}_{\text{env}}$, so that $\dd_J^{\text{env norm}} := \dd_J^{\text{env}} \cdot \frac{\dd_J^{\text{med}}}{\dd_J^{\text{env med}}}$. Finally, we fit all of the normalized data together using the same functional form.

\section{Additional Experimental Results}
\label{app:experiments}

\subsection{Batch Size Fits $\tilde B(\sigma, N)$}\label{app:batch-size-fits}

Refer to \Cref{fig:complete-batch-size-fit}.
{
\small
\begin{align}
\begin{split}
    \texttt{h1-crawl} &\qquad \dfrac{1680.64}{\sigma^{0.30} + \mathrm{6.01e7} \, \sigma^{0.30} N^{-1.12}} \\
    \texttt{h1-pole} &\qquad \dfrac{4112.98}{\sigma^{0.24} + \mathrm{1.45e1} \, \sigma^{0.24} N^{-0.07}} \\
    \texttt{h1-stand} &\qquad \dfrac{1458.10}{\sigma^{0.27} + \mathrm{1.33e74} \, \sigma^{0.27} N^{-12.71}} \\
    \texttt{humanoid-stand} &\qquad \dfrac{1160.40}{\sigma^{0.49} + \mathrm{2.77e2} \, \sigma^{0.49} N^{-0.38}}
\end{split}
\end{align}
}

\begin{figure}
    \centering
    \small
    Grouped by UTD ratio $\sigma$
    \includegraphics[width=\linewidth]{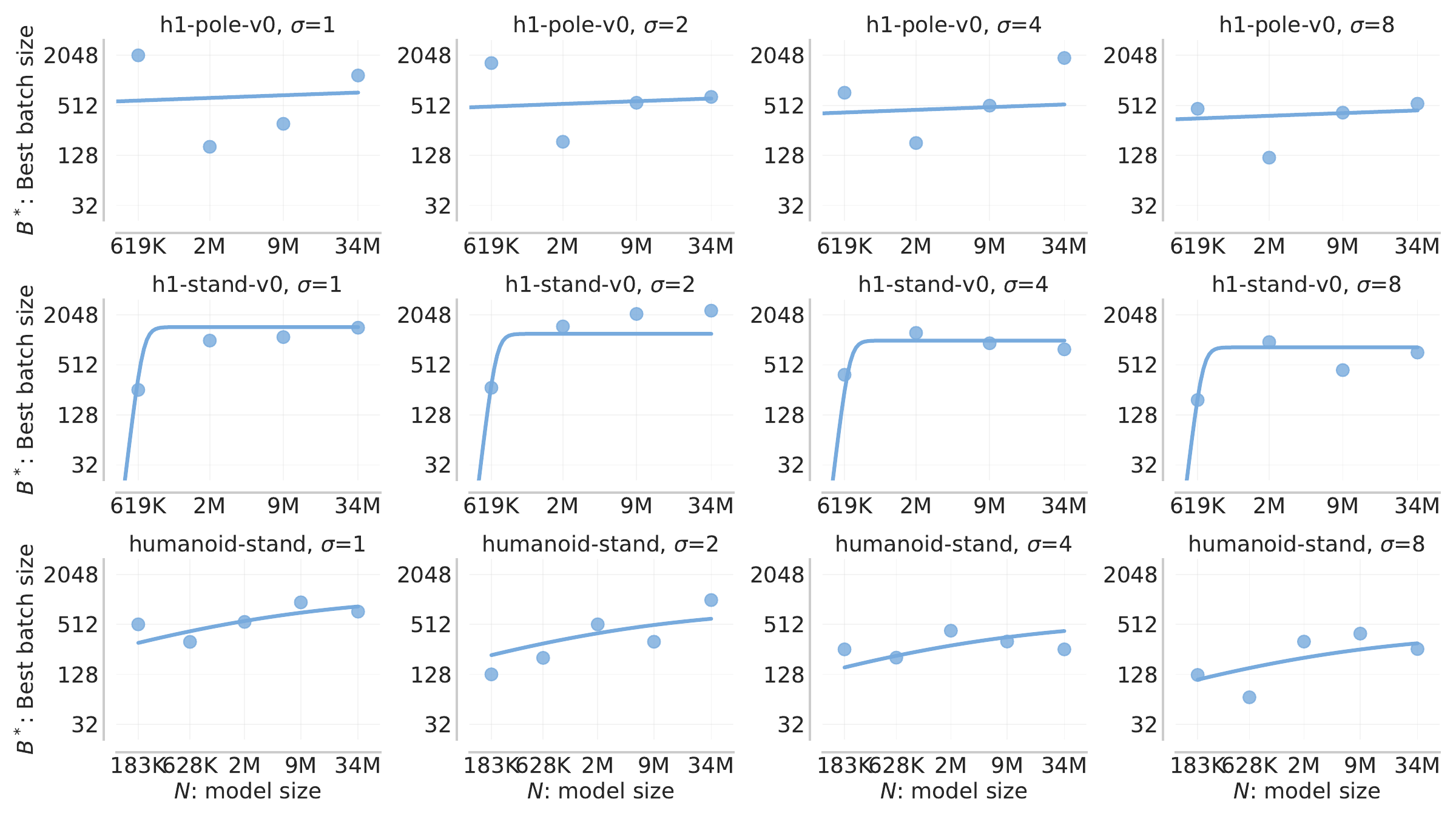}

    \vspace{1em}
    
    Grouped by model size $N$
    \includegraphics[width=\linewidth]{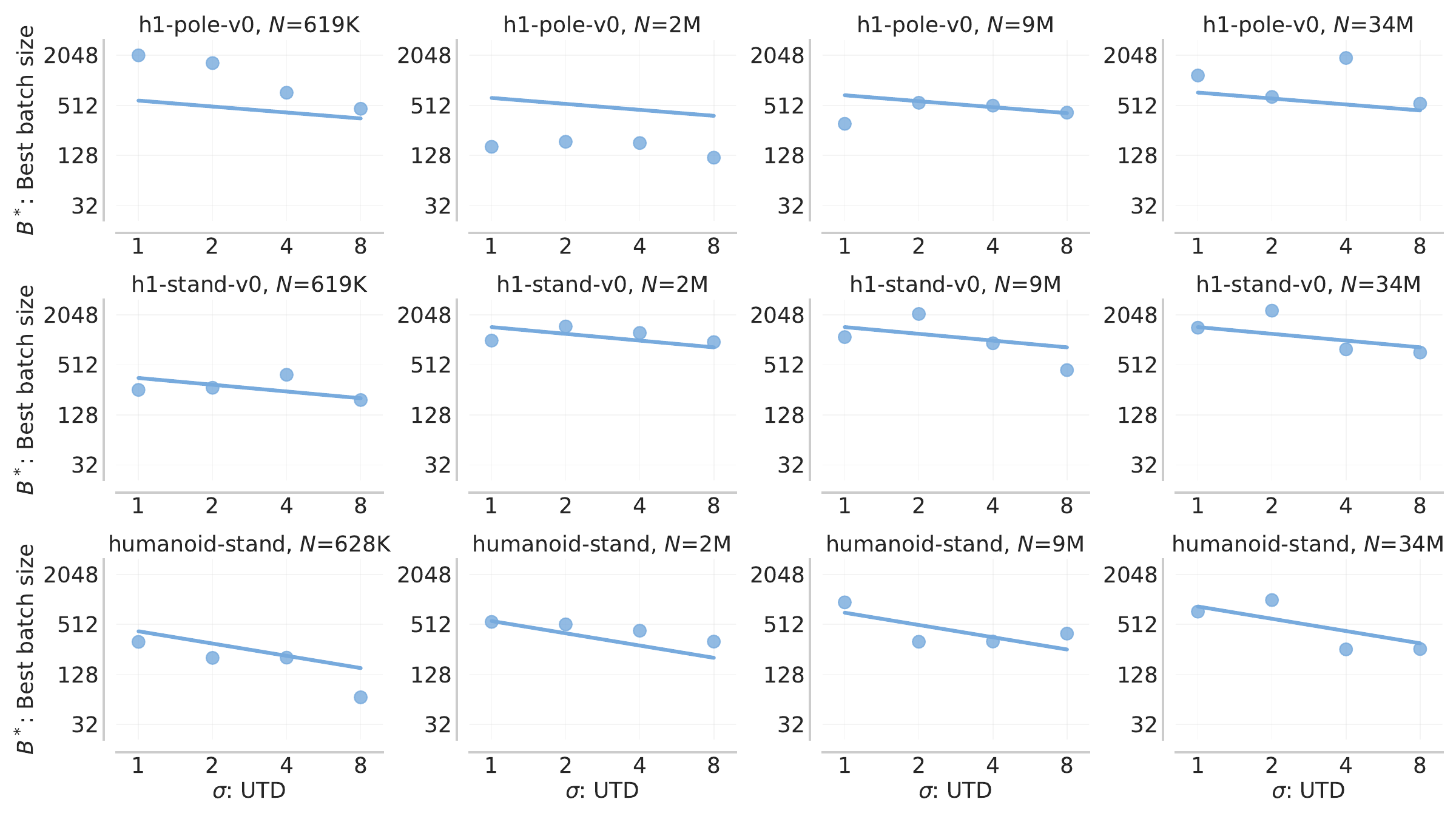}
    \vspace{0.2em}
    \caption{Two-dimensional batch size fit $\tilde B(\sigma, N)$ grouped by $\sigma$ and $N$, as a completion to \Cref{fig:crawl-batch-size-fit}, for the BRO algorithm and architecture \citep{nauman2024bigger}}
    \label{fig:complete-batch-size-fit}
\end{figure}

\begin{figure}
    \centering
    \small
    Grouped by UTD ratio $\sigma$
    \includegraphics[width=\linewidth]{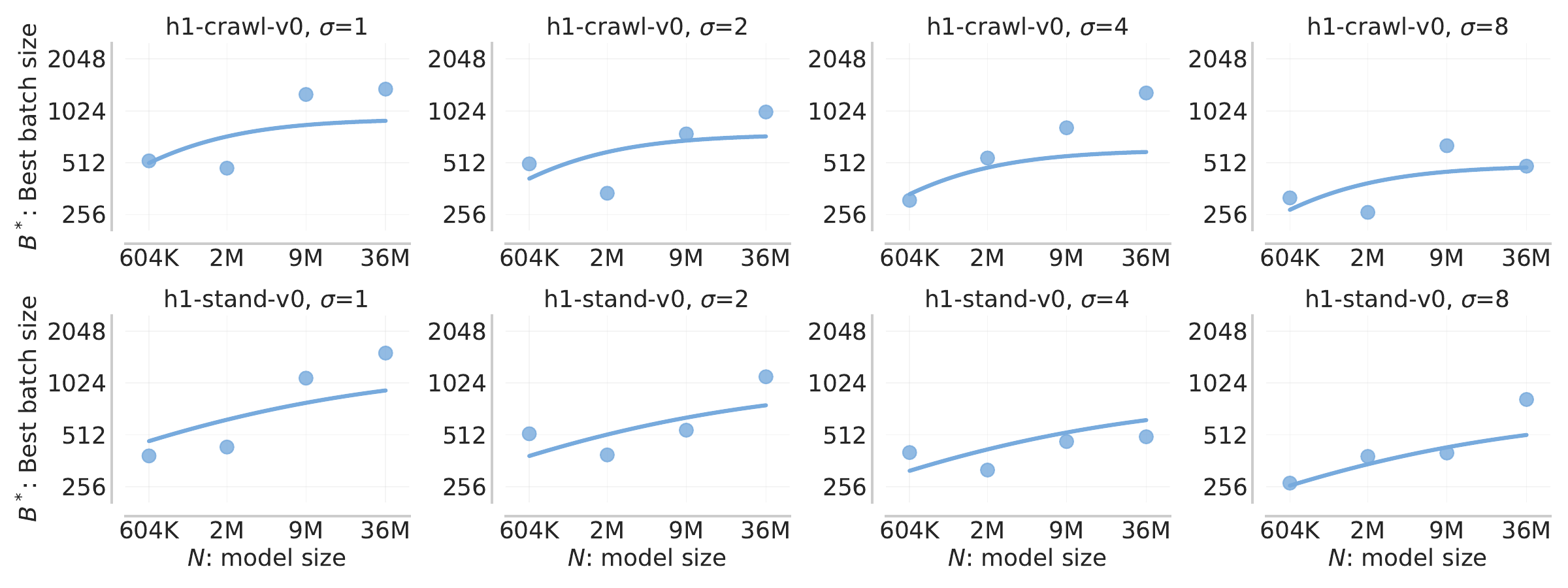}

    \vspace{1em}
    
    Grouped by model size $N$
    \includegraphics[width=\linewidth]{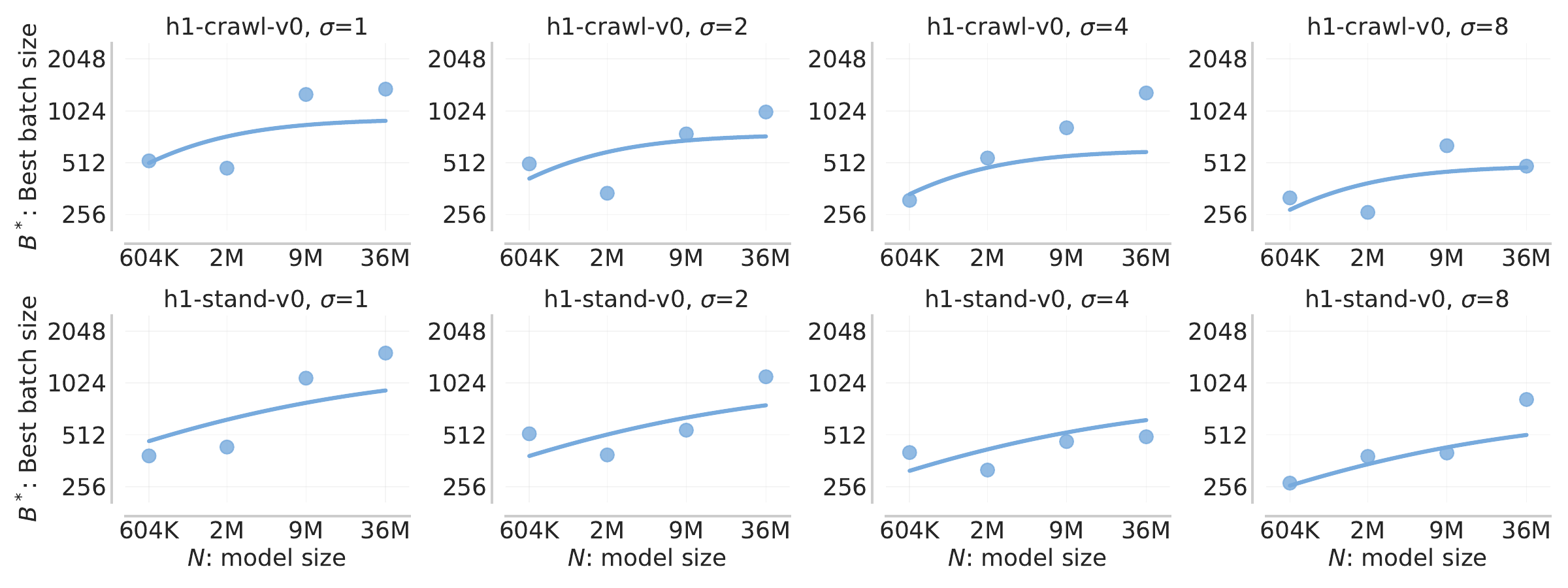}
    \vspace{0.2em}
    \caption{Analogous to \Cref{fig:complete-batch-size-fit}, with the SimbaV2 architecture \citep{lee2025hyperspherical}}
    \label{fig:complete-batch-size-fit-simba}
\end{figure}

\subsection{Learning Rate Sensitivity Analysis}\label{app:lr-sensitivity}

A natural question is whether learning rate affects performance in the compute-optimal regime or not. We found that there is a range of ``reasonable'' learning rates, which empirically always contains our ``default'' value of 3e-4. \emph{\textbf{Crucially, this is the case for all model sizes and UTD ratio,}} meaning that a practitioner can get away without setting learning rate carefully for a compute-optimal run as long as they utilize a default value.

\textbf{Grid search regime.} We run hyperparameter sweeps over (model size, UTD, learning rate) and (model size, batch size, learning rate), where $\text{lr} \in \set{\text{1e-4, 2e-4, 3e-4, 6e-4}}$. In this regime, we found that the empirically optimal learning rate only took on values $\set{\text{2e-4}, \text{3e-4}}$. We report the data efficiency ratio between the empirically optimal and default learning rates in \Cref{tab:learning_rate_sensitivity_grid}.   Since our default learning rate is in the range $[\text{lr}^*, 1.5\, \text{lr}^*]$, the overall effect on performance is minimal.

\begin{table}
    \centering
    \caption{Learning rate sensitivity over grid search.}
    \begin{tabular}{cc}
    \toprule
    Learning rate range & Data efficiency ratio \\
    \midrule
    $[1/4\, \text{lr}^*, 1/2\, \text{lr}^*]$ & 1.39 \\
    $[1/2\, \text{lr}^*, 2/3\, \text{lr}^*]$ & 1.35 \\
    $[2/3\, \text{lr}^*, \text{lr}^*]$ & 1.04 \\
    $[\text{lr}^*, 1.5\, \text{lr}^*]$ & 1.03 \\
    $[1.5\, \text{lr}^*, 2\, \text{lr}^*]$ & 1.18 \\
    $[2\, \text{lr}^*, 4\, \text{lr}^*]$ & 1.27 \\
    \bottomrule
    \end{tabular}
    \label{tab:learning_rate_sensitivity_grid}
\end{table}

Although we observe smaller relative variation in the best learning rate over UTD and model sizes compared to batch size, we find empirically that the best learning rate (i) decreases with increasing model size, correlation: -0.75, (ii) decreases with increasing UTD, correlation: -0.46, (iii) increases with increasing batch size, correlation: 0.42. With simple log-linear fits, we obtain a relative error of 37.5\%:
\begin{align*}
\begin{split}
\texttt{h1-crawl} &\qquad \text{lr}^* \sim 4.4827\text{e-}4 \cdot (N/2.3\text{e}6)^{-0.3112} \cdot \sigma^{-0.1273} \cdot (B/512)^{0.3709} \\
\texttt{h1-pole} &\qquad \text{lr}^* \sim 2.4727\text{e-}4 \cdot (N/2.3\text{e}6)^{-0.2472} \cdot \sigma^{-0.2392} \cdot (B/512)^{0.2701} \\
\end{split}
\end{align*}
Despite the high relative error, we observe that data efficiency is similar within an interval of ``reasonable'' learning rates.

\textbf{Compute-optimal regime.} For each task and compute-optimal setting $\sigma^*(\cc_0), N^*(\cc_0)$ with fitted batch size $\tilde B(\sigma^*(\cc_0), N^*(\cc_0))$, we ran a sweep of learning rates over [1e-4, 2e-4, 3e-4, 4e-4, 5e-4]. Following \Cref{eq:bootstrap-batch-size}, we compute the bootstrap-optimal learning rate for each setting, then round to the nearest of the five learning rates. In \Cref{tab:lr_data_efficiency_ratio}, we show that data efficiency is not improved significantly when using the rounded bootstrap-optimal learning rate, compared to the ``default'' learning rate 3e-4. The table shows averages over compute budgets.

\begin{table}
    \centering
    \caption{Bootstrap-optimal vs.\ default learning rates over compute-optimal $(\sigma, N, B)$.}
    \begin{tabular}{cc}
    \toprule
    Environment & Data efficiency ratio \\
    \midrule
    \texttt{h1-crawl} & 1.0118 \\
    \texttt{h1-pole} & 1.0000 \\
    \texttt{h1-stand} & 1.0000 \\
    \texttt{humanoid-stand} & 0.9504 \\
    \bottomrule
    \end{tabular}
    \label{tab:lr_data_efficiency_ratio}
\end{table}

\subsection{Target Network Update Rate Sensitivity Analysis}

Value-based deep RL methods train a Q-network $Q_\theta$ by minimizing the TD-error against the target Q-network $\bar Q$ (\Cref{eq:td-err}). The target network weights $\bar\theta$ are typically updated via Polyak averaging, $\bar \theta \gets (1-\tau) \bar\theta + \tau\theta$, where $\tau$ is a constant, the target network update rate. Small $\tau$ yield high-bias, low-variance targets; large $\tau$ the opposite. Intuitively, $\tau$ seems to be an important hyperparameter for modulating the dynamics of TD-learning. Empirically, however, we do not find a strong relationship between the model size, the target network update rate $\tau$, and training or validation TD error. We ran a sweep over $\tau \in [$5e-4, 1e-3, 2e-3, 5e-3, 1e-2, 2e-2, 5e-2, 1e-1, 2e-1$]$. Then, we fit a power law $\text{TD error} \sim a \cdot \tau^b$, and record the correlation and slope in \Cref{tab:tau}. In general, we find that training and validation TD error increase with $\tau$ (positive slope and correlation), but there is not a strong relationship between model size and the corresponding correlation or slope.

We additionally found that data efficiency is not very sensitive to our choice of $\tau$, as long as the value of $\tau$ is reasonable. Following the same sensitivity analysis from \Cref{sec:batchsize}, we find that varying $\tau$ by \textit{an order of magnitude} from the bootstrapped optimal value of $\tau$ worsens the data efficiency by only 19\%. For comparison, varying the batch size by an order of magnitude yields a data efficiency variation of up to 52\% (\Cref{tab:batch_size_sensitivity}). Throughout the remainder of our experiments, we use a ``default'' value of $\tau = \text{5e-3}$, which we find is within the ``reasonable'' interval and near the bootstrapped optimal value.

\begin{table}
    \centering
    \caption{Correlation between target update rate $\tau$ }
    \begin{tabular}{llccc}
    \toprule
    Task & Metric & Critic width & Correlation & Slope ($b$) \\
    \midrule
    \texttt{h1-crawl} & Critic loss & 512 & 0.7365 & 0.1203 \\
     && 1024 & 0.9175 & 0.1348 \\
     && 2048 & 0.9302 & 0.1164 \\
     \cmidrule{2-5}
     & Validation critic loss & 512 & 0.4639 & 0.0329 \\
     && 1024 & 0.6413 & 0.0244 \\
     && 2048 & 0.4751 & 0.0168 \\
     \midrule
    \texttt{h1-stand} & Critic loss & 512 & 0.9056 & 0.3916 \\
     && 1024 & 0.9446 & 0.2035 \\
     && 2048 & 0.1857 & 0.0195 \\
     \cmidrule{2-5}
     & Validation critic loss & 512 & 0.7777 & 0.1294 \\
     && 1024 & 0.3109 & 0.0196 \\
     && 2048 & -0.6852 & -0.0421 \\
    \bottomrule
    \end{tabular}
    \label{tab:tau}
\end{table}

\subsection{Full TD-error curves}\label{app:full-td-err}

We provide full training and validation TD-error curves in \Cref{fig:td_err_fullcurve}, as a completion to \Cref{fig:loss_curves_over_time_batch}. The summary statistics are marked with `X' and correspond to the points used in \Cref{fig:train-validation-loss}.

\begin{figure}
    \centering
    \includegraphics[width=\linewidth]{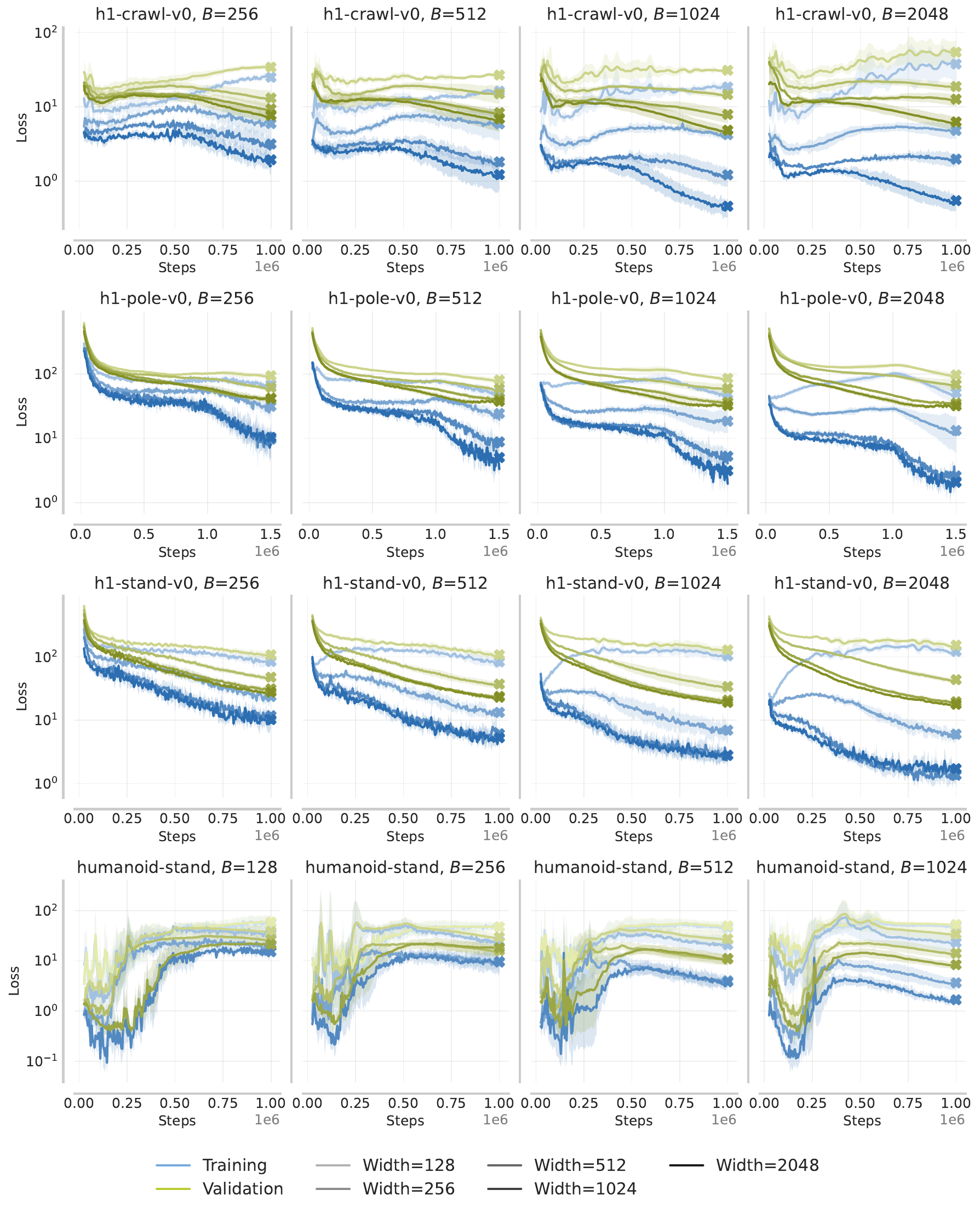}
    \caption{Training and validation TD-error curves over training, grouped by critic width and passive critic width, at UTD = 1. The summary statistics in \Cref{fig:train-validation-loss} are marked with `X' and are averages over the last 10\% of training.}
    \label{fig:td_err_fullcurve}
\end{figure}

\subsection{Passive Critic Learning Curves}

We provide the full validation TD error curves over training in \Cref{fig:side_critic_fullcurve_groupmain}. In these plots the summary statistic is marked with an `X', and we provide \Cref{fig:side_critic_all_summary} as a completion to \Cref{fig:side-critic}.

\begin{figure}
    \centering
    \includegraphics[width=0.8\linewidth]{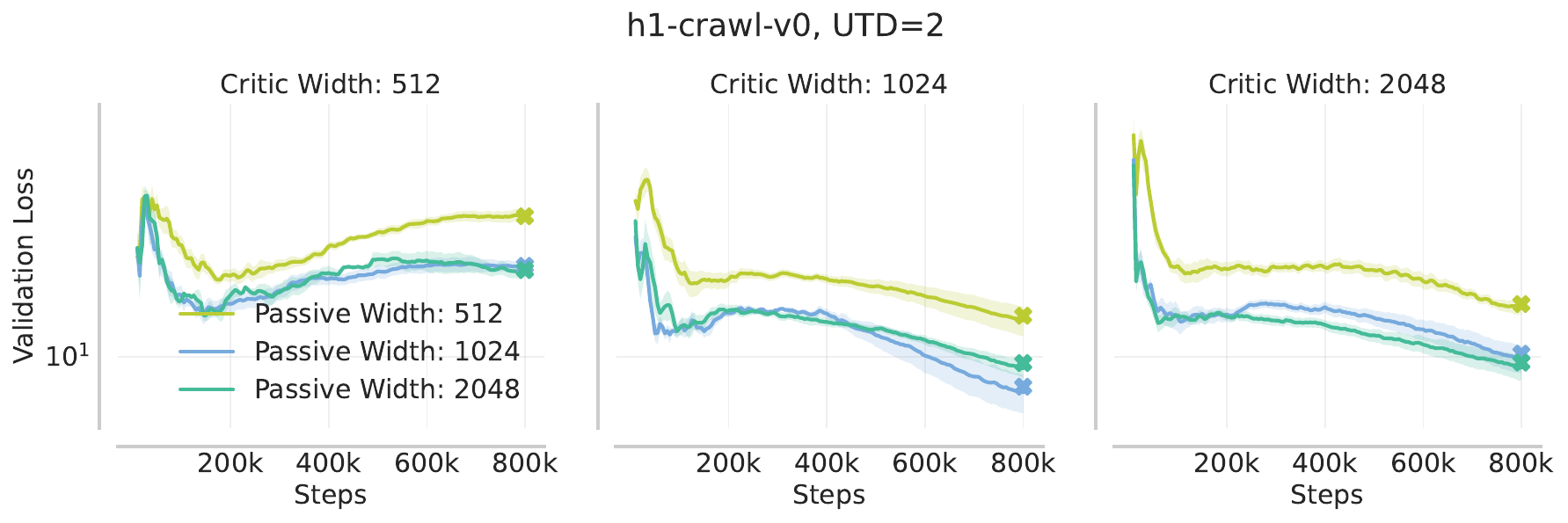}
    \includegraphics[width=0.8\linewidth]{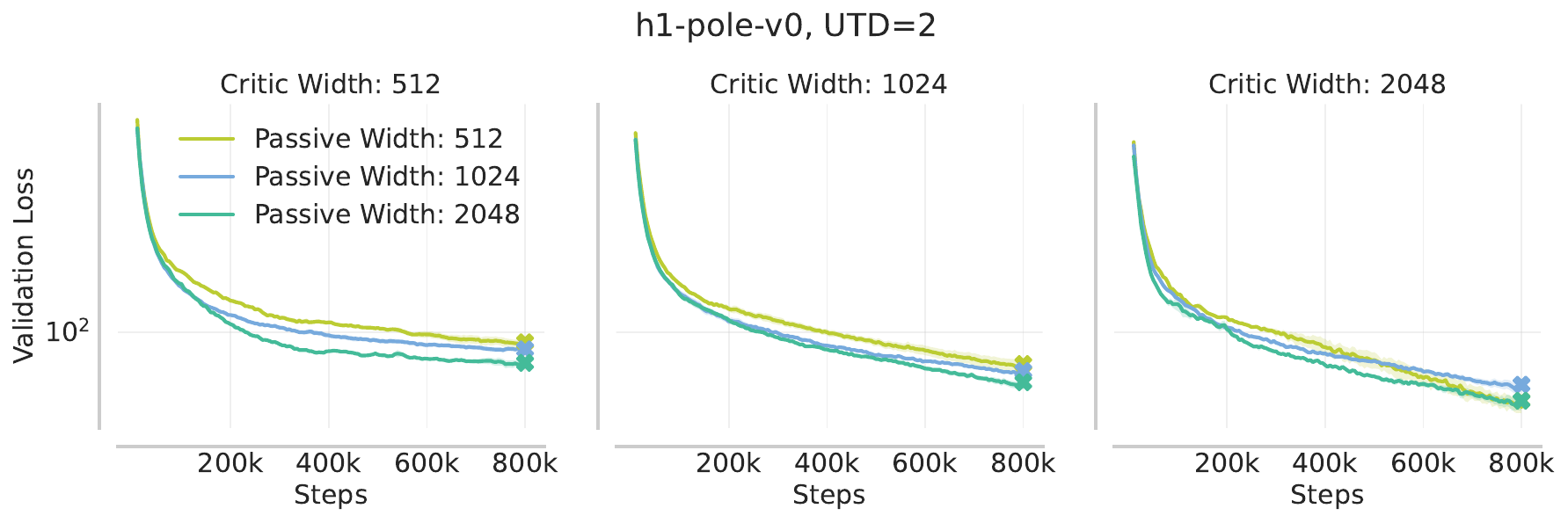}
    \includegraphics[width=0.8\linewidth]{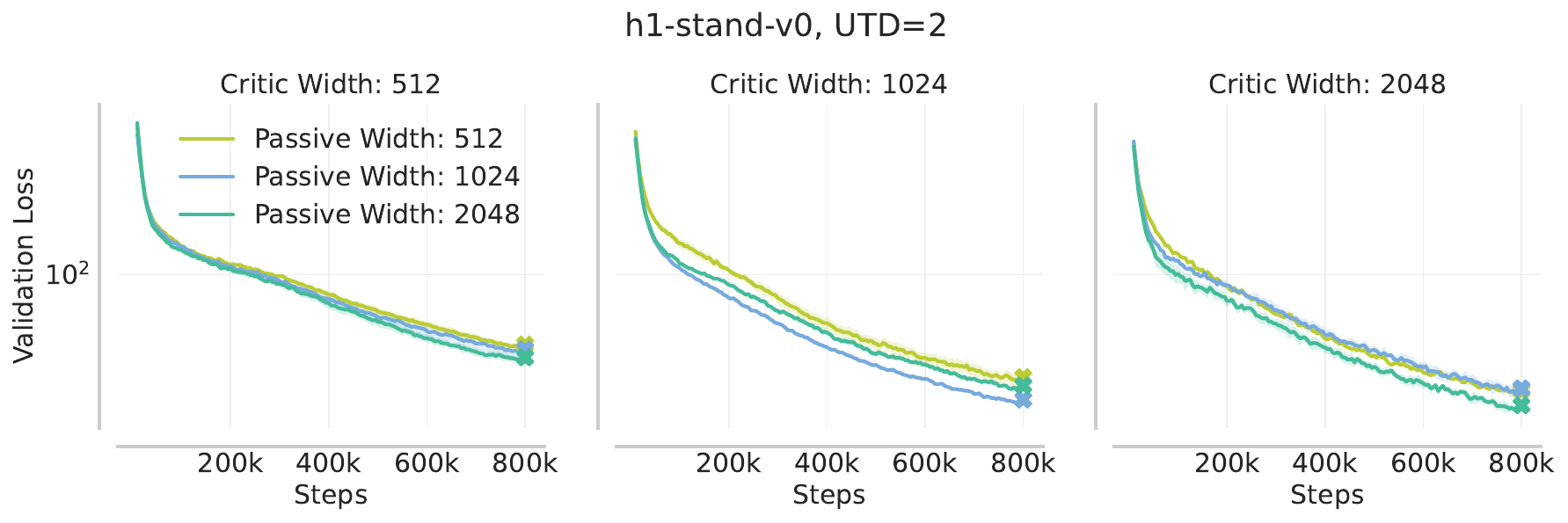}
    \vspace{1em}
    \caption{Validation TD-error curves over training, grouped by critic width. The summary statistics in \Cref{fig:side-critic} are marked with `X' and are averages over the last 10\% of training.}
    \label{fig:side_critic_fullcurve_groupmain}
\end{figure}


\begin{figure}
    \centering
    
    \includegraphics[width=0.65\linewidth]{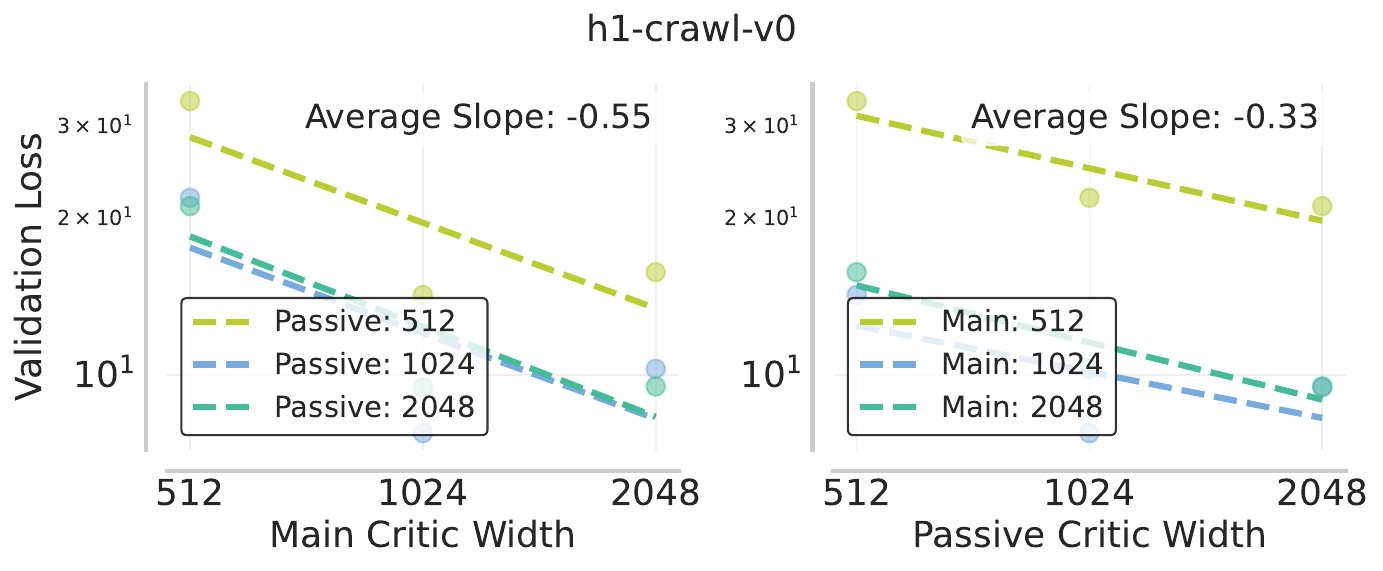}
    \vspace{0.2em}
    \includegraphics[width=0.65\linewidth]{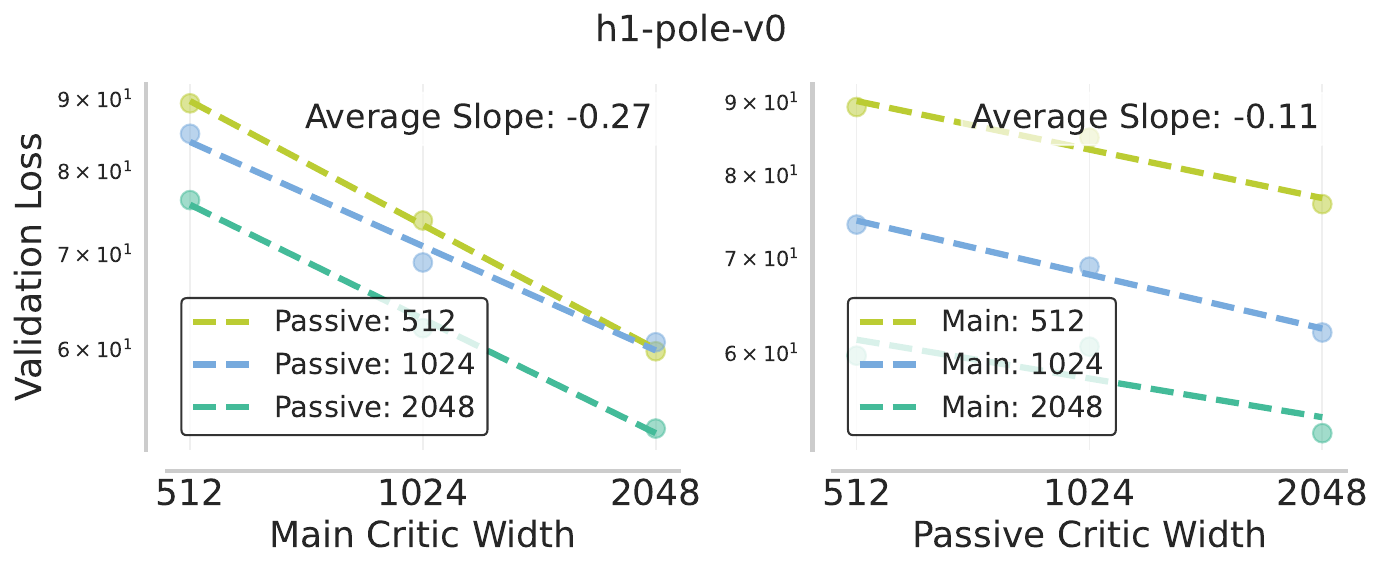}
    \vspace{0.2em}
    \includegraphics[width=0.65\linewidth]{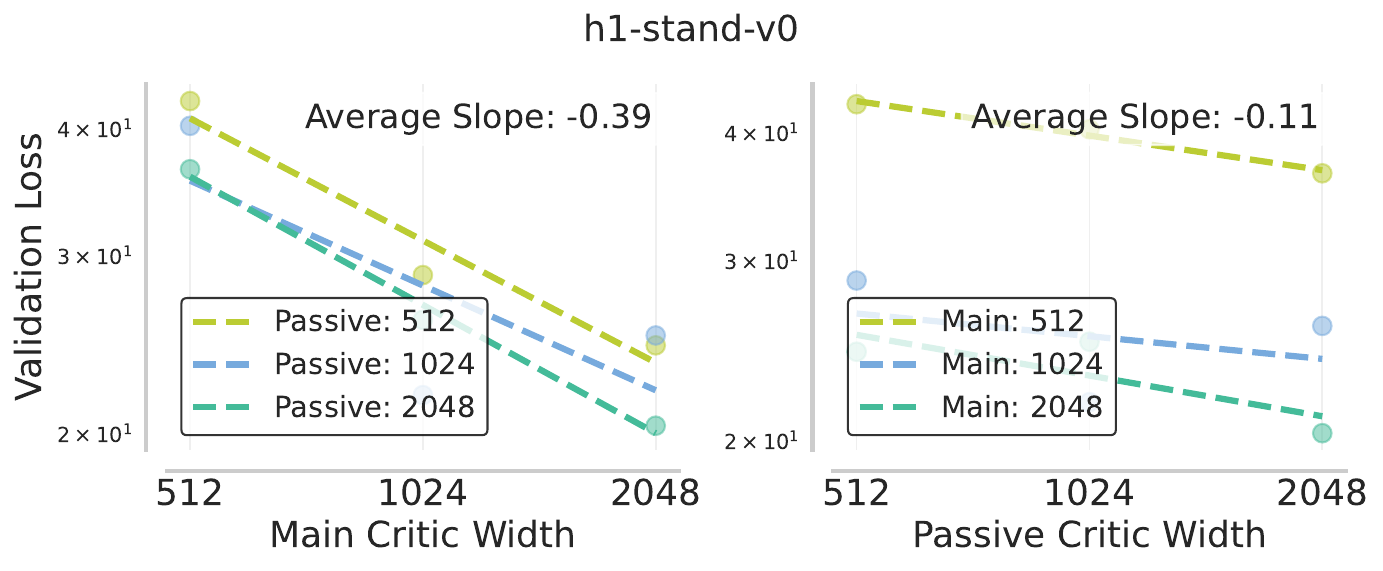}
    \vspace{0.3em}
    \caption{Summary statistics for passive critic experiments, as a completion of \Cref{fig:side-critic}, run at UTD 2. Across multiple environments, increasing the main critic size is much more effective than increasing the passive critic size.}
    \label{fig:side_critic_all_summary}
\end{figure}

\subsection{Data Efficiency Fits $\mathcal D_{J_{\max}}(\sigma, N)$}

For the following four tasks, we fit data efficiency using the empirically best data efficiency for performance threshold $J_{\max}$ across batch sizes for each $(\sigma,N)$ setting. In \Cref{fig:data-fit-multi-thresh-utd,fig:data-fit-multi-thresh-n}, we show the fits for multiple values of $J$.
{
\begin{align}
\begin{split}
\texttt{h1-crawl} &\qquad \mathrm{5.11e4} \left(1 + \parens{\mathrm{2.59e5}/\sigma}^{0.15} + \parens{\mathrm{1.70e7}/N}^{0.75} \right) \\
\texttt{h1-pole} &\qquad \mathrm{9.43e3} \left(1 + \parens{\mathrm{3.22e6}/\sigma}^{0.27} + \parens{\mathrm{2.50e12}/N}^{0.30} \right) \\
\texttt{h1-stand} &\qquad \mathrm{2.14e5} \left(1 + \parens{\mathrm{6.68e\text{-}1}/\sigma}^{2.53} + \parens{\mathrm{1.74e6}/N}^{0.97} \right) \\
\texttt{humanoid-stand} &\qquad \mathrm{1.49e4} \left(1 + \parens{\mathrm{1.78e6}/\sigma}^{0.26} + \parens{\mathrm{3.75e7}/N}^{0.63} \right) \\
\end{split}
\end{align}
}

For the remaining tasks, we use the available batch size.

\textbf{DMC-medium, shared $\alpha_J$, $\beta_J$:}
{
\begin{align}
\begin{split}
    \texttt{acrobot-swingup} &\qquad \mathrm{4.29e5} \left(1 + (\mathrm{6.46e\text{-}1} / \sigma)^{0.98} + (\mathrm{8.42e5} / N)^{1.39} \right) \\
\texttt{cheetah-run} &\qquad \mathrm{4.81e5} \left(1 + (\mathrm{4.30e\text{-}1} / \sigma)^{0.98} + (\mathrm{3.40e5} / N)^{1.39} \right) \\
\texttt{finger-turn} &\qquad \mathrm{2.66e5} \left(1 + (\mathrm{1.08e0} / \sigma)^{0.98} + (\mathrm{4.67e5} / N)^{1.39} \right) \\
\texttt{fish-swim} &\qquad \mathrm{6.28e5} \left(1 + (\mathrm{1.36e\text{-}1} / \sigma)^{0.98} + (\mathrm{2.70e5} / N)^{1.39} \right) \\
\texttt{hopper-hop} &\qquad \mathrm{3.52e5} \left(1 + (\mathrm{8.24e\text{-}1} / \sigma)^{0.98} + (\mathrm{3.12e5} / N)^{1.39} \right) \\
\texttt{quadruped-run} &\qquad \mathrm{1.39e5} \left(1 + (\mathrm{3.54e0} / \sigma)^{0.98} + (\mathrm{1.64e6} / N)^{1.39} \right) \\
\texttt{walker-run} &\qquad \mathrm{1.61e5} \left(1 + (\mathrm{2.85e0} / \sigma)^{0.98} + (\mathrm{6.07e5} / N)^{1.39} \right)
\end{split}
\end{align}
}
\textbf{DMC-medium, averaged environment:}
{
\begin{align}
    \texttt{DMC-medium averaged} &\qquad \mathrm{3.72e5} \left(1 + (\mathrm{1.26e0} / \sigma)^{1.01} + (\mathrm{6.33e5} / N)^{0.89} \right)
\end{align}
}
\textbf{DMC-hard, shared $\alpha_J$, $\beta_J$.}
{
\begin{align}
\begin{split}
\texttt{dog-run} &\qquad \mathrm{4.45e5} \left(1 + (\mathrm{1.23e0} / \sigma)^{0.73} + (\mathrm{9.26e5} / N)^{1.29} \right) \\
\texttt{dog-stand} &\qquad \mathrm{4.40e5} \left(1 + (\mathrm{1.94e\text{-}1} / \sigma)^{0.73} + (\mathrm{3.94e5} / N)^{1.29} \right) \\
\texttt{dog-trot} &\qquad \mathrm{5.38e5} \left(1 + (\mathrm{6.42e\text{-}1} / \sigma)^{0.73} + (\mathrm{7.53e5} / N)^{1.29} \right) \\
\texttt{dog-walk} &\qquad \mathrm{6.03e5} \left(1 + (\mathrm{3.09e\text{-}1} / \sigma)^{0.73} + (\mathrm{3.78e5} / N)^{1.29} \right) \\
\texttt{humanoid-run} &\qquad \mathrm{4.29e5} \left(1 + (\mathrm{2.04e0} / \sigma)^{0.73} + (\mathrm{1.00e6} / N)^{1.29} \right) \\
\texttt{humanoid-walk} &\qquad \mathrm{3.30e5} \left(1 + (\mathrm{3.81e0} / \sigma)^{0.73} + (\mathrm{1.13e6} / N)^{1.29} \right) 
\end{split}
\end{align}
}
\textbf{DMC-hard, averaged environment:}
{
\begin{align}
    \texttt{DMC-hard averaged} &\qquad \mathrm{5.39e5} \left(1 + (\mathrm{8.92e\text{-}1} / \sigma)^{0.77} + (\mathrm{6.68e5} / N)^{1.27} \right)
\end{align}
}

\begin{figure}
    \centering
    \small
    \includegraphics[width=\linewidth]{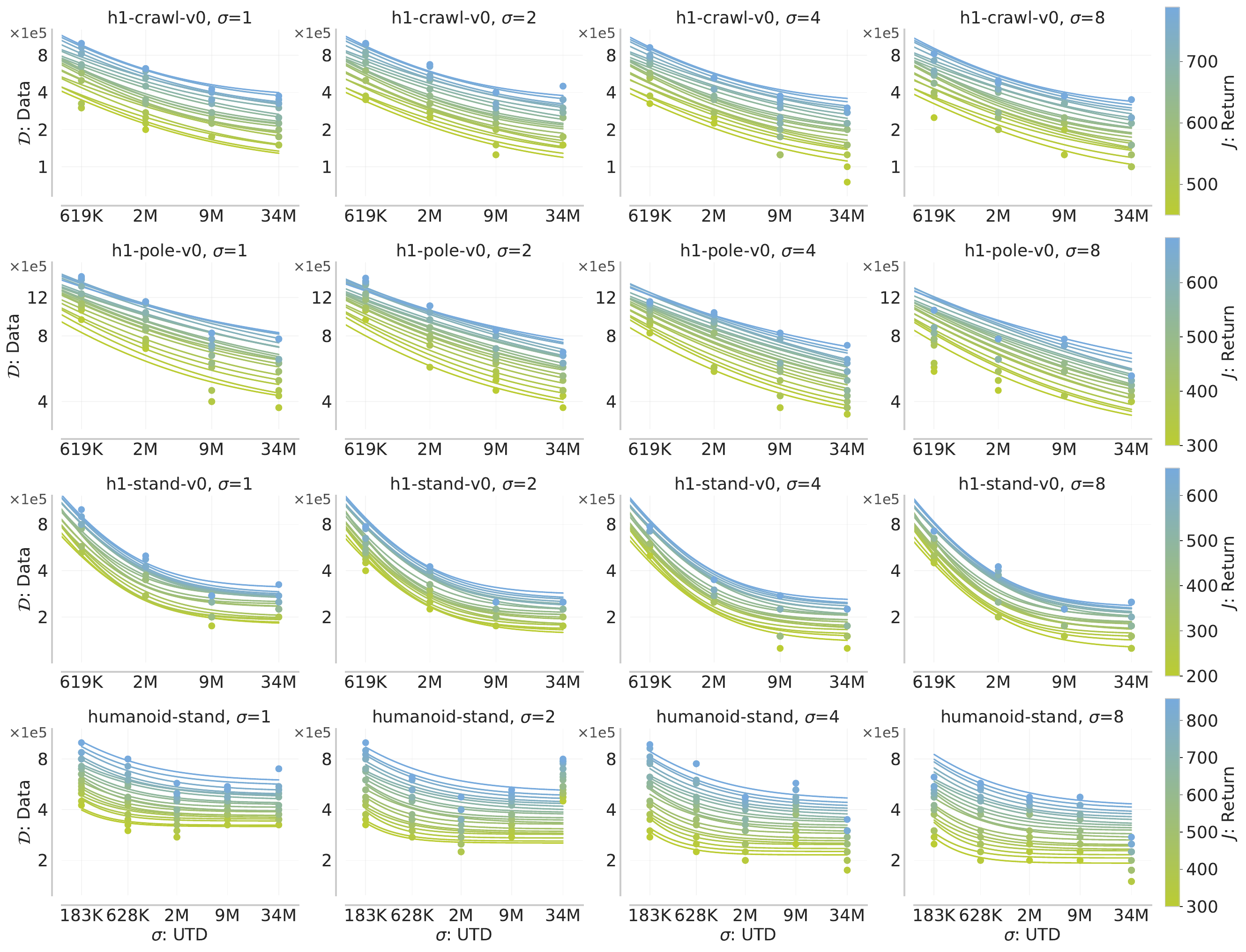}
    \caption{Data efficiency fits $\dd_J(\sigma,N)$ for multiple performance thresholds $J$, grouped by UTD ratio $\sigma$. Each $\dd_J$ is fit independently.}
    \label{fig:data-fit-multi-thresh-utd}
\end{figure}

\begin{figure}
    \centering
    \small
    \includegraphics[width=\linewidth]{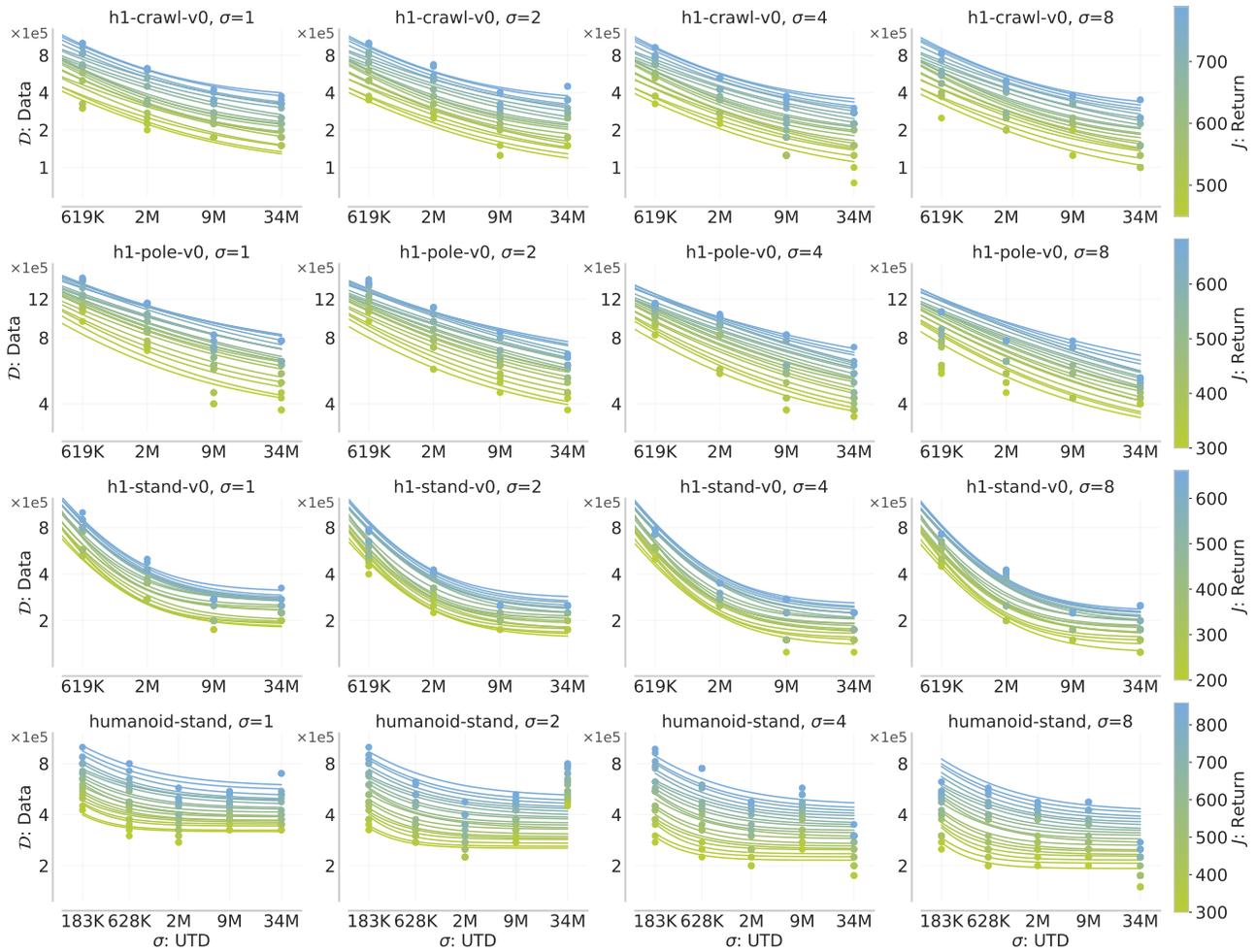}
    \caption{Same as \Cref{fig:data-fit-multi-thresh-utd}, but instead grouped by model size $N$.}
    \label{fig:data-fit-multi-thresh-n}
\end{figure}

\subsection{Optimal Budget Partition}

We provide plots analogous to \Cref{fig:budget-d-c,fig:budget-sigma-n} in \Cref{fig:complete-budget-d,fig:complete-budget-c,fig:complete-budget-sigma,fig:complete-budget-n} for DMC-medium and DMC-hard tasks. These data efficiency fits use the shared exponents $\alpha_J$, $\beta_J$ method described in \Cref{app:details}.

As shown in \Cref{fig:budget-sigma-n,fig:complete-budget-sigma,fig:complete-budget-n}, however, the optimal UTD and model size for a given budget $\ff_0$ are unpredictable. We verify that these hyperparameters are fundamentally unpredictable in this setting, running at least 50 seeds per UTD and model size at the fitted batch size, for \texttt{h1-crawl}, in \Cref{fig:50seeds}. Despite this, the fit for $\cc_{\ff_J}$ achieves considerably lower uncertainty than in \Cref{fig:budget-d-c}, indicating that there is a large range of ``reasonable'' hyperparameters corresponding to similar data and compute values.

\begin{figure}[h!]
    \centering
    \small
    DMC-medium
    \includegraphics[width=\linewidth]{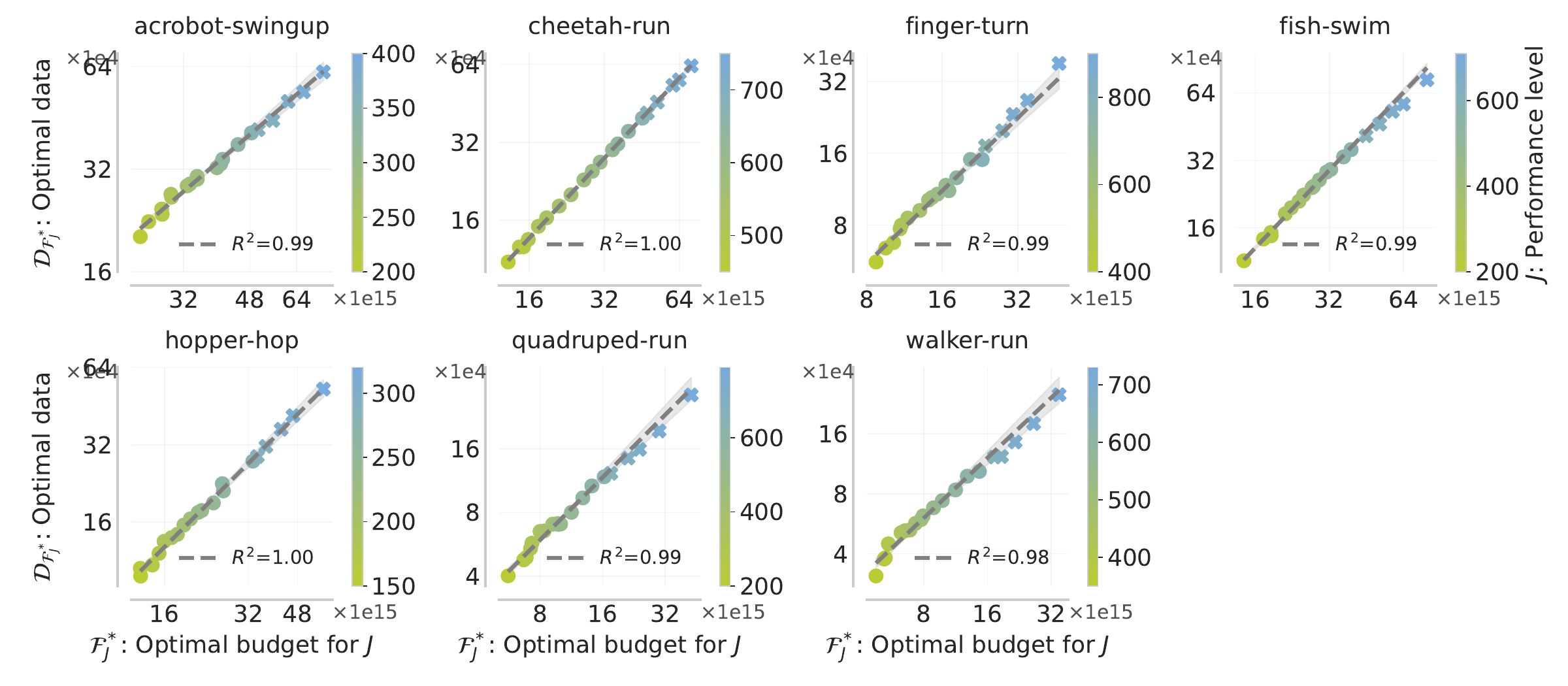}
    DMC-hard
    \includegraphics[width=\linewidth]{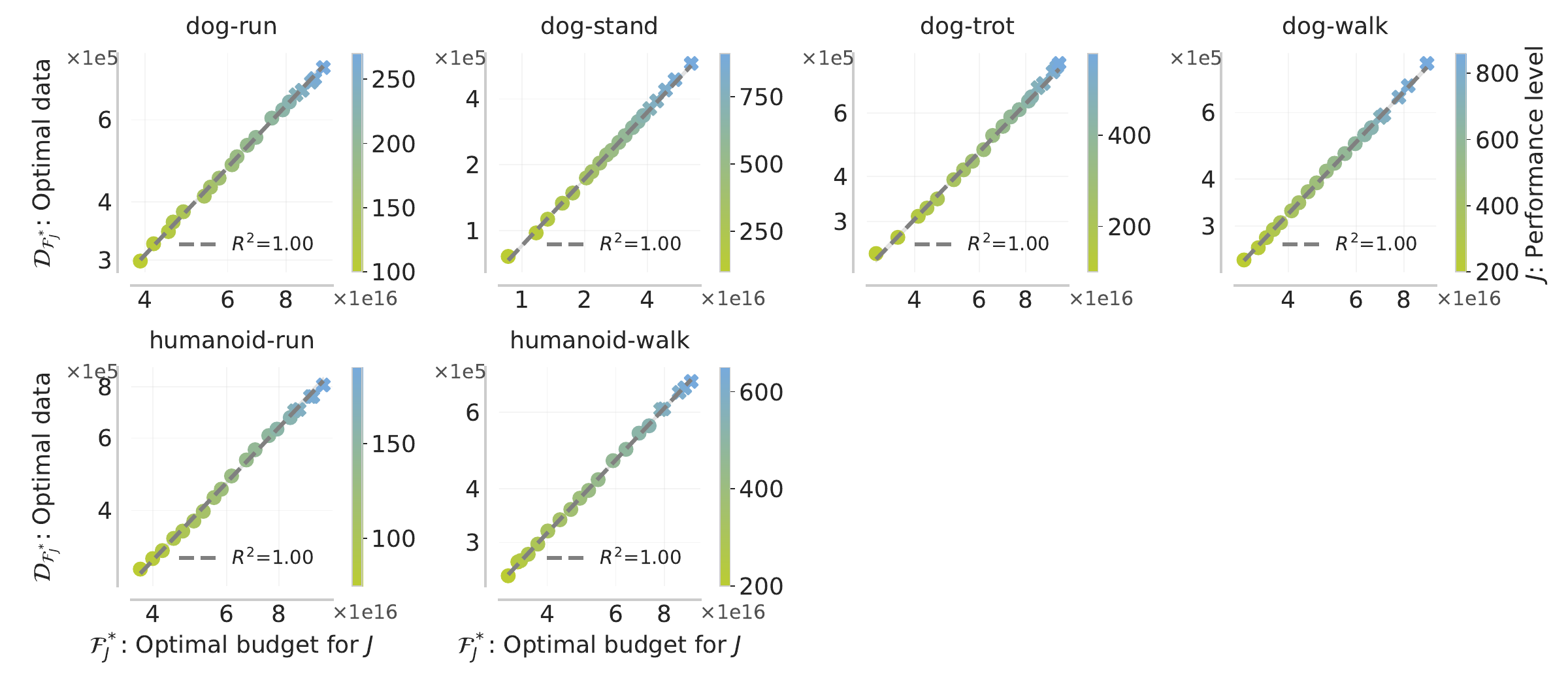}
    \caption{Optimal data $\dd(\ff_0)$ for a given budget $\ff_0$, as a completion of \Cref{fig:budget-d-c}.}
    \label{fig:complete-budget-d}
\end{figure}

\begin{figure}
    \centering
    \small
    DMC-medium
    \includegraphics[width=\linewidth]{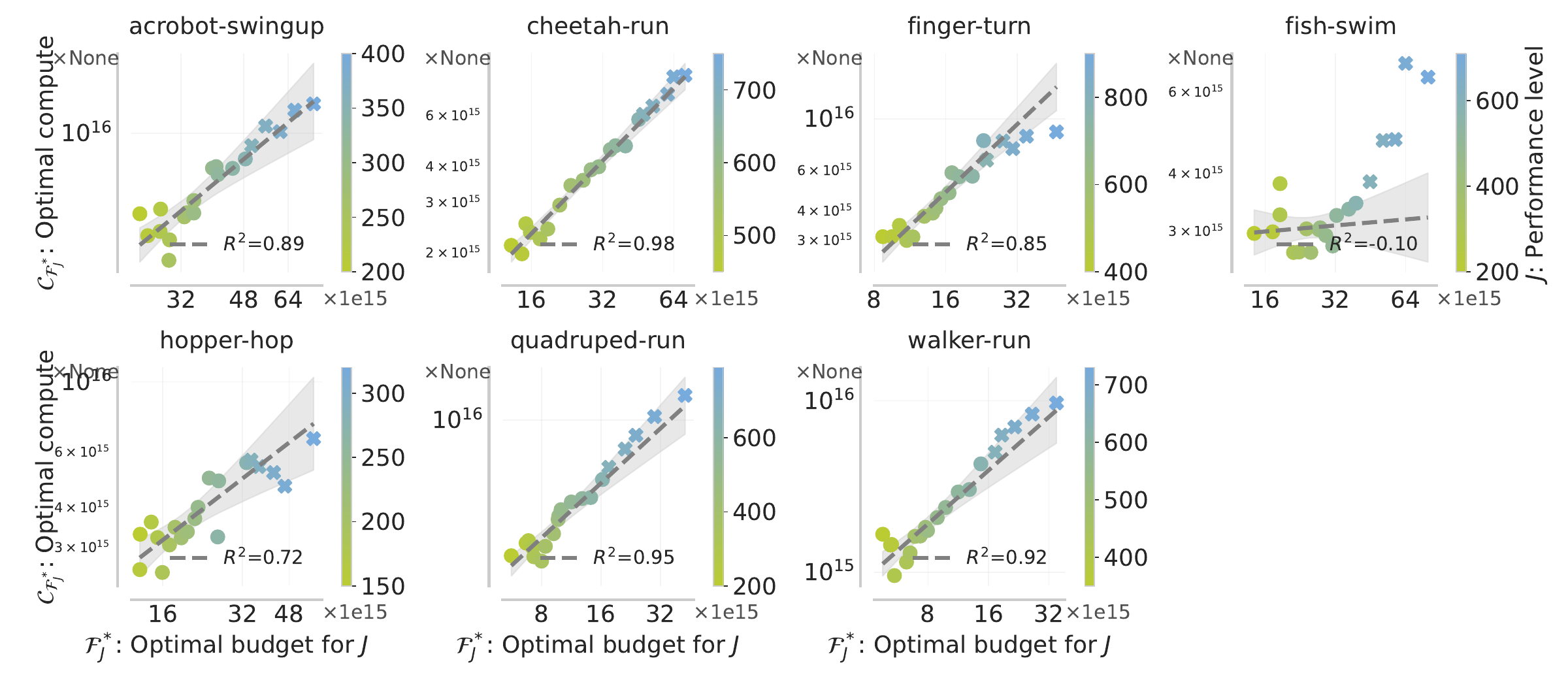}
    DMC-hard
    \includegraphics[width=\linewidth]{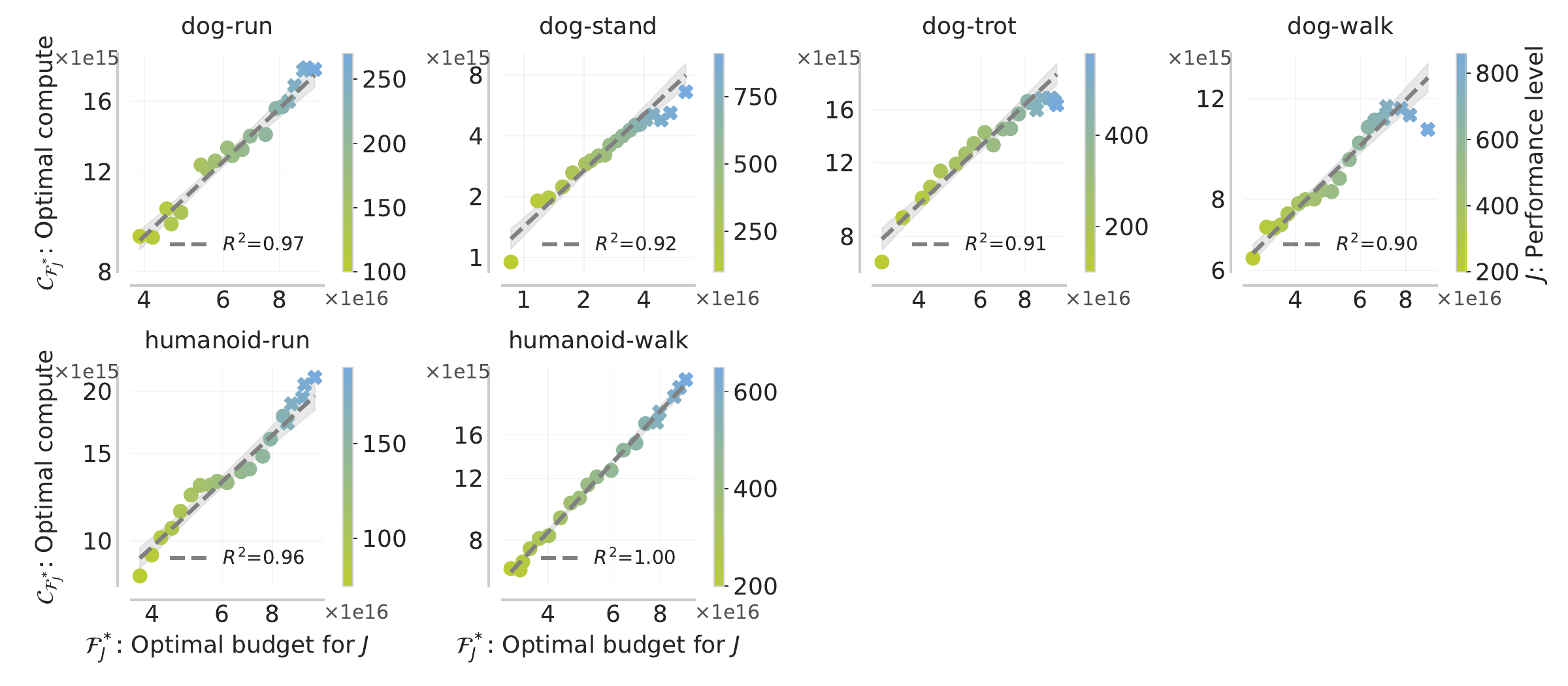}
    \caption{Optimal compute $\cc(\ff_0)$ for a given budget $\ff_0$, as a completion of \Cref{fig:budget-d-c}.}
    \label{fig:complete-budget-c}
\end{figure}

\begin{figure}
    \centering
    \small
    DMC-medium
    \includegraphics[width=\linewidth]{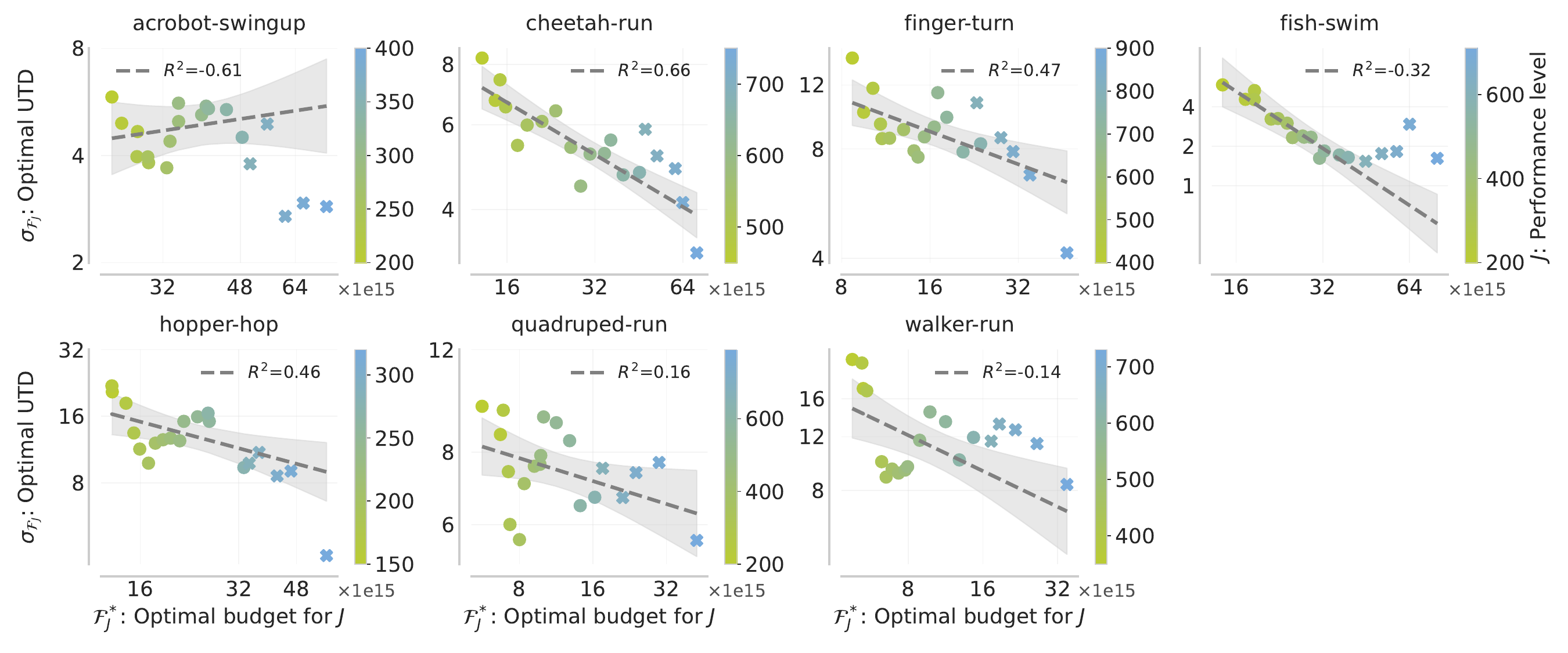}
    DMC-hard
    \includegraphics[width=\linewidth]{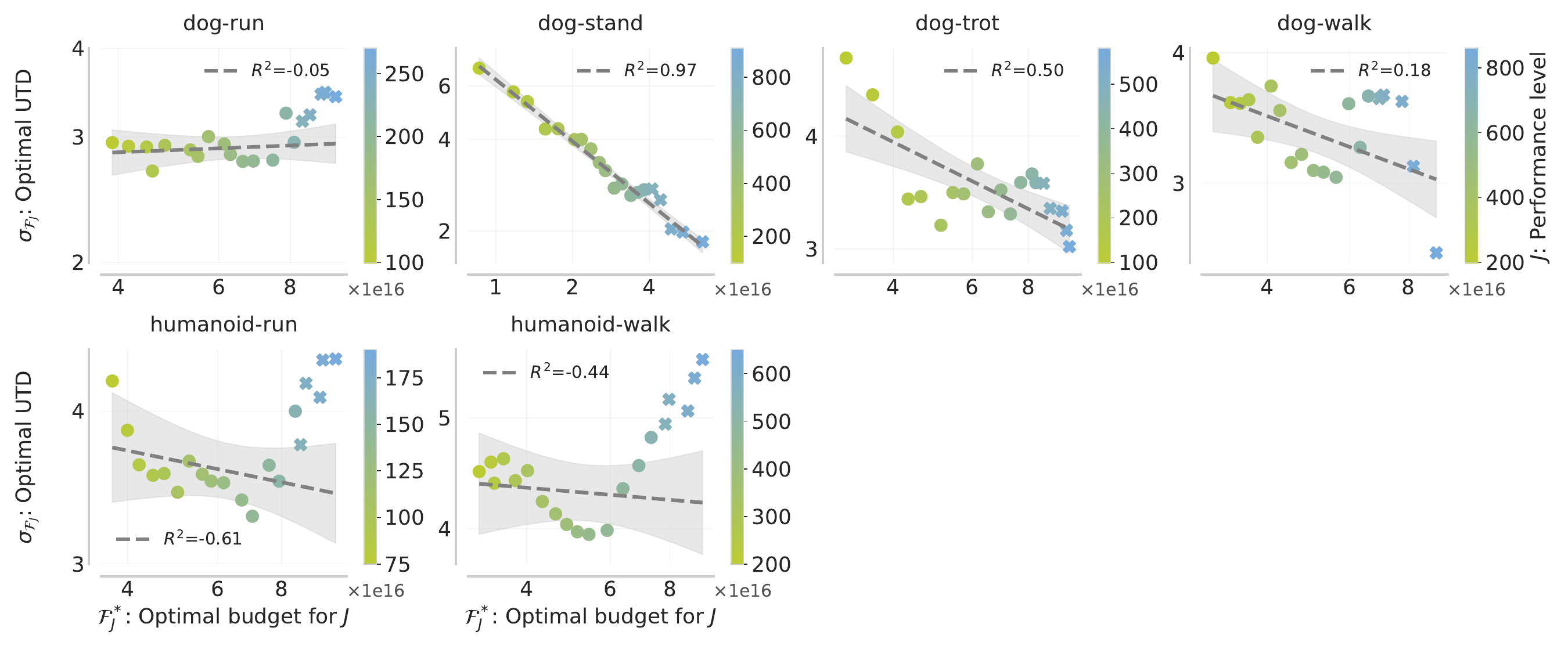}
    \caption{Optimal UTD ratio $\sigma^*_{\ff}(\ff_0)$ for a given budget $\ff_0$, as a completion of \Cref{fig:budget-sigma-n}.}
    \label{fig:complete-budget-sigma}
\end{figure}

\begin{figure}
    \centering
    \small
    DMC-medium
    \includegraphics[width=\linewidth]{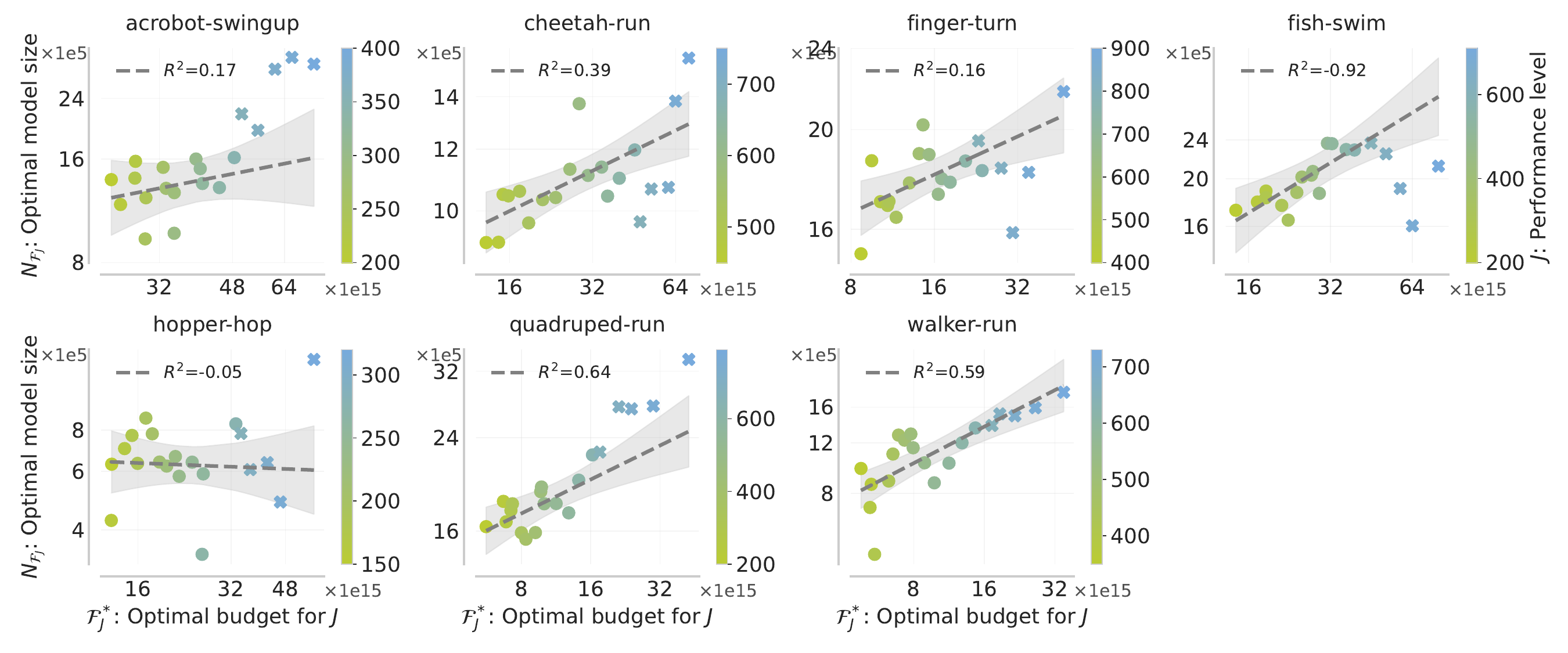}
    DMC-hard
    \includegraphics[width=\linewidth]{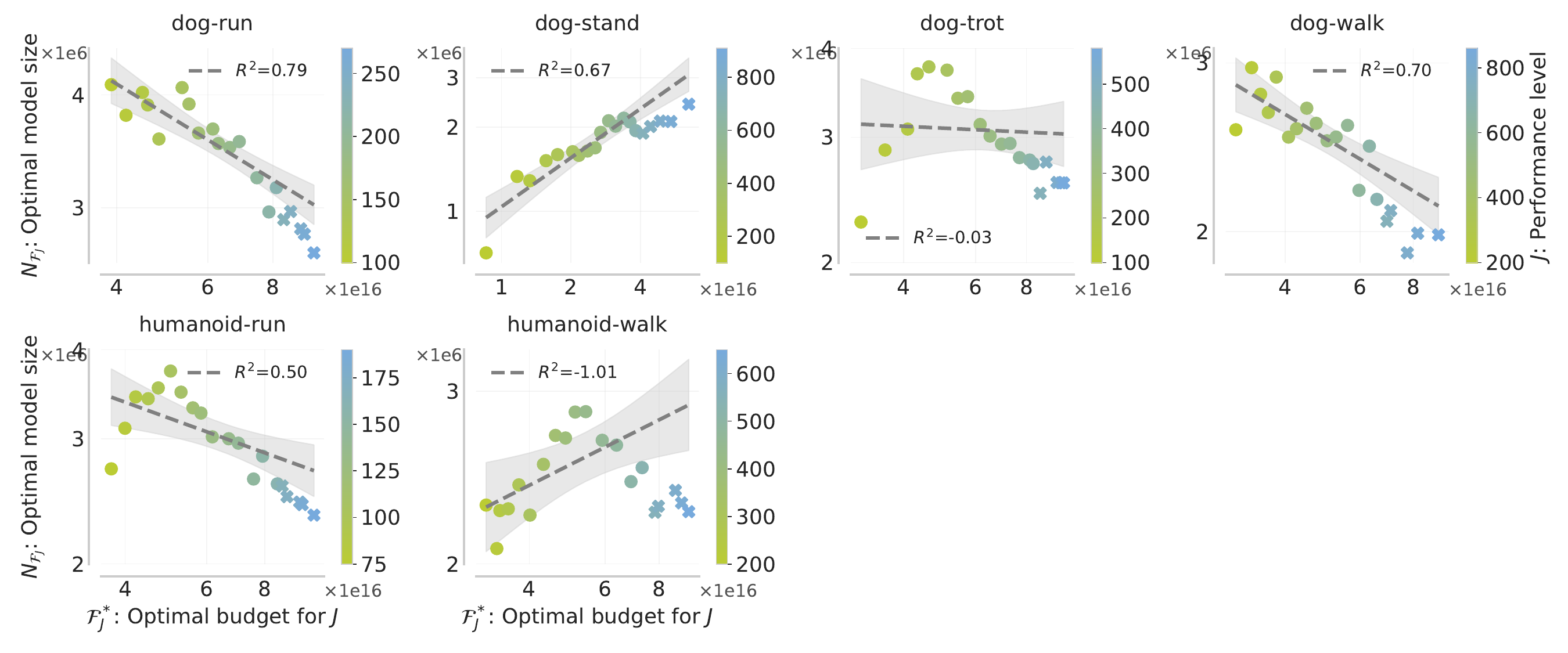}
    \caption{Optimal model size $N^*_{\ff}(\ff_0)$ for a given budget $\ff_0$, as a completion of \Cref{fig:budget-sigma-n}.}
    \label{fig:complete-budget-n}
\end{figure}

\begin{figure}[h]
    \centering
    \includegraphics[width=\linewidth]{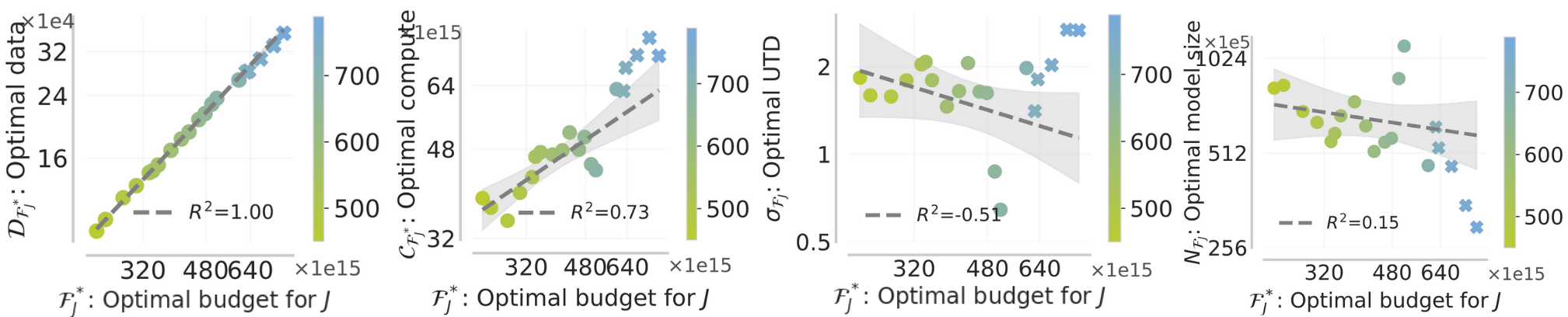} \vspace{-0.5em}
    \caption{Optimal data, compute, UTD, and model size for a given budget $\ff_0$, run for 50+ seeds on \texttt{h1-crawl}. }
    \label{fig:50seeds}
\end{figure}





\end{document}